\def\A{{\bf A}}
\def\Amp{{\A^{\dag}}}
\def\a{{\bf a}}
\def\B{{\bf B}}
\def\bb{{\bf b}}
\def\C{{\bf C}}
\def\Cmp{{\C^{\dag}}}
\def\D{{\bf D}}
\def\d{{\bf d}}
\def\E{{\bf E}}
\def\F{{\bf F}}
\def\I{{\bf I}}
\def\R{{\bf R}}
\def\s{{\bf s}}
\def\U{{\bf U}}
\def\u{{\bf u}}
\def\V{{\bf V}}
\def\v{{\bf v}}
\def\W{{\bf W}}
\def\w{{\bf w}}
\def\X{{\bf X}}
\def\x{{\bf x}}
\def\Y{{\bf Y}}
\def\Z{{\bf Z}}
\def\0{{\bf 0}}
\def\1{{\bf 1}}
\def\UA{{{\U}_{\A}}}
\def\UAk{{{\U}_{\A, k}}}
\def\UAkp{{{\U}_{\A, k \perp}}}
\def\VAk{{{\V}_{\A, k}}}
\def\SiA{{{\Si}_{\A}}}
\def\SiAk{{{\Si}_{\A, k}}}
\def\SiAkp{{{\Si}_{\A, k \perp}}}
\def\OM{{\mathcal O}}
\def\UM{{\mathcal U}}
\def\VM{{\mathcal V}}
\def\RB{{\mathbb R}}
\def\RBmn{{\RB^{m\times n}}}
\def\EB{{\mathbb E}}
\def\Si{\mbox{\boldmath$\Sigma$\unboldmath}}
\def\Lam{\mbox{\boldmath$\Lambda$\unboldmath}}
\def\argmin{\mathop{\rm argmin}}
\def\span{\mathrm{span}}
\def\tr{\mathrm{tr}}
\def\rk{\mathrm{rank}}
\def\Diag{\mathsf{Diag}}
\def\CUR{{CUR} }
\begin{document}
\title{A Scalable CUR Matrix Decomposition Algorithm: Lower Time Complexity and Tighter Bound\thanks{An extended abstract of this paper has been accepted by NIPS 2012.}}

\author{\name Shusen Wang and Zhihua Zhang \\
\addr College of Computer Science \& Technology \\
Zhejiang University \\
Hangzhou, China 310027 \\
\texttt{\{wss,zhzhang\}@zju.edu.cn} \\
\name Jian Li \\
\addr Google \\
Beijing, China 100084 \\
\texttt{lijian@google.com}
}

\maketitle

\begin{abstract}
The \CUR matrix decomposition is an important extension of Nystr\"{o}m approximation to a general matrix.
It approximates any data matrix in terms of a small number of its columns and rows.
In this paper we propose a novel randomized \CUR algorithm with an expected relative-error bound.
The proposed algorithm has the advantages over the existing relative-error \CUR algorithm that it possesses
tighter theoretical bound and lower time complexity, and that it can avoid maintaining the whole data matrix in main memory.
Finally, experiments on several real-world datasets  demonstrate significant improvement over the existing relative-error algorithms.
\end{abstract}

\begin{keywords}
Large-scale matrix computations, low-rank matrix approximation, CUR matrix decomposition,  randomized algorithms
\end{keywords}

\section{Introduction}

Large-scale  matrices emerging from stocks, genomes, web documents, web images and videos
everyday bring new challenges in modern data analysis.
Most efforts have been focused on manipulating, understanding and interpreting large-scale data matrices.
In many cases, matrix factorization methods are employed to construct compressed and
informative representations to facilitate computation and interpretation.
A principled approach is  the truncated singular value decomposition (SVD)
which finds the best low-rank approximation of a  data matrix.
Applications of SVD such as eigenface~\citep{Sirovich87eigenface,turk1991eigenface} and
latent semantic analysis~\citep{deerwester1990lsa} have been illustrated to be very successful.

However, the  basis vectors resulting from SVD have little concrete meaning, which
makes it very difficult for us to understand and interpret the data in question.
An example in \citep{drineas2008cur,mahoney2009matrix} has well shown this viewpoint; that is,
the vector $[(1/2)\textrm{age} - (1/\sqrt{2})\textrm{height} + (1/2)\textrm{income}]$,
 the sum of the significant uncorrelated features from a dataset of people's features,
is not particularly informative.
\citet{kuruvilla2002vector} have also claimed:
``it would be interesting to try to find basis vectors for all experiment vectors,
using actual experiment vectors and not artificial bases that offer little insight.''
Therefore, it is of great interest to represent a data matrix in terms of
a small number of actual columns and/or actual rows of the matrix.

The {\it  \CUR matrix decomposition}  provides such techniques,
and it has been shown to be very useful in high dimensional data analysis~\citep{mahoney2009matrix}.
Given a matrix $\A$, the \CUR technique  selects a subset of columns
of $\A$ to construct a matrix $\C$ and a subset of rows of $\A$ to construct a matrix $\R$,
and computes a matrix $\U$ such that $\tilde{\A} = \C \U \R$ best approximates $\A$.
The typical \CUR algorithms~\citep{drineas2003pass,drineas04fastmonte,drineas2008cur}
work in a two-stage manner. Stage~1 is a standard column selection procedure,
and Stage~2 does  row selection from $\A$ and $\C$ simultaneously. Thus,
implementing Stage~2 is  much more difficult than doing Stage~1.

The CUR matrix decomposition problem is widely studied in the literature \citep{goreinov1997pseudoskeleton,goreinov1997maximalvolume,tyrtyshnikov2000incompletecross,drineas2003pass,drineas2005nystrom,drineas04fastmonte,drineas2008cur,mahoney2009matrix,mackey2011divide,hopcroft2012computer}.
Among the existing work, several recent work are of particular interest.
\citet{drineas04fastmonte} proposed a \CUR algorithm with additive-error bound.
Later on, \citet{drineas2008cur} devised randomized \CUR algorithms
with relative error by sampling sufficiently many columns and rows. Particularly,
the algorithm has $(1+\epsilon)$ relative-error ratio with high probability (w.h.p.).
Recently, \citet{mackey2011divide} established a divide-and-conquer method which solves the \CUR problem in parallel.

Unfortunately, all the existing \CUR algorithms require a large number of columns and rows to be chosen.
For example, for an $m\times n$ matrix $\A$ and a target rank $k \leq \min\{m,n\}$,
the state-of-the-art \CUR algorithm --- {\it the subspace sampling algorithm} in \citet{drineas2008cur} ---
requires exactly $\OM(k^4 \epsilon^{-6})$ rows or $\OM(k \epsilon^{-4} \log^2 k)$
rows in expectation to achieve $(1+\epsilon)$ relative-error ratio w.h.p.
Moreover, the computational cost of this algorithm is at least the cost of the truncated SVD of $\A$,
that is, $\OM(\min\{mn^2, nm^2\})$.\footnote{Although some partial SVD algorithms, such as Krylov subspace methods, require only $\OM(mnk)$ time, they are all numerical unstable. See \cite{halko2011ramdom} for more discussions.}
The algorithms are therefore impractical for large-scale matrices.

In this paper we develop a \CUR algorithm which beats the state-of-the-art algorithm in both theory and experiments.
In particular, we show in Theorem~\ref{cor:fast_cur} a novel randomized \CUR algorithm with lower time complexity
and tighter theoretical bound in comparison with the state-of-the-art \CUR algorithm in \citet{drineas2008cur}.

The rest of this paper is organized as follows.
Section~\ref{sec:notation} lists some notations that will be used in this paper and Section~\ref{sec:related} reviews two classes of \CUR algorithms.
Section~\ref{sec:theoretical} mainly introduces a column selection algorithm to which our work is closely related.
Section~\ref{sec:fast_cur} describes and analyzes our novel \CUR algorithm.
Section~\ref{sec:experiments} empirically compares our proposed algorithm with the state-of-the-art algorithm.
All proofs are deferred to Appendix~\ref{sec:proofs}.

\section{Notations}
\label{sec:notation}

For a matrix $\A=[a_{ij}] \in \RB^{m\times n}$,
let $\a^{(i)}$ be its $i$-th row and $\a_j$ be its $j$-th column.
Let $\|\A\|_1 = \sum_{i,j} |a_{i j}|$ be the $\ell_1$-norm,
$\|\A\|_F= (\sum_{i, j} a_{ij}^2)^{1/2}$ be the Frobenius norm,
and $\|\A\|_{2} = \max_{\|\x\|_2=1} \|\A \x \|_2$ be the spectral norm.
Moreover, let $\I_m$ denote the $m\times m$ identity matrix,
and $\0$ denotes the  zero matrix whose size dependents on the context.
Let $\rho = \rk(\A)$ and $k \leq \rho$, the SVD of $\A$ can be written as
\[
\A = \sum_{i=0}^{\rho} \sigma_{\A,i} \u_{\A,i} \v^T_{\A,i} = \UA \SiA \V_{\A}^T = \UAk \SiAk \V_{\A, k}^T + \UAkp \SiAkp \V_{\A, k\perp}^T,
\]
where $\UAk$, $\SiAk$, and $\VAk$ correspond to the top $k$ singular values.
We denote $\A_k = \UAk \SiAk \V_{\A, k}^T$.
Furthermore, let $\Amp = \U_{\A,\rho} \Si_{\A,\rho}^{-1} \V_{\A,\rho}^T$ be the Moore-Penrose inverse of $\A$ \citep{adi2003inverse}.

Given matrices $\A \in \RBmn$, $\X \in \RB^{m\times p}$, and $\Y \in \RB^{q\times n}$,
$\X \X^\dag \A = \U_\X \U_{\X}^T \A \in \RBmn$ is the projection of $\A$ onto the column space of $\X$,
and  $\A \Y^\dag \Y = \A \V_\Y \V_{\Y}^T \in \RBmn$ is the projection of $\A$ onto the row space of $\Y$.
Finally, given an integer $k \leq p$,
we define the matrix $\Pi_{\X,k} (\A) \in \RBmn$ as the best approximation to $\A$ within the column space of $\X$ that has rank at most $k$.
We have  $\Pi_{\X,k} (\A) = \X \hat{\Z}$ where $\hat{\Z} = \argmin_{\rk(\Z) \leq k} \|\A - \X \Z\|_F$.
We also have that $\|\A - \X \X^\dag \A \|_F \leq \|\A - \Pi_{\X,k} (\A)\|_F$.

\section{Previous Work in \CUR Matrix Decomposition}\label{sec:related}

This section discusses two recent developments of the \CUR algorithms.
Section~\ref{sec:add_error_cur} introduces an additive-error \CUR algorithm in \cite{drineas04fastmonte}, and
Section~\ref{sec:stat_leverage} describes two relative-error \CUR algorithms in \cite{drineas2008cur}.

\subsection{The Linear-Time \CUR Algorithm} \label{sec:add_error_cur}

The linear-time \CUR algorithm is proposed by \cite{drineas04fastmonte}.
It is a highly efficient algorithm.
Given a matrix $\A$ and a constant $k < \rk(\A)$,
by sampling $c = 64k\epsilon^{-4}$ columns and $r = 4k\epsilon^{-2}$ rows of $\A$
and computing an intersection matrix $\U$,
the resulting \CUR decomposition satisfies the following additive-error bound
\begin{equation}
\EB \|\A - \C \U \R\|_F \leq \|\A - \A_k\|_F + \epsilon \|\A\|_F \textrm{.} \nonumber
\end{equation}
Furthermore, the decomposition also satisfies $\rk(\C \U \R) \leq k$.
Here we give its main results \cite[Theorem~4 of ][]{drineas04fastmonte} in the following proposition.

\begin{proposition}[The Linear-Time \CUR Algorithm]
Given a matrix $\A \in \RB^{m\times n}$, we let $p_i = \|\a^{(i)}\|_2^2 / \|\A\|_F^2$ and $q_j = \|\a_j\|_2^2 / \|\A\|_F^2$.
The linear-time \CUR algorithm randomly samples $c$ columns of $\A$ with probabilities $\{q_j\}_{j=1}^n$ and $r$ rows of $\A$
with probabilities $\{p_i\}_{i=1}^m$. Then
\begin{equation}
\EB \|\A - \C \U \R\|_F \leq \|\A - \A_k\|_F + \Big( (4k/c)^{1/4} + (k/r)^{1/2} \Big) \|\A\|_F \textrm{.} \nonumber
\end{equation}
The algorithm costs $\OM(mc^2 + nr + c^2r + c^3)$ time, which is linear in $(m+n)$ by assuming $c$ and $r$ are constants.
\end{proposition}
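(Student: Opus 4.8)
The plan is to exploit the two-stage structure of the algorithm, bounding the column-sampling error and the row-sampling error separately and then combining them by the triangle inequality. The single tool that drives both stages is the approximate matrix multiplication estimate under norm-squared sampling: if $c$ columns of $\A$ are drawn with probabilities $q_j = \|\a_j\|_2^2/\|\A\|_F^2$ and rescaled by $(c q_j)^{-1/2}$ to form a sketch $\C$, then a direct variance computation gives
\[
\EB\|\A\A^T - \C\C^T\|_F^2 \;\le\; \frac{1}{c}\,\|\A\|_F^4 ,
\]
where the optimality of the probabilities $q_j$ is exactly what collapses $\sum_j \|\a_j\|_2^4/q_j$ into $\|\A\|_F^4$. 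The rescaling is cosmetic for the final decomposition, as it can be folded into $\U$, but it is what makes the sketch an unbiased estimator of $\A\A^T$; the analysis below treats $\C$ as this rescaled sketch throughout.

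For the column stage I would let $\H_k \in \RB^{m\times k}$ hold the top-$k$ left singular vectors of $\C$ and invoke the structural lemma
\[
\|\A - \H_k \H_k^T \A\|_F^2 \;\le\; \|\A - \A_k\|_F^2 + 2\sqrt{k}\,\|\A\A^T - \C\C^T\|_F ,
\]
which transfers the quality of $\C$ as an approximate Gram factor into the quality of its leading left subspace as a projector for $\A$. Taking expectations and bounding $\EB\|\A\A^T - \C\C^T\|_F \le (\EB\|\A\A^T - \C\C^T\|_F^2)^{1/2} \le c^{-1/2}\|\A\|_F^2$ by Jensen's inequality gives $\EB\|\A - \H_k\H_k^T\A\|_F^2 \le \|\A - \A_k\|_F^2 + 2\sqrt{k}\,c^{-1/2}\|\A\|_F^2$. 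A second application of Jensen (to pass from the squared norm to the norm) together with the subadditivity of the square root then yields $\EB\|\A - \H_k\H_k^T\A\|_F \le \|\A - \A_k\|_F + (4k/c)^{1/4}\|\A\|_F$, the constant arising from $4^{1/4}=\sqrt{2}$. This produces the first error term.

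For the row stage the goal is to realize the ideal projection $\H_k\H_k^T\A = \H_k(\H_k^T\A)$ using only sampled rows of $\A$, thereby defining $\U$ and $\R$ without ever forming $\H_k^T\A$ in full; since $\H_k$ lies in the column space of $\C$, the factor $\H_k$ can be absorbed into $\U$ so that $\C\U\R$ equals $\H_k$ applied to a row-sampled estimate of $\H_k^T\A$. I would approximate the product $\H_k^T\A$ by sampling $r$ rows with probabilities $p_i = \|\a^{(i)}\|_2^2/\|\A\|_F^2$; the matrix multiplication estimate, together with $\|\H_k\|_F^2 = k$, gives expected squared error at most $(k/r)\|\A\|_F^2$ for every fixed column sample. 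Because $\H_k$ has orthonormal columns, left-multiplication by $\H_k$ is norm-preserving, so this error passes unchanged to the reconstruction and contributes $(k/r)^{1/2}\|\A\|_F$ after Jensen. Conditioning on the column sample, taking expectation over the row sample, and combining with the column bound by the triangle inequality then assembles the claimed additive bound; the rank guarantee $\rk(\C\U\R)\le k$ follows since the construction factors through the rank-$k$ subspace spanned by $\H_k$.

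The running time is accounted for by the dominant operations: computing the thin SVD of the $m\times c$ matrix $\C$ costs $\OM(mc^2)$, assembling $\U$ from the sampled $c\times r$ and related submatrices costs $\OM(c^2 r + c^3)$, and reading off the $r$ sampled rows to form $\R$ costs $\OM(nr)$, summing to $\OM(mc^2 + nr + c^2 r + c^3)$. I expect the main obstacle to be the structural lemma of the column stage: proving that control of $\|\A\A^T - \C\C^T\|_F$ implies control of the residual $\|\A - \H_k\H_k^T\A\|_F$ requires a matrix-perturbation argument that compares the Rayleigh quotients of $\A\A^T$ against the leading eigenvectors of $\C\C^T$, and it is here that the $\sqrt{k}$ factor — and hence the fourth-root rate in $c$ — is forced. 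The row stage, by contrast, is a comparatively direct application of approximate matrix multiplication once the orthonormality of $\H_k$ is used to decouple it from the column stage.
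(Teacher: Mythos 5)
The paper contains no proof of this proposition: it is imported verbatim as Theorem~4 of \citet{drineas04fastmonte}, so there is no internal argument to compare against. Your reconstruction matches the original Drineas--Kannan--Mahoney proof in every essential: the norm-squared-sampling variance bound $\EB\|\A\A^T-\C\C^T\|_F^2\le\|\A\|_F^4/c$, the structural lemma converting closeness of $\C\C^T$ to $\A\A^T$ into a bound on $\|\A-\H_k\H_k^T\A\|_F$, the row-stage approximate multiplication of $\H_k^T\A$ using $\|\H_k\|_F^2=k$, the triangle inequality with two applications of Jensen, and the correct emergence of the constants $(4k/c)^{1/4}$ and $(k/r)^{1/2}$. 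The one ingredient you invoke without proof is the structural lemma; in the source it follows from the Pythagorean identity $\|\A-\H_k\H_k^T\A\|_F^2=\|\A\|_F^2-\|\H_k^T\A\|_F^2$ combined with a Hoffman--Wielandt-type perturbation bound comparing $\sum_{t\le k}\|\A^T\h_t\|_2^2$ against $\sum_{t\le k}\sigma_t^2(\A)$, which is exactly the Rayleigh-quotient comparison you anticipate as the source of the $\sqrt{k}$ factor. The running-time accounting is likewise consistent with the original algorithm, which forms $\C^T\C$ in $\OM(mc^2)$ time and diagonalizes it in $\OM(c^3)$.
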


\subsection{The Subspace Sampling \CUR Algorithm} \label{sec:stat_leverage}

\cite{drineas2008cur} proposed a two-stage randomized \CUR algorithm
which has a relative-error bound w.h.p.
In the first stage the algorithm samples $c$ columns of $\A$ to construct $\C$,
and in the second stage it samples $r$ rows from $\A$ and $\C$ simultaneously to construct $\R$ and $\U^\dag$.
In the first stage the sampling probabilities are proportional to the squared $\ell_2$-norm of the rows of $\V_{\A,k}$,
in the second stage the sampling probabilities are proportional to the squared $\ell_2$-norm of the rows of $\U_{\C,k}$.
That is why it is called the ``subspace sampling algorithm''.
Here we show the main results of the subspace sampling algorithms in the following proposition.

\begin{proposition}[The Subspace Sampling \CUR Algorithm]
Given a matrix $\A \in \RB^{m\times n}$ and an integer $k \ll \min\{m,n\}$,
the subspace sampling algorithm uses {\it exactly sampling} to select exactly $c = \OM(k^2 \epsilon^{-2} \log(1/\delta))$ columns of $\A$ to construct $\C$,
and then exactly $r = \OM(c^2 \epsilon^{-2} \log(1/\delta))$ rows of $\A$ to construct $\R$,
or uses {\it expected sampling} to select $c = \OM(k \epsilon^{-2} \log k \log(1/\delta))$ columns
and $r = \OM(c \epsilon^{-2} \log c \log(1/\delta))$ rows in expectation.
Then with probability at least $(1-\delta)$,
\begin{equation}
\|\A - \C \U \R\|_F \leq (1+\epsilon) \|\A - \A_k\|_F \textrm{.} \nonumber
\end{equation}
Here, the matrix $\U$ is a weighted Moore-Penrose inverse of the intersection between $\C$ and $\R$.
The running time of both algorithms is dominated by the truncated SVD of $\A$.
\end{proposition}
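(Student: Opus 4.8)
The plan is to prove the bound in two stages that mirror the two-stage structure of the algorithm, so that the overall relative error emerges as the product of two stage-wise relative errors. \emph{Stage~1 (column selection).} First I would establish that the subspace sampling in the first stage yields a matrix $\C$ satisfying the relative-error projection bound
\[
\|\A - \C\Cmp\A\|_F \;\le\; (1+\epsilon_1)\,\|\A - \A_k\|_F
\]
with probability at least $1-\delta/2$. The crucial point is that, because the column sampling probabilities are proportional to the squared row norms of $\VAk$ (the leverage scores of the top-$k$ right singular subspace), the associated sampling-and-rescaling matrix $\S$ satisfies an approximate matrix multiplication guarantee, namely $\|\VAk^T\S\S^T\VAk - \I_k\|_2 \le \eta$ with high probability. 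The engine behind this is a matrix Chernoff / coupon-collector concentration bound: sampling $c=\OM(k\epsilon^{-2}\log k)$ columns in expectation (or $c=\OM(k^2\epsilon^{-2})$ exactly) makes $\VAk^T\S\S^T\VAk$ close to $\I_k$ in spectral norm, with the extra $\log k$ factor accounting for coverage of the $k$ leverage directions. A standard structural argument converts this spectral closeness into a relative-error bound for the rank-$k$-restricted approximation $\Pi_{\C,k}(\A)$, and then the inequality $\|\A - \C\Cmp\A\|_F \le \|\A - \Pi_{\C,k}(\A)\|_F$ from Section~\ref{sec:notation} completes the stage.

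\emph{Stage~2 (row selection as approximate regression).} Next I would reduce the construction of $\U$ and $\R$ to a randomized least-squares regression. Since $\Cmp\A = \argmin_{\X}\|\A - \C\X\|_F$ is the exact regression solution with residual exactly $\|\A - \C\Cmp\A\|_F$, the idea is to approximate this solution using only sampled rows. Row sampling from $\A$ and $\C$ with probabilities proportional to the squared row norms of $\UCk$ produces a second sampling-and-rescaling matrix $\S_R$, and the weighted Moore-Penrose inverse $\U$ is defined precisely so that $\C\U\R = \C(\S_R^T\C)^\dag\S_R^T\A$, i.e. $\C$ times the solution of the subsampled regression $\min_\X\|\S_R^T\A - \S_R^T\C\X\|_F$. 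Invoking the approximate-regression guarantee, which holds once $\S_R$ satisfies the two structural conditions (near-isometry on the column space of $\C$ and approximate orthogonality to the regression residual, both again consequences of approximate matrix multiplication under $\UCk$-leverage sampling), yields
\[
\|\A - \C\U\R\|_F \;\le\; (1+\epsilon_2)\,\|\A - \C\Cmp\A\|_F
\]
with probability at least $1-\delta/2$, using $r=\OM(c\epsilon^{-2}\log c)$ rows in expectation (or $r=\OM(c^2\epsilon^{-2})$ exactly).

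\emph{Combining and calibrating.} Finally I would chain the two stages: conditioning on both high-probability events (a union bound costs $\delta$) gives $\|\A - \C\U\R\|_F \le (1+\epsilon_1)(1+\epsilon_2)\|\A - \A_k\|_F$, and choosing $\epsilon_1,\epsilon_2=\Theta(\epsilon)$ makes $(1+\epsilon_1)(1+\epsilon_2)\le 1+\epsilon$, the constants being absorbed into the $\OM(\cdot)$ sample sizes; the $\log(1/\delta)$ factors arise from boosting each concentration bound to failure probability $\delta/2$. For the running time, every step other than computing the leverage scores is cheap, and obtaining the row norms of $\VAk$ and $\UCk$ requires the top-$k$ SVD of $\A$ (and of $\C$), so the cost is dominated by the truncated SVD of $\A$, as claimed.

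I expect the main obstacle to be the second stage: one must define the weighted pseudoinverse $\U$ so that $\C\U\R$ is \emph{exactly} the subsampled-regression solution, and then verify that the two structural conditions needed for the approximate-regression guarantee hold under $\UCk$-leverage-score sampling. Unlike the comparatively standard Stage~1 concentration, the delicate part of Stage~2 is that rows are sampled using leverage scores derived from $\C$ rather than from $\A$, so the regression being approximated is the $\C$-residual regression; the argument must track how the Stage~1 error propagates multiplicatively through Stage~2 without any loss beyond the targeted $(1+\epsilon_2)$ factor.
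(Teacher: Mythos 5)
This proposition is not proved in the paper at all: it is quoted as background from \citet{drineas2008cur} (``Here we show the main results of the subspace sampling algorithms in the following proposition''), and the appendix proofs cover only Theorems~\ref{thm:adaptive_bound}--\ref{cor:fast_cur}. So there is no in-paper argument to compare against; the relevant benchmark is the proof in the cited source. Measured against that, your sketch reconstructs the correct architecture: Stage~1 is leverage-score column sampling with an approximate--matrix--multiplication concentration step ($\VAk^T\S\S^T\VAk \approx \I_k$) converted into $\|\A-\C\Cmp\A\|_F \le (1+\epsilon_1)\|\A-\A_k\|_F$, Stage~2 is the reduction of $\C\U\R$ to a row-subsampled least-squares problem $\min_{\X}\|\S_R^T\A - \S_R^T\C\X\|_F$ whose residual is controlled by the two structural conditions under $\UCk$-leverage sampling, and the final bound follows by chaining $(1+\epsilon_1)(1+\epsilon_2)$ with a union bound. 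This is exactly how Drineas et al.\ argue, and you correctly identify the delicate point (the weighted pseudoinverse must be defined so that $\C\U\R$ is \emph{identically} the subsampled regression solution, with the rescaling weights carried into $\U$ --- that is why the proposition calls it a \emph{weighted} Moore--Penrose inverse of the intersection of $\C$ and $\R$).

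Two small refinements if you were to write this out in full. First, the two sample-complexity regimes come from genuinely different concentration arguments: the expected-sampling bound $c=\OM(k\epsilon^{-2}\log k\log(1/\delta))$ uses a spectral-norm (matrix Chernoff--type) bound as you describe, whereas the exact-sampling bound $c=\OM(k^2\epsilon^{-2}\log(1/\delta))$ comes from bounding the \emph{Frobenius} norm of the matrix-multiplication error in expectation and applying Markov, which is where the extra factor of $k$ (rather than $\log k$) originates; your sketch attributes both to the same mechanism. Second, the $\log(1/\delta)$ factor in that regime is obtained by running the constant-success-probability algorithm $\OM(\log(1/\delta))$ times and keeping the best output, not by sharpening a single tail bound. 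Neither point affects the correctness of your plan.
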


Although the algorithm is $\epsilon$-optimal with high probability,
it requires too many rows get chosen:
at least $r = \OM (k \epsilon^{-4} \log^2 k)$ rows in expectation.
In this paper we seek to devise an algorithm with mild requirement on column and row numbers.

\section{Theoretical Backgrounds} \label{sec:theoretical}

Section~\ref{sec:connection} considers the connections between the column selection problem and the \CUR matrix decomposition problem.
Section~\ref{sec:col_selection} introduces a near-optimal relative-error column selection algorithm.
Our proposed \CUR algorithm is motivated by and partly based on the near-optimal column selection algorithm.

\subsection{Connections between Column Selection and CUR Matrix Decomposition} \label{sec:connection}

Column selection is a well-established problem which has been widely studied in the literature: \citep{frieze2004fast,deshpande2006matrix,drineas2008cur,deshpande2010efficient,boutsidis2011NOCshort,Guruswami2012optimal}.

Given a matrix $\A \in \RBmn$,
column selection aims to choose $c$
columns of $\A$ to construct $\C \in \RB^{m\times c}$
so that $\|\A - \C \Cmp \A \|_F$ achieves the minimum.
Since there are $(^n_c)$ possible choices of constructing $\C$,
so selecting the best subset is a hard problem.  In recent years,
many polynomial-time approximate algorithms have been proposed,
among which we are particularly interested in those algorithms with relative-error bounds; that is,
with $c \geq k$ columns selected from $\A$, there is a constant $\eta$ such that
\begin{equation}
\|\A - \C \Cmp \A \|_F \leq \eta \| \A - \A_k \|_F \textrm{.} \nonumber
\end{equation}
We call $\eta$ the {\it relative-error ratio}.
For some randomized algorithms, the inequality holds either w.h.p. or in expectation w.r.t. $\C$.

The \CUR  matrix decomposition problem has a close connection with the column selection problem.
As aforementioned, the first stage of existing \CUR algorithms is simply a column selection procedure.
However, the second stage is more complicated.
If the second stage is na\"ively solved by a column selection algorithm on $\A^T$,
then the error ratio will trivially be $2\eta$.

For a relative-error \CUR algorithm,
the first stage seeks to bound a construction error ratio of $\frac{\|\A - \C \C^\dag \A \|_F}{\| \A - \A_k \|_F}$,
while the section stage seeks to bound $\frac{\|\A - \C \C^\dag \A \R^\dag \R \|_F}{\| \A - \C \C^\dag \A \|_F}$ given $\C$.
Actually, the first stage is a special case of the second stage where $\C = \A_k$.
Given a matrix $\A$, if an algorithm solving the second stage results in a bound $\frac{\|\A - \C \C^\dag \A \R^\dag \R \|_F}{\| \A - \C \C^\dag \A \|_F} \leq \eta$,
then this algorithm also solves the column selection problem for $\A^T$ with an $\eta$ relative-error ratio.
Thus the second stage of \CUR is a generalization of the column selection problem.

\subsection{The Near-Optimal Column Selection Algorithm} \label{sec:col_selection}

Recently, \citet{boutsidis2011NOC} proposed a randomized algorithm which selects only
$c = { 2k \epsilon^{-1}}(1 + o(1))$ columns to achieve the expected relative-error ratio ($1+\epsilon$).
\citet{boutsidis2011NOC} also proved the lower bound of the column selection problem; that is,
at least $c = k\epsilon^{-1}$ columns are selected to achieve the $(1+\epsilon)$ ratio.
Thus this algorithm is near optimal.
Though an optimal algorithm recently proposed by \citet{Guruswami2012optimal} achieves the the lower bound,
the optimal algorithm is quite inefficient compared with the near-optimal algorithm.

The near-optimal algorithm has three steps:
the approximate SVD via random projection~\citep{halko2011ramdom},
the dual set sparsification algorithm~\citep{boutsidis2011NOC},
and the adaptive sampling algorithm~\citep{deshpande2006matrix}.
Here we present the main results of this algorithm in Lemma~\ref{prop:fast_column_selection}.
To better understand the algorithm, we also give the details of the three steps, respectively.

\begin{lemma}[Near-Optimal Column Selection Algorithm] \label{prop:fast_column_selection}
Given a matrix $\A \in \RBmn$ of rank $\rho$,
a target rank $k$ $(2 \leq k < \rho)$,
and $0 < \epsilon < 1$,
there exists a randomized algorithm to select at most
\begin{equation}
c = \frac{2k}{\epsilon} \Big( 1 + o(1) \Big) \nonumber
\end{equation}
columns of $\A$ to form a matrix $\C \in \RB^{m \times c}$ such that
\begin{equation}
\EB^2	\|\A - \C \Cmp \A \|_F \leq \EB \|\A - \C \Cmp \A \|_F^2 \leq (1 + \epsilon) \|\A - \A_k \|_F^2 \textrm{,} \nonumber
\end{equation}
where the expectations are taken w.r.t. $\C$.
Furthermore, the matrix $\C$ can be obtained in $\OM((mnk + nk^3)\epsilon^{-2/3})$.
\end{lemma}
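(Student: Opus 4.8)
The plan is to prove the lemma by composing the three building blocks announced above, run in sequence: the fast approximate SVD, the deterministic dual set sparsification, and the randomized adaptive sampling. Write the final column count as $c = c_1 + c_2$, where $c_1$ columns come from the sparsification step and $c_2$ from the adaptive step; the goal is to show that each can be taken as $\frac{k}{\epsilon}(1+o(1))$, so that their sum is $\frac{2k}{\epsilon}(1+o(1))$. The left-hand inequality $\EB^2\|\A - \C\Cmp\A\|_F \leq \EB\|\A - \C\Cmp\A\|_F^2$ is free: it is Jensen's inequality $(\EB X)^2 \leq \EB[X^2]$ applied to the nonnegative random variable $X = \|\A - \C\Cmp\A\|_F$. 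So the whole content is the right-hand bound together with the running time.

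First I would run the random-projection SVD of \citet{halko2011ramdom} to obtain, in $\OM(mnk\epsilon^{-2/3})$ time, a matrix $\Z \in \RB^{n\times k}$ with orthonormal columns that approximately spans the top-$k$ right singular subspace, in the sense that $\|\A - \A\Z\Z^T\|_F^2 \leq (1 + O(\epsilon))\|\A - \A_k\|_F^2$. Using $\Z$ in place of the exact $\VAk$ is precisely what avoids the full SVD and produces the $\epsilon^{-2/3}$ factor; the resulting accuracy loss is a lower-order multiplicative term that I would carry through and absorb into the final $o(1)$. Second, I would feed $\Z$ and the residual $\A - \A\Z\Z^T$ into the deterministic dual set sparsification algorithm of \citet{boutsidis2011NOC}, which selects $c_1 = \frac{k}{\epsilon}(1+o(1))$ columns forming $\C_1 \in \RB^{m\times c_1}$ with a rank-aware reconstruction guarantee of the form
\begin{equation}
\|\A - \C_1 \C_1^\dag \A \|_F^2 \;\leq\; \|\A - \Pi_{\C_1, k}(\A)\|_F^2 \;\leq\; \Theta \, \|\A - \A_k\|_F^2 \nonumber
\end{equation}
for an explicit constant $\Theta = \Theta(k, c_1)$ of the $(1-\sqrt{k/c_1})^{-2}$ type.

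Third, holding $\C_1$ fixed, I would invoke the adaptive sampling lemma of \citet{deshpande2006matrix}: sampling $c_2$ further columns i.i.d.\ with probabilities proportional to the squared column norms of the residual $\A - \C_1 \C_1^\dag \A$, forming $\C_2$ and setting $\C = [\C_1, \C_2]$, yields
\begin{equation}
\EB \|\A - \C \Cmp \A \|_F^2 \;\leq\; \|\A - \A_k\|_F^2 + \frac{k}{c_2}\, \|\A - \C_1 \C_1^\dag \A\|_F^2 . \nonumber
\end{equation}
Substituting the Step-2 bound turns the right-hand side into $\big(1 + \frac{k\Theta}{c_2}\big)\|\A - \A_k\|_F^2$, and choosing $c_2 = \frac{k}{\epsilon}(1+o(1))$ so that $\frac{k\Theta}{c_2} \leq \epsilon$ delivers the claimed $(1+\epsilon)\|\A - \A_k\|_F^2$. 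The running time is then the sum of the three stages: $\OM(mnk\epsilon^{-2/3})$ for the approximate SVD, the sparsification contributing the $\OM(nk^3\epsilon^{-2/3})$ term, and the adaptive sampling adding only lower-order cost, for a total of $\OM((mnk + nk^3)\epsilon^{-2/3})$.

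The main obstacle is the bookkeeping in the dual set sparsification step and its interface with the approximate SVD. I must verify that the reconstruction bound survives intact when $\VAk$ is replaced by the approximate subspace $\Z$, and then balance the two accuracy parameters against the two column budgets $c_1, c_2$ so that $\Theta$ is driven close enough to $1$ for the product $\frac{k\Theta}{c_2}$ to hit $\epsilon$ while $c_1 + c_2$ remains $\frac{2k}{\epsilon}(1+o(1))$. Pinning down the $o(1)$ — showing that the lower-order contributions from the randomized SVD and from the $(1-\sqrt{k/c_1})^{-2}$-type factors all collapse into a single vanishing correction — is the delicate part; everything else is assembling the three cited guarantees.
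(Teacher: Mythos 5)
Your overall architecture --- approximate SVD via random projection, then dual set sparsification, then adaptive sampling, with Jensen handling the left inequality --- is exactly the route the paper takes (it proves the generalization, Theorem~\ref{thm:modified_fast_row_selection}, and obtains this lemma as the special case $\C = \A_k$). However, your allocation of the column budget contains a genuine error. You split the budget evenly, $c_1 = c_2 = \frac{k}{\epsilon}(1+o(1))$, and propose to drive the sparsification constant $\Theta$ ``close enough to $1$'' so that $\frac{k\Theta}{c_2} \leq \epsilon$. This is impossible: the dual set spectral-Frobenius guarantee (Lemma~\ref{prop:deterministic_fro}) is $\|\A - \Pi_{\C_1,k}(\A)\|_F^2 \leq \bigl(1 + \frac{1}{(1-\sqrt{k/c_1})^2}\bigr)\|\A - \A_k\|_F^2$, and the factor $1 + (1-\sqrt{k/c_1})^{-2}$ is bounded below by $2$ for every $c_1$; it tends to $2$, not to $1$, as $c_1 \to \infty$. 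With $\Theta \geq 2$ and $c_2 = \frac{k}{\epsilon}$, the adaptive-sampling step yields at best $\bigl(1 + \frac{k\Theta}{c_2}\bigr)\|\A-\A_k\|_F^2 \geq (1+2\epsilon)\|\A-\A_k\|_F^2$, and rescaling $\epsilon$ to repair this inflates the column count to $\frac{4k}{\epsilon}(1+o(1))$, destroying the near-optimality constant that is the whole point of the lemma.

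The correct bookkeeping, which is what the paper does (see Algorithm~\ref{alg:fast_cur}, lines 5 and 9, and the proof of Theorem~\ref{thm:modified_fast_row_selection}), is asymmetric: take $c_1 = \OM(k\epsilon^{-2/3})$ and $\epsilon_0 = \epsilon^{2/3}$, so that $(1+\epsilon_0)\bigl(1 + \frac{1}{(1-\sqrt{k/c_1})^2}\bigr) = 2(1 + O(\epsilon^{1/3}))$, and then put essentially the entire budget into the adaptive step with $c_2 = \frac{2k}{\epsilon}$, which makes $\frac{k}{c_2}\cdot 2(1+O(\epsilon^{1/3})) \leq \epsilon(1+o(1))$. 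Since $k\epsilon^{-2/3} = o(k\epsilon^{-1})$, the total is still $c_1 + c_2 = \frac{2k}{\epsilon}(1+o(1))$, and the sparsification cost $\OM(nc_1k^2)$ becomes $\OM(nk^3\epsilon^{-2/3})$ as claimed --- whereas your choice $c_1 = k/\epsilon$ would also inflate that term to $\OM(nk^3\epsilon^{-1})$. A secondary, smaller issue: the Step-2 bound holds only in expectation over the random projection, so you need to nest the expectations (condition on $\C_1$ when applying adaptive sampling, then take the outer expectation), as the paper does explicitly.
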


The dual set sparsification algorithm requires the top $k$ right singular vectors of $\A$ as inputs.
Since SVD is time consuming, \citet{boutsidis2011NOC} employed an approximation SVD algorithm~\citep{halko2011ramdom} to speedup computation.
We give the theoretical analysis of the approximation SVD via random projection in Lemma~\ref{lem:rand_svd}.
The resulting matrix $\Z$ approximates $\V_{\A,k}$.

\begin{lemma}[Randomized SVD via Random Projection]\label{lem:rand_svd}
Given a matrix $\A \in \RB^{m\times n}$ of rank $\rho$, a target rank $k$ \emph{($k<\rho$)}, and $0 < \epsilon_0 < 1$,
the algorithm computes a factorization $\A = \B \Z^T + \E$ with $\B = \A \Z$, $\Z^T \Z = \I_k$, and $\E \Z = \0$ such that
\[
\EB \|\E\|_F^2 \leq (1+\epsilon_0) \|\A - \A_k\|_F^2.
\]
The algorithm runs in $\OM(m n k \epsilon_0^{-1})$ time.
\end{lemma}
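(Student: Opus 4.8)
The plan is to realize the factorization through a randomized range finder applied to the \emph{row} space of $\A$, and to reduce the expected-error claim to a deterministic structural inequality followed by a standard Gaussian moment estimate. Concretely, I would draw a standard Gaussian test matrix $\Ome \in \RB^{m \times \ell}$ with $\ell = k + p$ and oversampling $p = \lceil k/\epsilon_0 \rceil + 1$, form the sketch $\Y = \A^T \Ome \in \RB^{n \times \ell}$, let $\Q \in \RB^{n \times \ell}$ be an orthonormal basis for the range of $\Y$, and take $\Z \in \RB^{n \times k}$ to be the top-$k$ right singular vectors of $\A \Q$ lifted back through $\Q$, so that $\A \Z \Z^T = [\Pi_{\Y,k}(\A^T)]^T$ is the best rank-$k$ approximation of $\A$ whose rows lie in the range of $\Y$. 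Setting $\B = \A \Z$ and $\E = \A - \A \Z \Z^T$ gives $\A = \B \Z^T + \E$, and one checks at once that $\Z^T \Z = \I_k$ and $\E \Z = (\A - \A \Z \Z^T)\Z = \0$. Thus everything reduces to proving $\EB \|\E\|_F^2 = \EB \|\A^T - \Pi_{\Y,k}(\A^T)\|_F^2 \leq (1 + \epsilon_0) \|\A - \A_k\|_F^2$.

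The crux is a deterministic bound that, for the rank-$k$--restricted projection, avoids the factor of $2$ produced by naively splitting $\E$ into a full $\ell$-dimensional projection term plus a rank-$k$ truncation tail. Writing the SVD $\A = \UA \SiA \VA^T$ and setting $\Ome_1 = \UAk^T \Ome$ and $\Ome_2 = \UAkp^T \Ome$, I would compare $\Pi_{\Y,k}(\A^T)$ against the specific rank-$k$ matrix $\PP = \Y \Ome_1^\dag \UAk^T$, whose columns lie in the range of $\Y$. A short computation using $\Ome_1 \Ome_1^\dag = \I_k$ (valid whenever $\Ome_1$ has full row rank) gives $\PP = (\A^T)_k + \VAkp \SiAkp \Ome_2 \Ome_1^\dag \UAk^T$, so $\A^T - \PP$ splits into $\VAkp \SiAkp \UAkp^T$ and $-\VAkp \SiAkp \Ome_2 \Ome_1^\dag \UAk^T$. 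These two terms share the column space $\VAkp$ but have orthogonal row spaces ($\UAkp$ versus $\UAk$), so the Frobenius Pythagorean identity yields $\|\A^T - \PP\|_F^2 = \|\A - \A_k\|_F^2 + \|\SiAkp \Ome_2 \Ome_1^\dag\|_F^2$. Since $\Pi_{\Y,k}(\A^T)$ is the best rank-$k$ approximation in the column space of $\Y$, it is no worse than $\PP$, giving the deterministic inequality
\[
\|\A^T - \Pi_{\Y,k}(\A^T)\|_F^2 \leq \|\A - \A_k\|_F^2 + \|\SiAkp \Ome_2 \Ome_1^\dag\|_F^2 .
\]

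It then remains to take expectations. By rotational invariance of the Gaussian distribution, $\Ome_1$ and $\Ome_2$ are independent standard Gaussian matrices of sizes $k \times \ell$ and $(\rho - k) \times \ell$. Conditioning on $\Ome_1$ and using $\EB \|\S_1 \Ome_2 \S_2\|_F^2 = \|\S_1\|_F^2 \|\S_2\|_F^2$ for a Gaussian $\Ome_2$, I would obtain $\EB \|\SiAkp \Ome_2 \Ome_1^\dag\|_F^2 = \|\SiAkp\|_F^2 \, \EB \|\Ome_1^\dag\|_F^2 = \tfrac{k}{\ell - k - 1} \|\A - \A_k\|_F^2$, invoking the standard inverse-Wishart fact $\EB \|\Ome_1^\dag\|_F^2 = k/(\ell - k - 1)$ for a $k \times \ell$ Gaussian matrix \citep{halko2011ramdom}. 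With $\ell - k - 1 = p - 1 \geq k/\epsilon_0$ this is at most $\epsilon_0 \|\A - \A_k\|_F^2$; combining with the deterministic bound and transposing back gives $\EB \|\E\|_F^2 \leq (1 + \epsilon_0) \|\A - \A_k\|_F^2$. For the running time, forming $\Y = \A^T \Ome$ costs $\OM(mn\ell) = \OM(mnk\epsilon_0^{-1})$, while the orthonormalization, the SVD of the $m \times \ell$ matrix $\A \Q$, and the product $\A \Z$ cost $\OM((m+n)\ell^2)$ and $\OM(mnk)$, which are absorbed into $\OM(mnk\epsilon_0^{-1})$ once $\ell = \OM(k\epsilon_0^{-1})$ is small relative to $\min\{m,n\}$.

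The step I expect to be the main obstacle is precisely the deterministic structural inequality, not the probabilistic part. The naive route of bounding $\|\E\|_F^2$ by the full projection error plus the truncation tail $\sum_{j > k} \sigma_j(\A\Q)^2$ loses a factor of $2$, because that tail can itself be as large as $\|\A - \A_k\|_F^2$. The clever choice of the comparison matrix $\PP$, built from $\UAk$ and $\Ome_1^\dag$, is what lets the Pythagorean split produce $\|\A - \A_k\|_F^2$ plus a genuinely small cross term rather than twice the optimal residual; once that is in hand, the Gaussian moment computation is routine and citable.
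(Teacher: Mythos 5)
Your proof is correct, and it is essentially the standard argument behind this lemma, which the paper itself does not prove but imports from \citet{halko2011ramdom} and \citet{boutsidis2011NOC}: a Gaussian sketch of the row space with $\ell = \OM(k/\epsilon_0)$ columns, the rank-constrained deterministic comparison against $\Y \Ome_1^\dag \U_{\A,k}^T$ yielding $\|\A - \A_k\|_F^2 + \|\Si_{\A,k\perp}\Ome_2\Ome_1^\dag\|_F^2$, and the inverse-Wishart moment bound $\EB\|\Ome_1^\dag\|_F^2 = k/(\ell-k-1)$. Your reconstruction of the comparison matrix and the Pythagorean split is exactly the device used in the cited sources to avoid the factor of $2$, so there is nothing to add.
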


The second step of the near-optimal column selection algorithm is the {\it dual set sparsification} proposed by \cite{boutsidis2011NOC}.
When ones take $\A$ and the top $k$ (approximate) right singular vectors of $\A$ as inputs,
the dual set sparsification algorithm can deterministically selects $c_1$ columns of $\A$ to construct $\C_1$.
We present their results in Lemma~\ref{prop:deterministic_fro} and attach the concrete algorithm in Appendix~\ref{sec:dualset}.

\begin{lemma}[Column Selection via Dual Set Sparsification Algorithm] \label{prop:deterministic_fro}
Given a matrix $\A \in \RBmn$ of rank $\rho$ and a target rank $k$ $(< \rho)$,
the dual set spectral-Frobenius sparsification algorithm deterministically selects $c_1$ $(> k)$ columns of $\A$ to form a matrix $\C_1 \in \RB^{m\times c_1}$ such that
\[
\Big\|\A - \Pi_{\C_1,k} (\A) \Big\|_F \leq \sqrt{ 1 + \frac{1}{(1 - \sqrt{k/c_1})^2} }\, \Big\|\A - \A_k \Big\|_F.
\]
Moreover, the matrix $\C_1$ can be computed in $T_{\V_{\A,k}} + \OM(mn + nc_1 k^2)$,
where $T_{\V_{\A,k}}$ is the time needed to compute the top $k$ right singular vectors of $\A$.
\end{lemma}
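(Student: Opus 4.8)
The plan is to reduce the statement to the deterministic \emph{dual set spectral-Frobenius sparsification} lemma of \citep{boutsidis2011NOC}, which I take as the engine. That lemma receives two families of vectors indexed by $j=1,\dots,n$: a family $\{\u_j\}$ in $\RB^k$ whose outer products resolve the identity, $\sum_j \u_j\u_j^T = \I_k$, together with an arbitrary family $\{\v_j\}$; it deterministically outputs nonnegative weights $s_j$, at most $c_1$ of them nonzero, obeying a \emph{spectral lower bound} $\lambda_k\big(\sum_j s_j\u_j\u_j^T\big) \ge (1-\sqrt{k/c_1})^2$ and a \emph{Frobenius upper bound} $\tr\big(\sum_j s_j\v_j\v_j^T\big) \le \tr\big(\sum_j \v_j\v_j^T\big)$. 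I would invoke it with $\u_j$ the $j$-th row of $\VAk$, so that $\sum_j \u_j\u_j^T = \VAk^T\VAk = \I_k$, and $\v_j$ the $j$-th column of the residual $\A-\A_k$. Collecting the output weights into a sampling-and-rescaling matrix $\S \in \RB^{n\times c_1}$, the two guarantees read $\sigma_k(\VAk^T\S) \ge 1-\sqrt{k/c_1}$ and $\|(\A-\A_k)\S\|_F \le \|\A-\A_k\|_F$. Since rescaling columns leaves their span unchanged, the selected (unscaled) columns $\C_1$ span the same subspace as $\A\S$, so it suffices to control the reconstruction error of $\A\S$.

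The second step is to exhibit an explicit rank-$k$ matrix inside the column space of $\C_1$ and invoke the optimality of $\Pi_{\C_1,k}(\A)$. I would take
\[
\X = \A\S\,(\VAk^T\S)^\dag\,\VAk^T,
\]
whose columns lie in $\span(\A\S)=\span(\C_1)$ and whose rank is at most $k$; hence $\|\A-\Pi_{\C_1,k}(\A)\|_F \le \|\A-\X\|_F$ by definition of $\Pi_{\C_1,k}$. The core of the argument is to split $\A\S = \A_k\S + (\A-\A_k)\S$ inside $\X$. Because the spectral bound forces $\VAk^T\S$ to have full row rank $k$, we have $(\VAk^T\S)(\VAk^T\S)^\dag = \I_k$, so the top-$k$ piece collapses exactly: $\A_k\S(\VAk^T\S)^\dag\VAk^T = \UAk\SiAk\VAk^T = \A_k$. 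This leaves $\A-\X = (\A-\A_k) - \De$ with $\De = (\A-\A_k)\S(\VAk^T\S)^\dag\VAk^T$.

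The final step is the norm estimate. The two matrices $\A-\A_k = \A\VAkp\VAkp^T$ and $\De = (\cdot)\VAk^T$ have mutually orthogonal row spaces, one inside $\span(\VAkp)$ and the other inside $\span(\VAk)$, so the Pythagorean identity for the Frobenius norm gives $\|\A-\X\|_F^2 = \|\A-\A_k\|_F^2 + \|\De\|_F^2$. Since $\VAk^T$ has orthonormal rows, $\|\De\|_F = \|(\A-\A_k)\S(\VAk^T\S)^\dag\|_F$, and submultiplicativity $\|\M_1\M_2\|_F \le \|\M_1\|_F\|\M_2\|_2$ combined with the two sparsification guarantees yields $\|\De\|_F \le \|(\A-\A_k)\S\|_F\,\|(\VAk^T\S)^\dag\|_2 \le \|\A-\A_k\|_F/(1-\sqrt{k/c_1})$. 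Substituting and taking square roots produces the claimed factor $\sqrt{1 + (1-\sqrt{k/c_1})^{-2}}$. The running time then accounts as $T_{\VAk}$ for the top-$k$ right singular vectors, $\OM(mn)$ to form the squared column norms that the Frobenius side consumes, and $\OM(nc_1 k^2)$ for the $c_1$ iterations of the dual set algorithm acting on vectors in $\RB^k$.

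I expect the main obstacle to be the exact cancellation of the $\A_k$ term and the orthogonal (Pythagorean) split: these are precisely what fuse the two a priori unrelated guarantees — one spectral, one Frobenius — into a single clean reconstruction bound. Choosing the candidate $\X$ correctly, so that the top-$k$ part cancels and the cross term factors cleanly through $\VAk^T$, is the only genuinely delicate design decision; everything downstream is submultiplicativity. The deep content, namely the Batson--Spielman--Srivastava-style potential-function construction that delivers the dual set lemma itself, I would take as given.
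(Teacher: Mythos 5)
Your proof is correct and is essentially the standard derivation of this lemma from the dual set spectral-Frobenius sparsification guarantees, which the paper records as Lemma~\ref{lem:sparsification} in Appendix~\ref{sec:dualset}; the paper itself supplies no proof of Lemma~\ref{prop:deterministic_fro} and simply imports it from \citet{boutsidis2011NOC}, whose argument is exactly yours. The key steps --- instantiating the dual set lemma with the rows of $\VAk$ and the columns of $\A-\A_k$, the exact cancellation $\A_k\S(\VAk^T\S)^\dag\VAk^T=\A_k$ enabled by the full-row-rank of $\VAk^T\S$, the Pythagorean split of $\A-\X$ along $\span(\VAk)$ versus its orthogonal complement, and the final submultiplicativity estimate --- are all sound, so there is no gap.
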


After sampling $c_1$ columns of $\A$, the near-optimal column selection algorithm uses the adaptive sampling of~\citet{deshpande2006matrix}
to select $c_2$ columns of $\A$ to further reduce the construction error.
We present Theorem~2.1 in \citet{deshpande2006matrix} in the following lemma.

\begin{lemma} [The Adaptive Sampling Algorithm] \label{lem:ada_sampling}
Given a matrix $\A \in \RBmn$, we let $\C_1\in \RB^{m\times c_1}$ consists of $c_1$ columns of $\A$,
and define the residual $\B = \A - \C_1 \C_1^\dag \A$. Additionally,
for $i = 1,\cdots, n$, we define
\[
p_i = \|\bb_i\|_2^2 / \|\B\|_F^2.
\]
We further  sample  $c_2$ columns  i.i.d. from $\A$,
 in each trial of which the $i$-th column is chosen with probability $p_i$.
Let $\C_2 \in \RB^{m\times c_2}$ contain the $c_2$ sampled rows
and let $\C = [\C_1,\C_2] \in \RB^{m\times (c_1+c_2)}$.
Then, for any integer $k > 0$, the following inequality holds:
\[
\EB \|\A - \C \Cmp \A\|_F^2 \leq \|\A - \A_k \|_F^2 + \frac{k}{r_2} \|\A - \C_1 \C_1^\dag \A \|_F^2,
\]
where the expectation is taken w.r.t. $\C_2$.
\end{lemma}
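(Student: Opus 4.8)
The plan is to reduce the statement to the construction of one convenient matrix inside the column space of $\C$ and then to control it by a variance argument. Since $\C\Cmp\A$ is the best Frobenius-norm approximation of $\A$ whose columns lie in the column space of $\C$, for \emph{any} matrix $\Y$ with columns in that space we have $\|\A - \C\Cmp\A\|_F \le \|\A - \Y\|_F$; this holds for every realization of the sampled columns, so it survives taking expectations. First I would fix the target $\Y = \C_1 \C_1^\dag \A + \F$ for a matrix $\F$ built from the sampled residual columns, so that $\A - \Y = \B - \F$, and it then remains to bound $\EB \|\B - \F\|_F^2$.

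Next I would build $\F$ as an unbiased estimator of $\B\,\VAk\VAk^T$, the projection of the residual onto the top-$k$ right singular space of $\A$. For each sampled index $j_s$ the residual column $\bb_{j_s} = \a_{j_s} - \C_1\C_1^\dag \a_{j_s}$ lies in the column space of $\C$ (because $\a_{j_s}$ is a column of $\C_2$ and $\C_1 \C_1^\dag \a_{j_s}$ lies in the column space of $\C_1$), so setting
\[
\F = \frac{1}{c_2}\sum_{s=1}^{c_2}\frac{1}{p_{j_s}}\,\bb_{j_s}\,\e_{j_s}^T \VAk\VAk^T
\]
keeps the columns of $\Y$ inside the column space of $\C$ and forces the row space of $\F$ to sit inside that of $\VAk$. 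A direct computation using $\sum_j p_j \tfrac{1}{p_j}\bb_j \e_j^T = \B$ shows $\EB[\F] = \B\,\VAk\VAk^T$.

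Then I would split the error by a Pythagorean identity along the orthogonal row-space projectors $\VAk\VAk^T$ and $\I - \VAk\VAk^T$: since $\F$ has row space inside that of $\VAk$, the cross term vanishes and
\[
\|\B - \F\|_F^2 = \big\|\B(\I - \VAk\VAk^T)\big\|_F^2 + \big\|\B\VAk\VAk^T - \F\big\|_F^2 .
\]
For the first term I would use $\A\,\VAk\VAk^T = \A_k$ to write $\B(\I - \VAk\VAk^T) = (\I - \C_1\C_1^\dag)(\A - \A_k)$ and bound it by $\|\A - \A_k\|_F^2$, since $\I - \C_1\C_1^\dag$ is an orthogonal projection and hence a contraction. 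The second term is the variance of an average of $c_2$ i.i.d.\ unbiased estimators, so its expectation is at most $\tfrac{1}{c_2}\EB\big\|\tfrac{1}{p_j}\bb_j \e_j^T \VAk\VAk^T\big\|_F^2 = \tfrac{1}{c_2}\sum_j \tfrac{1}{p_j}\|\bb_j\|_2^2\,\|\VAk\VAk^T \e_j\|_2^2$.

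The crux — and the step I expect to carry the whole argument — is the cancellation in this last variance term: the sampling law $p_j = \|\bb_j\|_2^2/\|\B\|_F^2$ is engineered precisely so that $\tfrac{1}{p_j}\|\bb_j\|_2^2 = \|\B\|_F^2$ is constant in $j$, whereupon $\sum_j \|\VAk\VAk^T\e_j\|_2^2 = \|\VAk\VAk^T\|_F^2 = k$ and the variance term is bounded by $\tfrac{k}{c_2}\|\B\|_F^2$. Combining the two pieces and recalling $\|\B\|_F = \|\A - \C_1\C_1^\dag\A\|_F$ yields the claim. The only points needing care are verifying that the cross term truly vanishes (orthogonality of the two row-space projectors) and that the averaged estimator's variance splits as $1/c_2$ times a single-sample second moment, both of which become routine once $\F$ is set up as above.
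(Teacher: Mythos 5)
Your proof is correct, and at its core it is the same argument the paper uses: an importance-sampled unbiased estimator of the residual's projection onto the top-$k$ right singular subspace, an orthogonal (Pythagorean) split of the error, and the cancellation $\tfrac{1}{p_j}\|\bb_j\|_2^2 = \|\B\|_F^2$ that turns the variance term into $\tfrac{k}{c_2}\|\B\|_F^2$. The presentational differences are worth noting. The paper never proves this lemma directly; it cites it and obtains it as the corollary $\C = \A_k$ of its generalized Theorem~\ref{thm:adaptive_bound}, whose proof works one singular direction at a time: it builds vector estimators $\x_j$ of $\B\v_j$, assembles $\w_j = \C_1\C_1^\dag\A\v_j + \x_j$, and compares against an auxiliary matrix $\F = (\sum_q \sigma_q^{-1}\w_q\u_q^T)\A\R^\dag\R$, which in the general setting also requires the extra fact $\A\R^\dag\R\v_j = \A\v_j$ (Lemma~\ref{lem:right_singular_vector}). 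Your single matrix estimator $\F$ with $\EB[\F] = \B\VAk\VAk^T$ packages the same computation more compactly, and your first term uses the contraction bound $\|(\I-\C_1\C_1^\dag)(\A-\A_k)\|_F \le \|\A-\A_k\|_F$ where the paper's split gives an exact equality; both yield the stated bound. Two small housekeeping points: the $r_2$ in the lemma's displayed bound is a typo for $c_2$ (your proof correctly produces $k/c_2$), and one should note that indices with $\bb_j = \0$ have $p_j = 0$, are never sampled, and so cause no division-by-zero in the variance computation.
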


\section{Main Results} \label{sec:fast_cur}

In this section we develop a novel \CUR algorithm that
we call  {\it the fast \CUR algorithm} due to its lower time complexity in comparison with SVD.
We describe the procedure in Algorithm~\ref{alg:fast_cur}
and give theoretical analysis in Theorem~\ref{cor:fast_cur}.

The main results of our work are formally shown in  three theorems in this section.
The proofs are deferred to Appendix~\ref{sec:proofs}.
Theorem~\ref{cor:fast_cur} relies on Lemma~\ref{prop:fast_column_selection} and Theorem~\ref{thm:modified_fast_row_selection}, and
Theorem~\ref{thm:modified_fast_row_selection} relies on Theorem~\ref{thm:adaptive_bound}.
Theorem~\ref{thm:adaptive_bound} is a generalization of Lemma~\ref{lem:ada_sampling},
and Theorem~\ref{thm:modified_fast_row_selection} is a generalization of Lemma~\ref{prop:fast_column_selection}.

\begin{algorithm}[tb]
   \caption{The Fast \CUR Algorithm.}
   \label{alg:fast_cur}
\algsetup{indent=2em}
\begin{small}
\begin{algorithmic}[1]
   \STATE {\bf Input:} a real matrix $\A \in \RBmn$, target rank $k$, $\epsilon \in (0, 1]$,
   			target column number $c = \frac{2k}{\epsilon} \big(1+o(1)\big)$, target row number $r = \frac{2c}{\epsilon} \big(1+o(1)\big)$;
   \STATE {\it // Stage~1: select $c$ columns of $\A$ to construct $\C \in \RB^{m\times c}$}
   \STATE Compute approximate truncated SVD via random projection such that $\A_k \approx \tilde{\U}_k \tilde{\Si}_k \tilde{\V}_k$; \label{alg:fast_cur:line3}
   \STATE Construct $\UM_1 \leftarrow$ columns of $(\A - \tilde{\U}_k \tilde{\Si}_k \tilde{\V}_k)$; $\quad \VM_1 \leftarrow$ columns of $\tilde{\V}_k^T$; \label{alg:fast_cur:line4}
   \STATE Compute $\s_1 \leftarrow$ Dual Set Spectral-Frobenius Sparsification Algorithm ($\UM_1$, $\VM_1$, $c - 2k/\epsilon$);
   \STATE Construct $\C_1 \leftarrow \A \Diag(\s_1)$, and then delete the all-zero columns;  \label{alg:fast_cur:line6}
   \STATE Residual matrix $\D \leftarrow \A - \C_1 \C_1^\dag \A$;
   \STATE Compute sampling probabilities: $p_i = \|\d_i\|_2^2 / \|\D\|_F^2$, $i = 1, \cdots, n$;
   \STATE Sampling $c_2 = 2k/\epsilon$ columns from $\A$ with probability $\{p_1,\cdots,p_n\}$ to construct $\C_2$;
   \STATE {\it // Stage~2: select $r$ rows of $\A$ to construct $\R \in \RB^{r\times n}$}
   \STATE Construct $\UM_2 \leftarrow$ columns of $(\A - \tilde{\U}_k \tilde{\Si}_k \tilde{\V}_k)^T$; $\quad \VM_2 \leftarrow$ columns of $\tilde{\U}_k^T$;  \label{alg:fast_cur:line11}
   \STATE Compute $\s_2 \leftarrow$ Dual Set Spectral-Frobenius Sparsification Algorithm ($\UM_2$, $\VM_2$, $r - 2c/\epsilon$);
   \STATE Construct $\R_1 \leftarrow \Diag(\s_2) \A$, and then delete the all-zero rows;  \label{alg:fast_cur:line13}
   \STATE Residual matrix $\B \leftarrow \A - \A \R_1^\dag \R_1$;  \label{alg:fast_cur:line14}
   \STATE Compute sampling probabilities: $q_j = \|\bb^{(j)}\|_2^2 / \|\B\|_F^2$, $j = 1, \cdots, m$;
   \STATE Sampling $r_2 = 2c/\epsilon$ rows from $\A$ with probability $\{q_1, \cdots, q_m\}$ to construct $\R_2$;  \label{alg:fast_cur:line16}
   \RETURN $\C = [\C_1 , \C_2]$, $\R = [\R^T_1 , \R_2^T]^T$, and $\U = \C^\dag \A \R^\dag$.
\end{algorithmic}
\end{small}
\end{algorithm}

\subsection{Adaptive Sampling}

The relative-error adaptive sampling algorithm is established in Theorem~2.1 of \citet{deshpande2006matrix}.
The algorithm is based on the following idea:
after selecting a proportion of columns from $\A$ to form $\C_1$ by an arbitrary algorithm,
the algorithms randomly samples additional $c_2$ columns according to the residual $\A - \C_1 \C_1^\dag \A$.
\cite{boutsidis2011NOC} used the adaptive sampling algorithm to decrease the residual of the dual set sparsification algorithm and obtained an $(1+\epsilon)$ relative-error ratio.
Here we prove a new bound for the same adaptive sampling algorithm.
Interestingly, this new bound is a generalization of the original one in Theorem~2.1 of \citet{deshpande2006matrix}.
In other words, Theorem~2.1 of \citet{deshpande2006matrix} is a direct corollary of our following theorem when  $\C = \A_k$.

\begin{theorem} [The Adaptive Sampling Algorithm] \label{thm:adaptive_bound}
Given a matrix $\A \in \RBmn$ and a matrix $\C \in \RB^{m\times c}$
such that $\rk(\C) = \rk(\C \C^\dag \A) = \rho$ $(\rho \leq c \leq n)$,
we let $\R_1 \in \RB^{r_1 \times n}$ consist of $r_1$ rows of $\A$,
and define the residual $\B = \A - \A \R_1^\dag \R_1$. Additionally,
for $i = 1,\cdots, m$, we define
\[
p_i = \|\bb^{(i)}\|_2^2 / \|\B\|_F^2.
\]
We further  sample  $r_2$ rows  i.i.d. from $\A$,
 in each trial of which the $i$-th row is chosen with probability $p_i$.
Let $\R_2 \in \RB^{r_2\times n}$ contain the $r_2$ sampled rows
and let $\R = [\R_1^T,\R_2^T]^T \in \RB^{(r_1+r_2)\times n}$.
Then the following inequality holds:
\begin{equation}
\EB \|\A - \C \Cmp \A \R^\dag \R \|_F^2 \leq \|\A - \C \C^\dag \A \|_F^2 + \frac{\rho}{r_2} \|\A - \A \R_1^\dag \R_1\|_F^2 \textrm{,} \nonumber
\end{equation}
where the expectation is taken w.r.t. $\R_2$.
\end{theorem}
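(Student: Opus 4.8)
The plan is to mirror the structure of the adaptive sampling bound in Lemma~\ref{lem:ada_sampling}, but to replace the role of $\A_k$ (the best rank-$k$ approximation) with the projected matrix $\C\Cmp\A$. The key observation driving the generalization is the rank condition $\rk(\C)=\rk(\C\Cmp\A)=\rho$: this tells us that $\C\Cmp\A$ has exactly rank $\rho$, so that $\C\Cmp\A$ plays the role that a rank-$\rho$ truncation would play in the original statement. Since the sampling is done on rows (by transposing the setup in Lemma~\ref{lem:ada_sampling}), I would first transpose everything and work with column sampling on $\A^T$, so that the existing adaptive-sampling machinery applies directly.

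First I would set $\E := \A - \C\Cmp\A$, the residual of the column-space projection, and note that $\C\Cmp\E = \0$ by idempotence of the projector $\C\Cmp$. The target quantity is $\EB\|\A - \C\Cmp\A\R^\dag\R\|_F^2$, and the natural move is to split $\A$ inside the expression as $\C\Cmp\A + \E$ and analyze the two pieces. Because $\C\Cmp\A$ lies in the column space of $\C$ while row-projection $\R^\dag\R$ acts on the right, I would decompose
\[
\A - \C\Cmp\A\R^\dag\R = (\A - \C\Cmp\A) + \C\Cmp\A(\I_n - \R^\dag\R)
= \E + \C\Cmp\A(\I_n - \R^\dag\R).
\]
The crucial step is to verify an orthogonality (Pythagorean) relation so that the squared Frobenius norm splits as $\|\E\|_F^2$ plus the expected norm of the second term. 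This should follow because the column space of $\E$ and the column space of $\C\Cmp\A(\I_n-\R^\dag\R)$ are orthogonal: the latter lies in the column space of $\C$ (equivalently of $\C\Cmp\A$), and $\E$ is orthogonal to that space by construction of the projection. I expect this orthogonality verification to be the main obstacle, because the projector $\R^\dag\R$ acts on the right and one must be careful that the left-side orthogonality between $\E$ and $\C\Cmp\A$ is preserved after right-multiplication by $(\I_n-\R^\dag\R)$; the cleanest route is to work columnwise and use $\tr\big(\E^T\C\Cmp\A M\big)=0$ for any $M$, which holds since $\E^T\C\Cmp\A = (\C\Cmp\E)^T{}^{\!}\cdots$ — more precisely since $\A^T(\C\Cmp)^T\E$ reduces to $0$ via $\C\Cmp$ being a symmetric idempotent and $\C\Cmp\E=\0$.

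Having reduced to $\EB\|\C\Cmp\A(\I_n-\R^\dag\R)\|_F^2$, I would bound this by the adaptive-sampling guarantee of Lemma~\ref{lem:ada_sampling} applied to the matrix $\C\Cmp\A$ (whose rank is $\rho$) in the row-sampling formulation. The sampling probabilities $p_i=\|\bb^{(i)}\|_2^2/\|\B\|_F^2$ are defined from the residual $\B=\A-\A\R_1^\dag\R_1$, and Lemma~\ref{lem:ada_sampling} (transposed) yields, for the rank-$\rho$ matrix $\C\Cmp\A$,
\[
\EB\big\|\C\Cmp\A(\I_n-\R^\dag\R)\big\|_F^2 \leq \frac{\rho}{r_2}\,\|\B\|_F^2 = \frac{\rho}{r_2}\,\|\A-\A\R_1^\dag\R_1\|_F^2,
\]
where the rank-$\rho$ target replaces the generic integer $k$ in the lemma. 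Combining this with the Pythagorean split $\|\A-\C\Cmp\A\R^\dag\R\|_F^2 = \|\E\|_F^2 + \|\C\Cmp\A(\I_n-\R^\dag\R)\|_F^2$ and recalling $\|\E\|_F^2 = \|\A-\C\Cmp\A\|_F^2$ gives exactly the claimed bound. The remaining care is to confirm that Lemma~\ref{lem:ada_sampling} genuinely applies: its residual-based sampling is defined from $\B$ in a way that matches the adaptive scheme here, and the rank-$\rho$ matrix $\C\Cmp\A$ slots into the role of $\A$ in that lemma with target rank $\rho$, which is legitimate since $\rho\le c\le n$ guarantees the hypotheses are met. Specializing $\C=\A_k$ (so $\rho=k$ and $\C\Cmp\A=\A_k$) recovers Lemma~\ref{lem:ada_sampling}, confirming the generalization is consistent.
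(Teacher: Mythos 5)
Your Pythagorean decomposition is correct and is in fact the same one the paper uses (the paper proves the transposed, column-sampling statement, where the split reads $\A - \C\Cmp\A\R^\dag\R = (\A - \A\R^\dag\R) + (\I-\C\Cmp)\A\R^\dag\R$; transposing gives exactly your split, and your verification via $\E^T\C\Cmp = \0$ is fine). The gap is in the second half. You claim that
\[
\EB\big\|\C\Cmp\A(\I_n-\R^\dag\R)\big\|_F^2 \leq \frac{\rho}{r_2}\,\|\B\|_F^2
\]
follows by applying Lemma~\ref{lem:ada_sampling} to the rank-$\rho$ matrix $\C\Cmp\A$ with target rank $\rho$. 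But the hypotheses of that lemma are not satisfied: it requires that the sampled rows be rows of the matrix being approximated and that the sampling probabilities be proportional to the squared row norms of \emph{that matrix's} residual. Here $\R_1$ and $\R_2$ consist of rows of $\A$ (not of $\C\Cmp\A$), the projector $\R^\dag\R$ is onto the row space of selected rows of $\A$, and the probabilities $p_i$ come from $\B = \A - \A\R_1^\dag\R_1$ rather than from $\C\Cmp\A - (\C\Cmp\A)\R_1'^\dag\R_1'$. This mismatch between the matrix whose rows are sampled ($\A$) and the matrix being approximated ($\C\Cmp\A$) is precisely the content of the generalization being proved, so invoking the lemma as a black box at this point assumes the hard part of the theorem.

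What the paper does instead is reopen the Deshpande--Vempala argument. It builds unbiased estimators of $\B\v_j$ from rows (columns, in the transposed version) of $\A$ using the probabilities derived from $\A$'s residual, obtaining the variance bound $\|\B\|_F^2/r_2$ per singular direction; the new ingredient that makes these estimators usable for the \emph{projected} matrix is Lemma~\ref{lem:right_singular_vector}, which shows that the top $\rho$ singular vectors of the projected matrix are unaffected by the projection (in your row-sampling orientation: $\u_j^T\C\Cmp\A = \u_j^T\A$ for the top $\rho$ left singular vectors $\u_j$ of $\C\Cmp\A$, because those $\u_j$ span the column space of $\C$). With that identity, the surrogate $\F$ built from the estimators lies in the row space of $\R$ and approximates $\C\Cmp\A$ with expected squared error at most $\frac{\rho}{r_2}\|\B\|_F^2$, which completes the bound. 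Your proof needs this singular-vector identity (or an equivalent device) spelled out; without it the reduction to Lemma~\ref{lem:ada_sampling} does not go through.
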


\subsection{The Fast \CUR Algorithm}

Based on the randomized SVD algorithm of Lemma~\ref{lem:rand_svd},
the dual set sparsification algorithm of Lemma~\ref{prop:deterministic_fro},
and the adaptive sampling algorithm of Theorem~\ref{thm:adaptive_bound},
we develop a randomized algorithm to solve the second stage of the \CUR problem.
We present the results of the algorithm in the following theorem.

\begin{theorem} [The Fast Row Selection Algorithm] \label{thm:modified_fast_row_selection}
Given a matrix $\A \in \RBmn$ and a matrix $\C \in \RB^{m\times c}$
such that $\rk(\C) = \rk(\C \C^\dag \A) = \rho$ $(\rho \leq c \leq n)$,
and a target rank $k$ $(\leq \rho)$,
the proposed randomized algorithm selects $r = \frac{2\rho}{\epsilon} ( 1 + o(1) )$ rows
of $\A$ to construct $\R\in \RB^{r\times n}$, such that
\begin{equation}
\EB \|\A - \C \Cmp \A \R^\dag \R \|_F^2
\leq \|\A - \C \Cmp \A\|_F^2 + \epsilon \|\A - \A_k\|_F^2 \textrm{,} \nonumber
\end{equation}
where the expectation is taken w.r.t. $\R$.
Furthermore, the matrix $\R$ can be computed in $\OM( (mnk + mk^3) \epsilon^{-2/3} )$ time.
\end{theorem}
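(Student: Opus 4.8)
The plan is to follow the two-phase structure of the near-optimal column selection result (Lemma~\ref{prop:fast_column_selection}), but applied to the rows of $\A$ and with its final adaptive-sampling step replaced by the generalized bound of Theorem~\ref{thm:adaptive_bound}, so that the matrix $\C$ and its rank $\rho$ enter in place of $\A_k$ and $k$. The algorithm produces $\R = [\R_1^T,\R_2^T]^T$ in two pieces: a deterministic part $\R_1$ of $r_1 = r - r_2$ rows selected by the dual set spectral-Frobenius sparsification, and a random part $\R_2$ of $r_2$ rows drawn by adaptive sampling against the residual $\B = \A - \A\R_1^\dag\R_1$. First I would bound the Frobenius norm of this residual, then feed it into Theorem~\ref{thm:adaptive_bound}, and finally choose $r_1,r_2$ to drive the error to $\epsilon\|\A-\A_k\|_F^2$ while keeping $r = r_1+r_2 = \frac{2\rho}{\epsilon}(1+o(1))$.

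For the residual of the deterministic stage, I would run the randomized SVD of Lemma~\ref{lem:rand_svd} on $\A^T$, obtaining approximate top-$k$ left singular vectors $\tilde{\U}_k$ (the matrix $\VM_2$ of Algorithm~\ref{alg:fast_cur}) with $\EB\|\A - \tilde{\U}_k\tilde{\U}_k^T\A\|_F^2 \leq (1+\epsilon_0)\|\A-\A_k\|_F^2$. Feeding $\tilde{\U}_k$ and the residual columns $\UM_2$ into the dual set sparsification of Lemma~\ref{prop:deterministic_fro} applied to $\A^T$ selects $r_1$ rows; using that the column-space projection $(\R_1^T)(\R_1^T)^\dag\A^T$ transposes to $\A\R_1^\dag\R_1$ together with $\|\A^T - \X\X^\dag\A^T\|_F \le \|\A^T - \Pi_{\X,k}(\A^T)\|_F$, I get
\[
\EB \|\A - \A\R_1^\dag\R_1\|_F^2 \leq \left(1 + \frac{1}{(1-\sqrt{k/r_1})^2}\right)(1+\epsilon_0)\,\|\A - \A_k\|_F^2 .
\]

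Next I would invoke Theorem~\ref{thm:adaptive_bound} with this $\R_1$ and the given $\C$ of rank $\rho$, which yields
\[
\EB\|\A - \C\Cmp\A\R^\dag\R\|_F^2 \leq \|\A - \C\Cmp\A\|_F^2 + \frac{\rho}{r_2}\,\|\A - \A\R_1^\dag\R_1\|_F^2 .
\]
Taking total expectation and substituting the residual bound, the extra term is at most $\frac{\rho}{r_2}\bigl(1+\frac{1}{(1-\sqrt{k/r_1})^2}\bigr)(1+\epsilon_0)\|\A-\A_k\|_F^2$. I would set $r_2 = \frac{\rho}{\epsilon}\bigl(1+\frac{1}{(1-\sqrt{k/r_1})^2}\bigr)(1+\epsilon_0)$ so this equals $\epsilon\|\A-\A_k\|_F^2$, and pick $r_1$ and $\epsilon_0$ to vanish relative to $\rho/\epsilon$ while forcing the sparsification factor toward its limit $2$ — for instance $r_1 = \Theta(k\epsilon^{-2/3})$ (so $k \ll r_1$) and $\epsilon_0 = \Theta(\epsilon^{2/3})$. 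Then $r_2 = \frac{2\rho}{\epsilon}(1+o(1))$ and $r = r_1+r_2 = \frac{2\rho}{\epsilon}(1+o(1))$. The time is the sum of the randomized SVD ($\OM(mnk\epsilon_0^{-1})$), the dual set step ($\OM(mn + mr_1k^2)$ plus the SVD cost), and forming $\B$ and sampling ($\OM(mnr_1)$), which for these parameters collapses to $\OM((mnk + mk^3)\epsilon^{-2/3})$.

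The main obstacle I anticipate is the first step: Lemma~\ref{prop:deterministic_fro} is stated for the \emph{exact} top-$k$ singular vectors, whereas the algorithm supplies the \emph{approximate} vectors $\tilde{\U}_k$ from random projection. I would have to re-run the dual set spectral-Frobenius argument on the approximate basis, showing that the spectral and Frobenius guarantees on the chosen subset still produce the $\bigl(1+\frac{1}{(1-\sqrt{k/r_1})^2}\bigr)$ factor, that the quantity actually being sparsified is $\|\A-\tilde{\U}_k\tilde{\U}_k^T\A\|_F^2$ rather than $\|\A-\A_k\|_F^2$, and that the random-projection error enters only through the multiplicative $(1+\epsilon_0)$ after taking expectations. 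Absorbing both the $(1+\epsilon_0)$ and the sparsification factor into a single $(1+o(1))$ while preserving the leading constant $2$ is the delicate bookkeeping that underlies the tighter row count.
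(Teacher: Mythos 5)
Your proposal follows essentially the same route as the paper's proof: randomized SVD on the transpose, dual set spectral-Frobenius sparsification to get $\R_1$, the generalized adaptive sampling bound of Theorem~\ref{thm:adaptive_bound} for $\R_2$, and the identical parameter choices $r_1 = \Theta(k\epsilon^{-2/3})$, $\epsilon_0 = \Theta(\epsilon^{2/3})$, $r_2 \approx 2\rho/\epsilon$. The ``obstacle'' you flag (sparsification with approximate rather than exact singular vectors) is exactly what the paper's Lemma~\ref{lem:fast_fro_norm_construction} (Theorem~4 of \citealt{boutsidis2011NOC}) already packages, so no re-derivation is needed.
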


Note that Lemma~\ref{prop:fast_column_selection}, i.e., Theorem~5 of \citet{boutsidis2011NOC}, is a special case of  Theorem~\ref{thm:modified_fast_row_selection}
when $\C = \A_k$. Based on Lemma~\ref{prop:fast_column_selection} and Theorem~\ref{thm:modified_fast_row_selection},
we have the main theorem for the fast \CUR algorithm as follows.

\begin{theorem} [The Fast \CUR Algorithm] \label{cor:fast_cur}
Given a matrix $\A \in \RBmn$ and a positive integer $k \ll \min\{m,n\}$,
the fast \CUR algorithm described in Algorithm~\ref{alg:fast_cur} randomly selects $c = \frac{2k}{\epsilon} (1 + o(1))$ columns of $\A$ to construct $\C\in \RB^{m\times c}$
with the near-optimal column selection algorithm of Lemma~\ref{prop:fast_column_selection},
and then selects $r=\frac{2c}{\epsilon} (1 + o(1))$ rows of $\A$ to construct $\R \in \RB^{r\times n}$
with the fast row selection algorithm of Theorem~\ref{thm:modified_fast_row_selection}.
Then we have
\begin{equation}
\EB \| \A - \C \U \R \|_F = \EB \| \A - \C (\C^\dag \A \R^\dag) \R \|_F \leq (1+ \epsilon) \|\A - \A_k\|_F \textrm{.} \nonumber
\end{equation}
Moreover, the algorithm runs in time $\OM \Big(mnk \epsilon^{-2/3} + (m+n)k^3 \epsilon^{-2/3} + mk^2 \epsilon^{-2} + nk^2\epsilon^{-4} \Big)$.
\end{theorem}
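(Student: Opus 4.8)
The plan is to bound the full \CUR error by chaining the two stages: first apply the column-selection guarantee of Lemma~\ref{prop:fast_column_selection} to control $\|\A - \C\Cmp\A\|_F$, then apply the row-selection guarantee of Theorem~\ref{thm:modified_fast_row_selection} to control the additional error introduced by projecting onto the selected rows. Since the returned intersection matrix is $\U = \Cmp \A \R^\dag$, the reconstruction is exactly $\C \U \R = \C\Cmp \A \R^\dag \R$, so the quantity to bound is $\EB \|\A - \C\Cmp\A\R^\dag\R\|_F$, which is precisely the object both theorems address.

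First I would run Stage~1. By Lemma~\ref{prop:fast_column_selection}, selecting $c = \frac{2k}{\epsilon_1}(1+o(1))$ columns yields $\EB_\C \|\A - \C\Cmp\A\|_F^2 \leq (1+\epsilon_1)\|\A - \A_k\|_F^2$. Then I would condition on $\C$ and invoke Theorem~\ref{thm:modified_fast_row_selection}: with $\rho = \rk(\C) = \rk(\C\Cmp\A)$ and a row budget $r = \frac{2\rho}{\epsilon_2}(1+o(1))$, the row-selection step guarantees
\begin{equation}
\EB_\R \|\A - \C\Cmp\A\R^\dag\R\|_F^2 \leq \|\A - \C\Cmp\A\|_F^2 + \epsilon_2 \|\A - \A_k\|_F^2 \textrm{.} \nonumber
\end{equation}
Taking expectation over $\C$ as well and combining the two bounds gives
\begin{equation}
\EB \|\A - \C\Cmp\A\R^\dag\R\|_F^2 \leq (1+\epsilon_1)\|\A - \A_k\|_F^2 + \epsilon_2 \|\A - \A_k\|_F^2 = (1 + \epsilon_1 + \epsilon_2)\|\A - \A_k\|_F^2 \textrm{.} \nonumber
\end{equation}
Choosing $\epsilon_1$ and $\epsilon_2$ as suitable constant multiples of $\epsilon$ (e.g. each $\Theta(\epsilon)$) makes the factor $(1 + \epsilon_1 + \epsilon_2) \leq (1+\epsilon)^2$, and then Jensen's inequality, $\EB\|\cdot\|_F \leq (\EB\|\cdot\|_F^2)^{1/2}$, converts the squared-norm bound into the claimed $\EB\|\A - \C\U\R\|_F \leq (1+\epsilon)\|\A - \A_k\|_F$.

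I expect the main subtlety to be bookkeeping the row count and accounting for the $\rho$-dependence. Theorem~\ref{thm:modified_fast_row_selection} phrases the row budget in terms of $\rho = \rk(\C)$, whereas Theorem~\ref{cor:fast_cur} wants $r = \frac{2c}{\epsilon}(1+o(1))$; since $\rho \leq c$, the budget $\frac{2c}{\epsilon}(1+o(1)) \geq \frac{2\rho}{\epsilon}(1+o(1))$ suffices, so I must note that using $c$ in place of $\rho$ is a valid (if slightly loose) choice that still meets the hypothesis of the row-selection theorem. The running time follows by adding the cost of Lemma~\ref{prop:fast_column_selection}, namely $\OM((mnk + nk^3)\epsilon^{-2/3})$, to the cost of Theorem~\ref{thm:modified_fast_row_selection} applied with the $\rho = \OM(k/\epsilon)$ rank of $\C$, plus the adaptive-sampling overhead; substituting $c = \OM(k/\epsilon)$ into the per-stage costs and collecting terms produces the stated $\OM(mnk\epsilon^{-2/3} + (m+n)k^3\epsilon^{-2/3} + mk^2\epsilon^{-2} + nk^2\epsilon^{-4})$ bound. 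The only real care needed is to verify that the rescaling of $\epsilon$ into $\epsilon_1,\epsilon_2$ does not alter these complexity exponents, which it does not since it only changes hidden constants.
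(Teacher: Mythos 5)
Your proposal is correct and follows essentially the same route as the paper: condition on $\C$, apply Theorem~\ref{thm:modified_fast_row_selection} to the inner expectation, then Lemma~\ref{prop:fast_column_selection} to the outer one, and finish with $1+2\epsilon \le (1+\epsilon)^2$ and Jensen's inequality (your $\epsilon_1,\epsilon_2$ split is just the paper's choice $\epsilon_1=\epsilon_2=\epsilon$ in disguise), with the same observation that $\rk(\C\Cmp\A)=\rk(\C)\le c$ validates the hypotheses of the row-selection theorem. The one bookkeeping item you leave implicit is that the $\OM(mk^2\epsilon^{-2}+nk^2\epsilon^{-4})$ terms in the running time come specifically from computing the Moore--Penrose inverses of $\C$ and $\R$ (costing $\OM(mc^2)$ and $\OM(nr^2)$), not from the per-stage selection costs.
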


Since $k, c, r \ll \min\{m,n\}$ by the assumption,
so the time complexity of the fast \CUR algorithm is lower than that of the SVD of $\A$.
This is the main reason why we call it the fast \CUR algorithm.

Another advantage of this algorithm is that it can avoid loading the whole $m\times n$ data matrix $\A$ into main memory.
None of three steps --- the randomized SVD, the dual set sparsification algorithm,
and the adaptive sampling --- requires loading the whole of $\A$ into memory.
The most memory-expensive operation throughout the fast \CUR Algorithm is computing the Moore-Penrose inverses of $\C$ and $\R$,
which requires maintaining an $m\times c$ matrix or an $r \times n$ matrix in memory.
In contrast, the subspace sampling algorithm requires loading the whole matrix into memory to compute its truncated SVD.

\section{Empirical Analysis} \label{sec:experiments}

In this section we conduct empirical comparisons among the relative-error
\CUR algorithms on several datasets.
We report the relative-error ratio and the running time of each algorithm on each data set.
The relative-error ratio is defined by
\[
\textrm{Relative-error ratio} = \frac{\| \A - \C \U \R\|_F}{\| \A - \A_k \|_F},
\]
where $k$ is a specified target rank.

\subsection{Datasets}

We implement experiments on five datasets, including natural images, biology data, and bags of words.
Table~\ref{tab:datasets}  briefly summarizes  some information of the datasets.
The Redrock and Edinburgh~\citep{agarwala2007efficient} are two large size natural images.
Arcene and Dexter are both from the UCI datasets~\citep{uci2010}.
Arcene is a biology dataset with $900$ instances and $10000$ attributes.
Dexter is a bag of words dataset with a $20000$-vocabulary and $2600$ documents.
PicasaWeb image dataset~\citep{wang2012data} contains $6.8$ million PicasaWeb images.
We use the HMAX features~\citep{serre07robustobject} and the SIFT features~\citep{lowe1999sift} of the first $50000$ images;
the features provided by \citet{wang2012data} are all of $3000$ dimensions.
Each dataset is actually represented as a data matrix, upon which we apply the \CUR algorithms.

When the data matrices become very large, e.g., say $8K \times 3K$,
the truncated SVD and the standard SVD are both infeasible in our experiment environment,
and so is the subspace sampling algorithm.
Therefore we do not conduct experiments on larger data matrices.
In contrast, our fast \CUR algorithm actually works well even for $30K \times 3K$ matrices.

\begin{table}[t]\setlength{\tabcolsep}{0.3pt}
\caption{A summary of the datasets.}
\label{tab:datasets}
\begin{center}
\begin{small}
\begin{tabular}{c c c c c }
\hline
	{\bf Dataset}	& 	{\bf Type}		&			{\bf size}		&		{\bf Source}	\\
\hline
		Redrock		&	natural image	&	$18000\times 4000$ 	&	\texttt{http://www.agarwala.org/efficient\_gdc/}	\\
		Edinburgh	&	natural image	&	$16500\times 1800$	&	\texttt{http://www.agarwala.org/efficient\_gdc/}	\\
		Arcene		&	biology			&	$10000\times 900$		&	\texttt{http://archive.ics.uci.edu/ml/datasets/Arcene}		\\
		Dexter		&	bag of words	&	$20000\times 2600$	&	\texttt{http://archive.ics.uci.edu/ml/datasets/Dexter}	\\
		PicasaWeb	&     features 	    &	$50000\times 3000$	&	\texttt{https://sites.google.com/site/picasawebdataset}	\\
\hline
\end{tabular}
\end{small}
\end{center}
\end{table}

\subsection{Setup}

We implement the subspace sampling algorithm and our fast CUR algorithm in MATLAB 7.10.0.
We do not compare with the linear-time \CUR algorithm for the following reason.
There is an implicit projection operation in the linear-time \CUR algorithm, so the result satisfies $\rk(\C \U \R) \leq k$.
However, this inequality does not hold for the subspace sampling algorithm and the fast CUR algorithm.
Thus, comparing the construction error among the three \CUR algorithm is very unfair for the linear-time \CUR algorithm.
Actually, the construction error of the linear-time \CUR algorithm is much worse than the other two algorithms.

We conduct experiments on a workstation with $12$ Intel Xeon $3.47$GHz CPUs, $12$GB memory, and Ubuntu 10.04 system.
According to the analysis in \citet{drineas2008cur} and this paper,
$k$, $c$, and $r$ should be integers much less than $m$ and $n$.
For each data set and each algorithm, we set $k = 10$, $20$, or $50$, and $c = \alpha k$, $r = \alpha c$,
where $\alpha$ ranges in each set of experiments.
We repeat each set of experiments for 20 times and report the average and the standard deviation of the error ratios.
The results are depicted  in Figures~\ref{fig:redrock},~\ref{fig:edinburgh},~\ref{fig:arcene},~\ref{fig:dexter},~\ref{fig:hmax}, and \ref{fig:sift}.

\begin{figure*}
\subfigtopskip = 0pt
\begin{center}
\centering
\includegraphics[width=48mm, height=45mm]{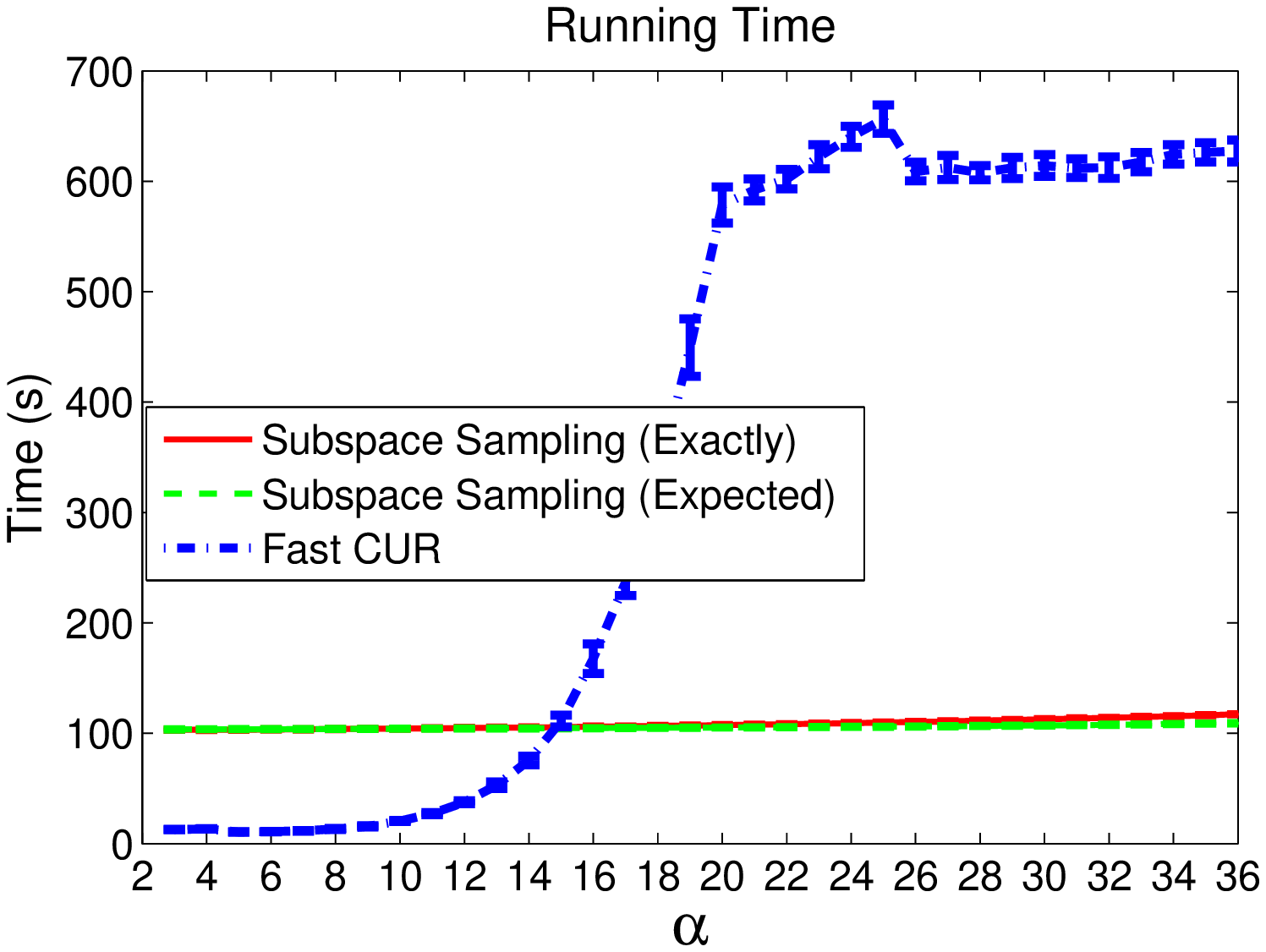}~
\includegraphics[width=48mm, height=45mm]{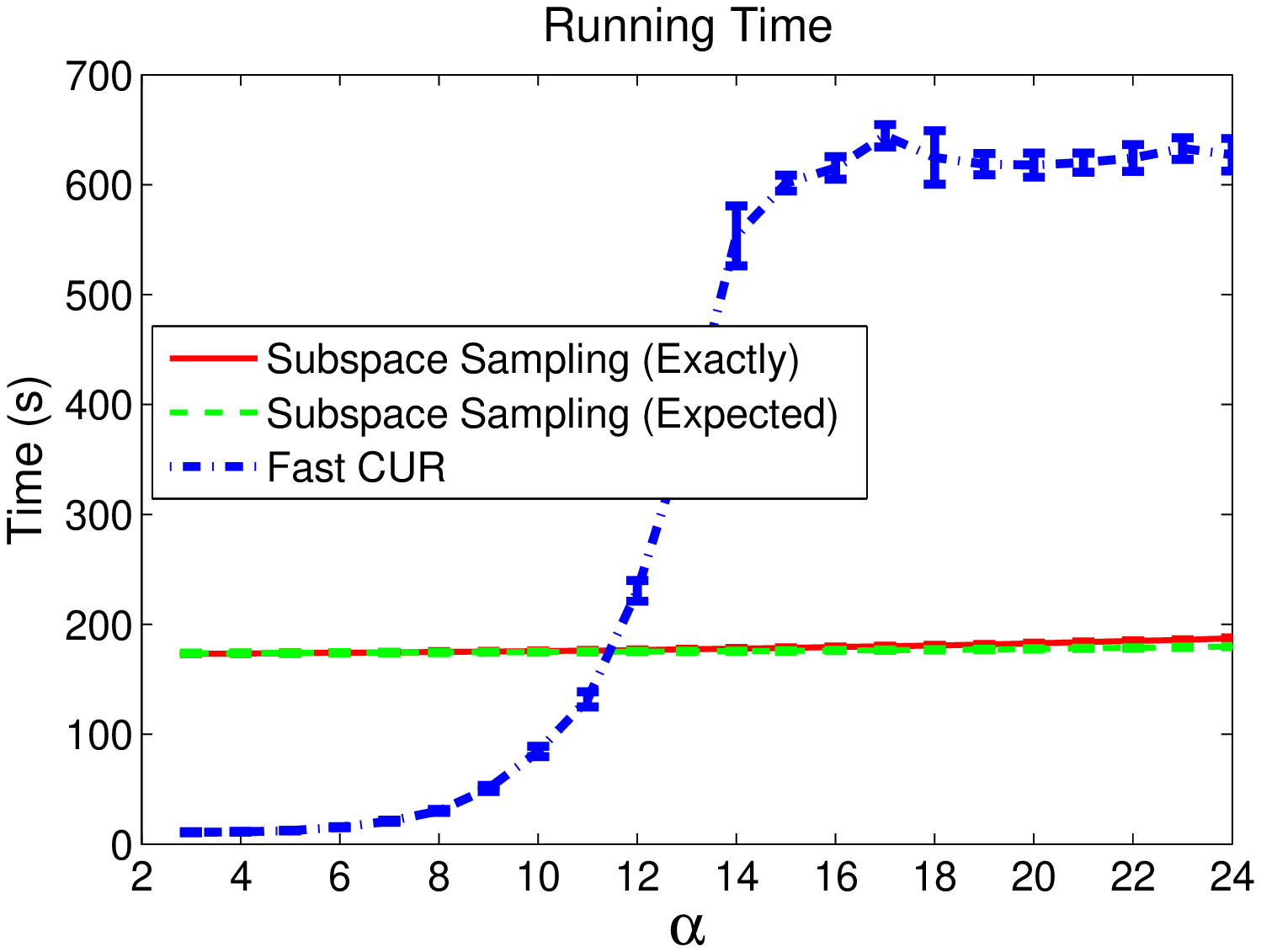}~
\includegraphics[width=48mm, height=45mm]{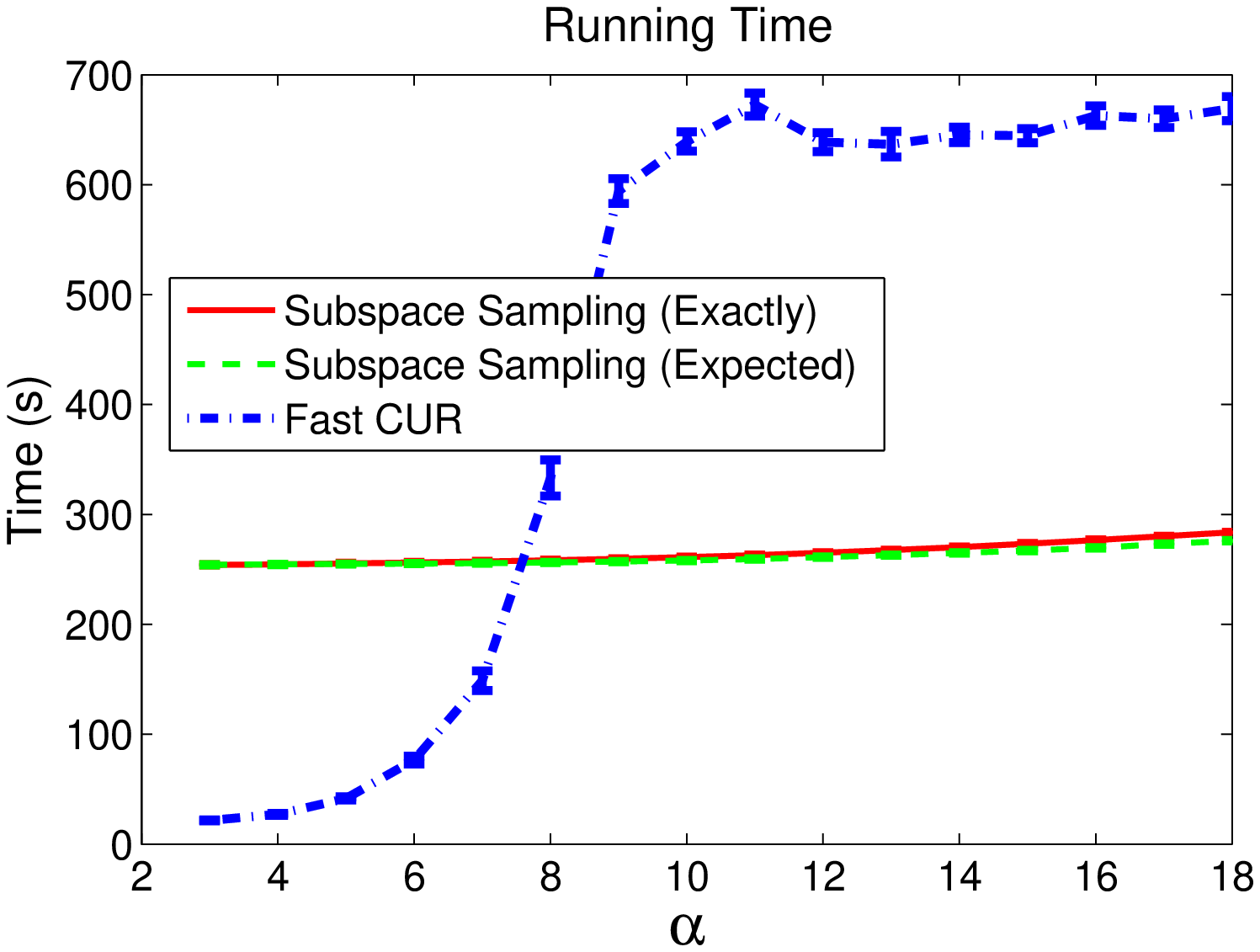} \\
\subfigure[\textsf{$k = 10$, $c=\alpha k$, and $r=\alpha c$.}]{\includegraphics[width=48mm, height=45mm]{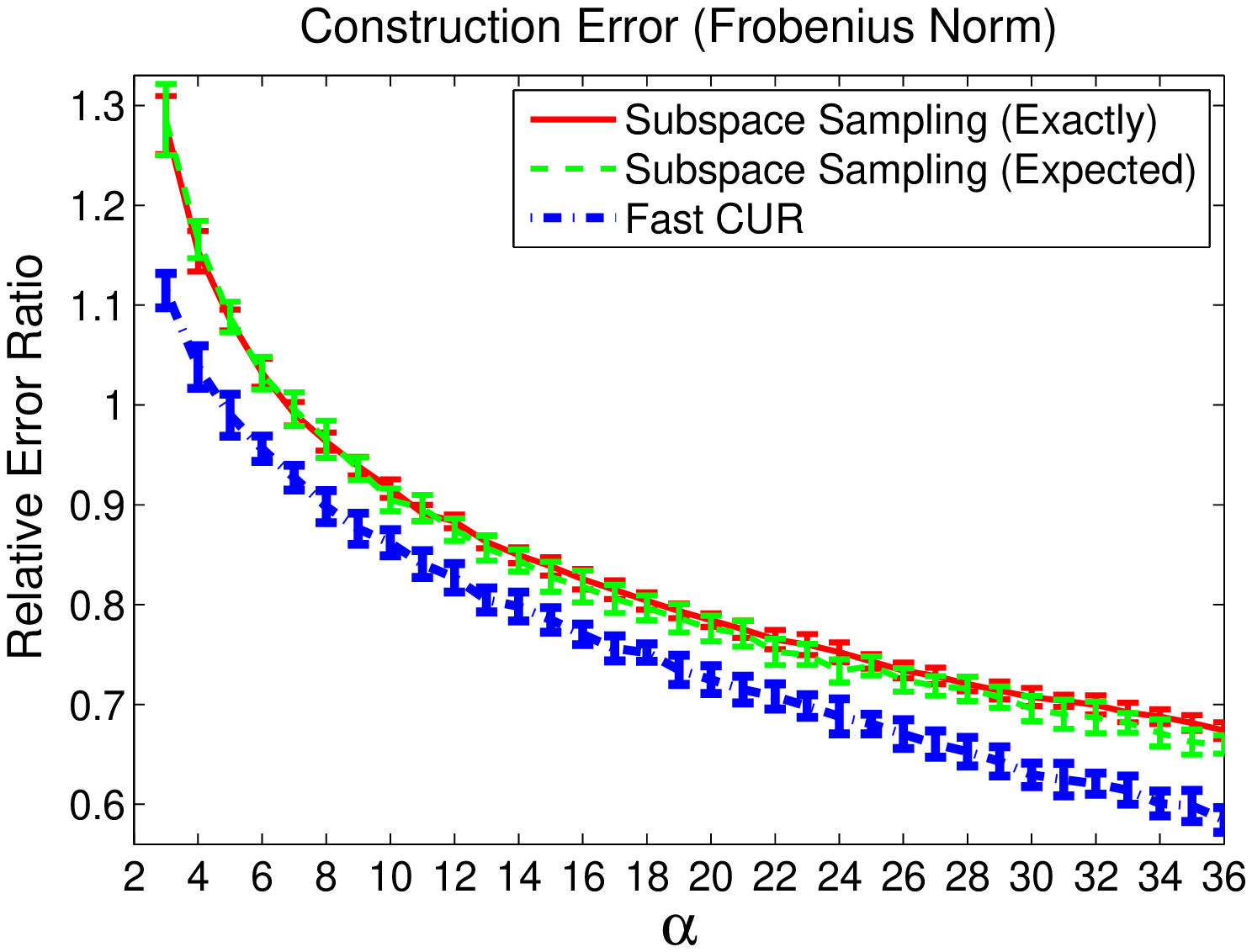}}~
\subfigure[\textsf{$k = 20$, $c=\alpha k$, and $r=\alpha c$.}]{\includegraphics[width=48mm, height=45mm]{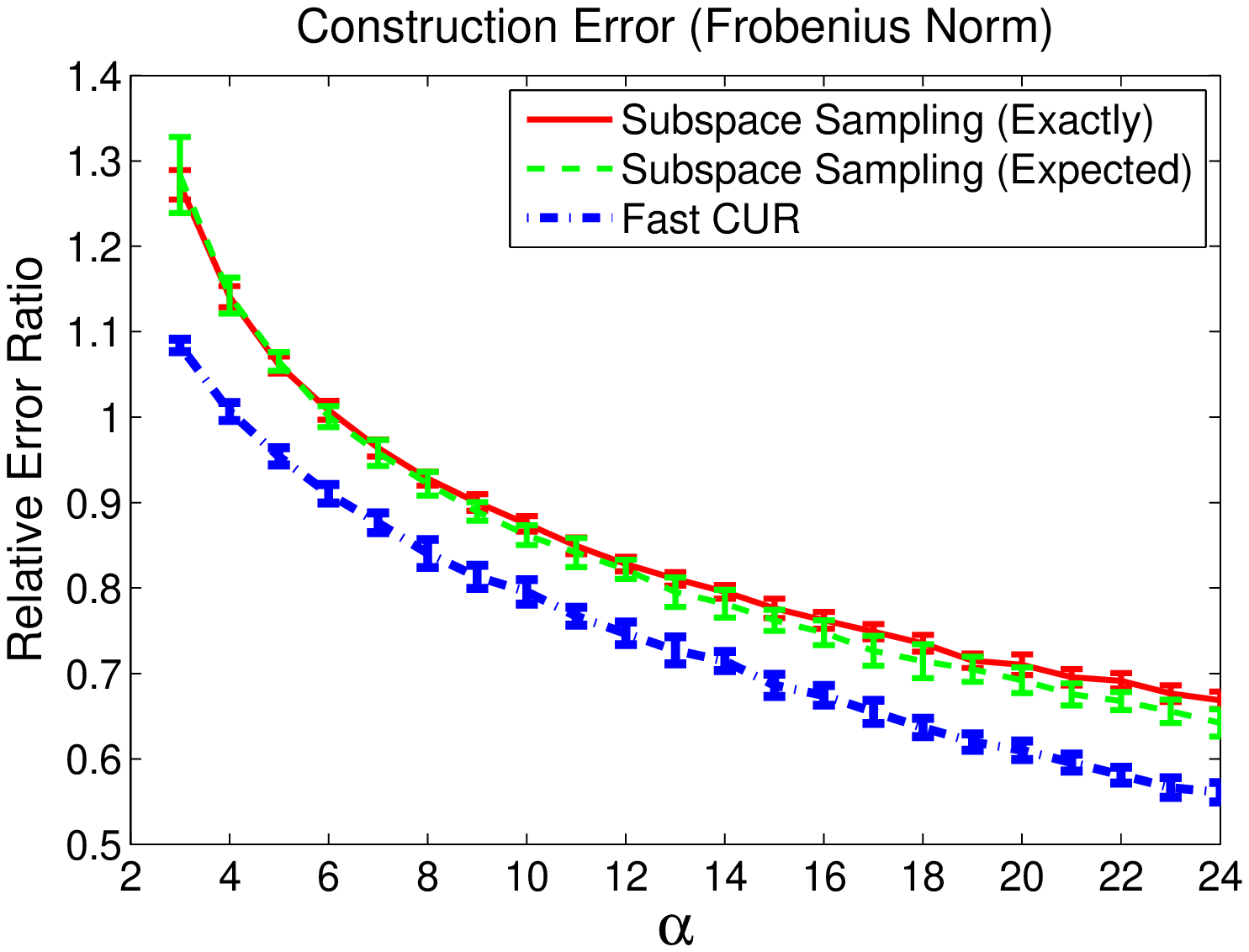}}~
\subfigure[\textsf{$k = 50$, $c=\alpha k$, and $r=\alpha c$.}]{\includegraphics[width=48mm, height=45mm]{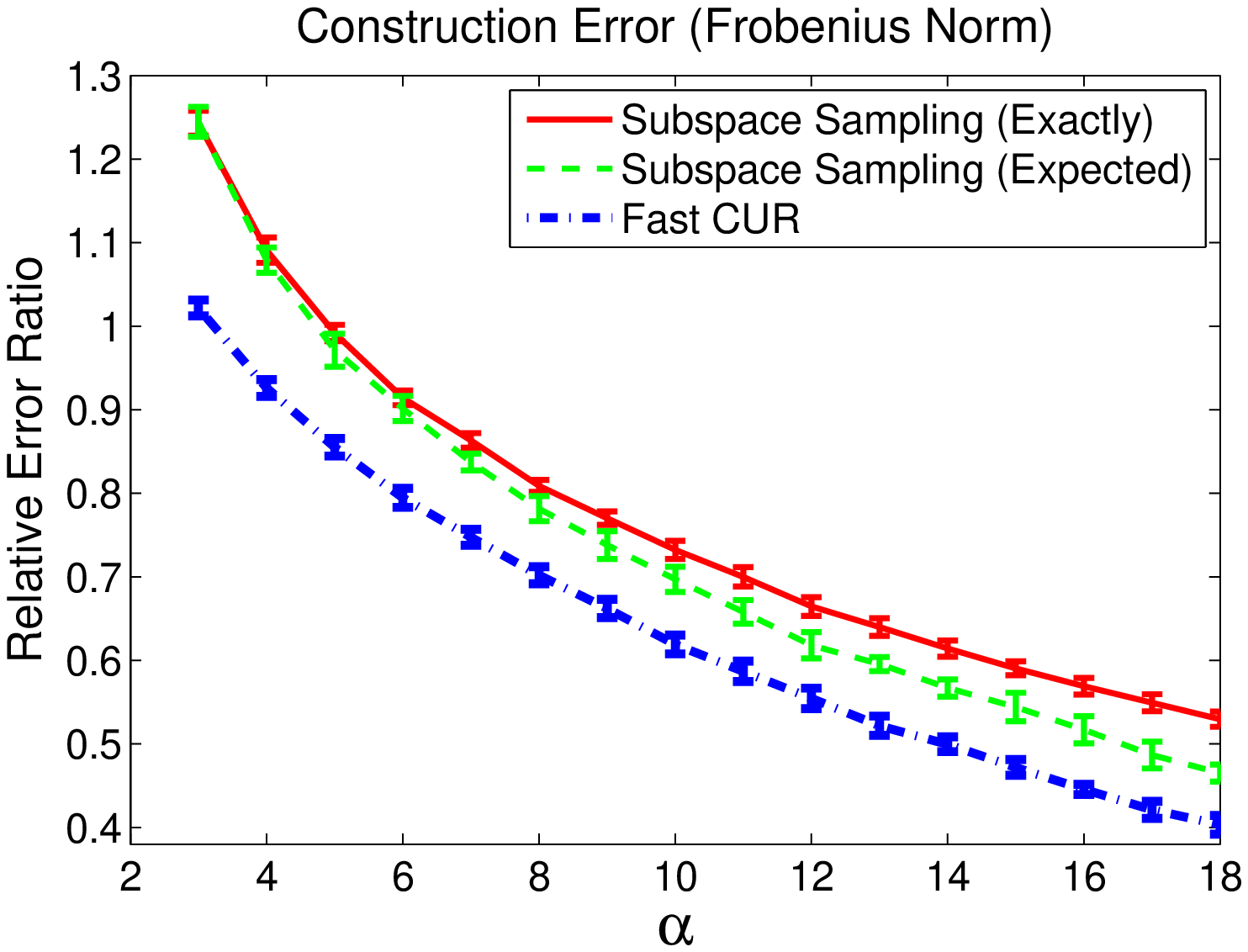}}
\end{center}
   \caption{Empirical results on the Redrock data set.}
\label{fig:redrock}
\end{figure*}

\begin{figure*}
\subfigtopskip = 0pt
\begin{center}
\centering
\includegraphics[width=48mm, height=40mm]{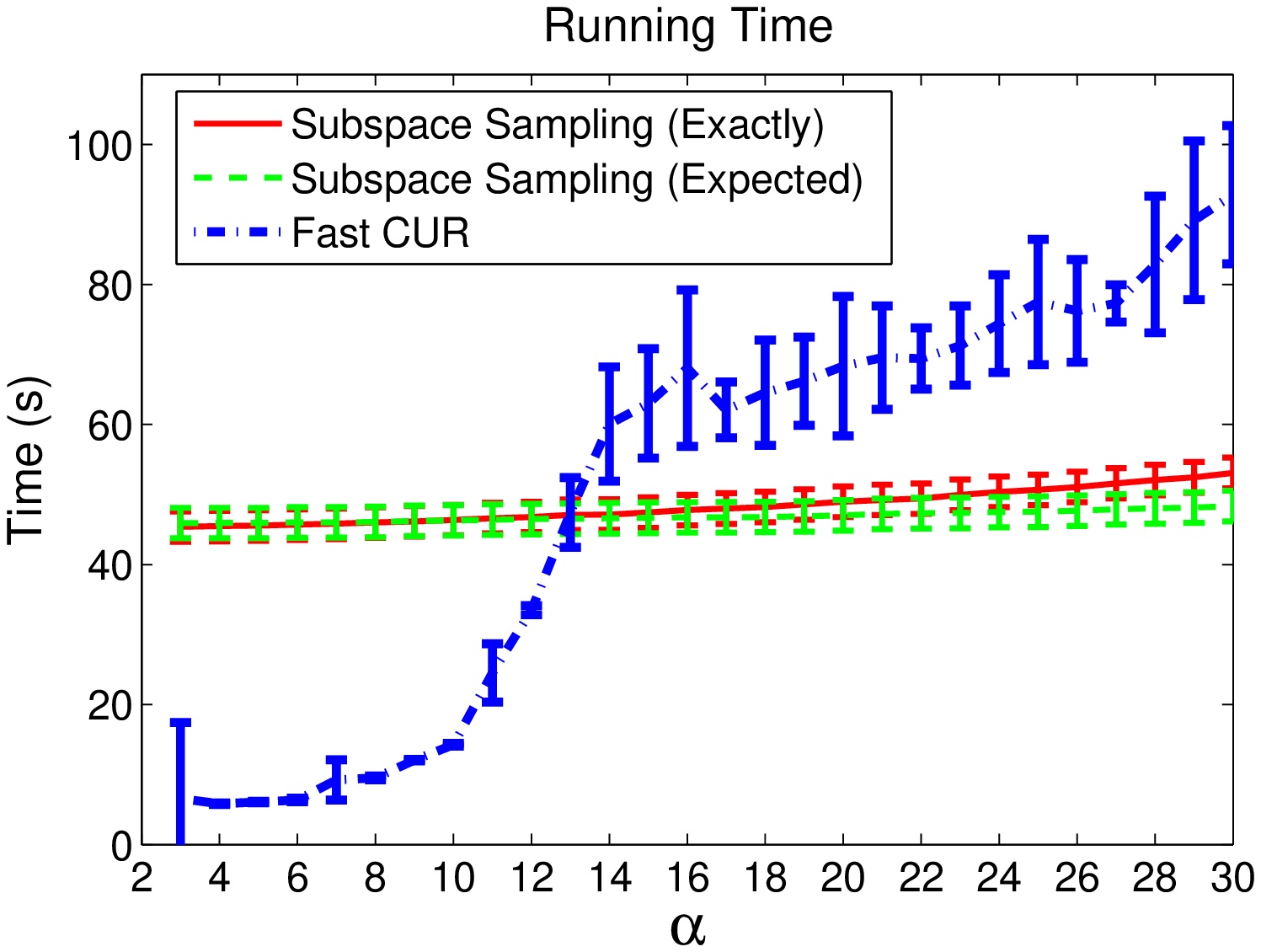}~
\includegraphics[width=48mm, height=40mm]{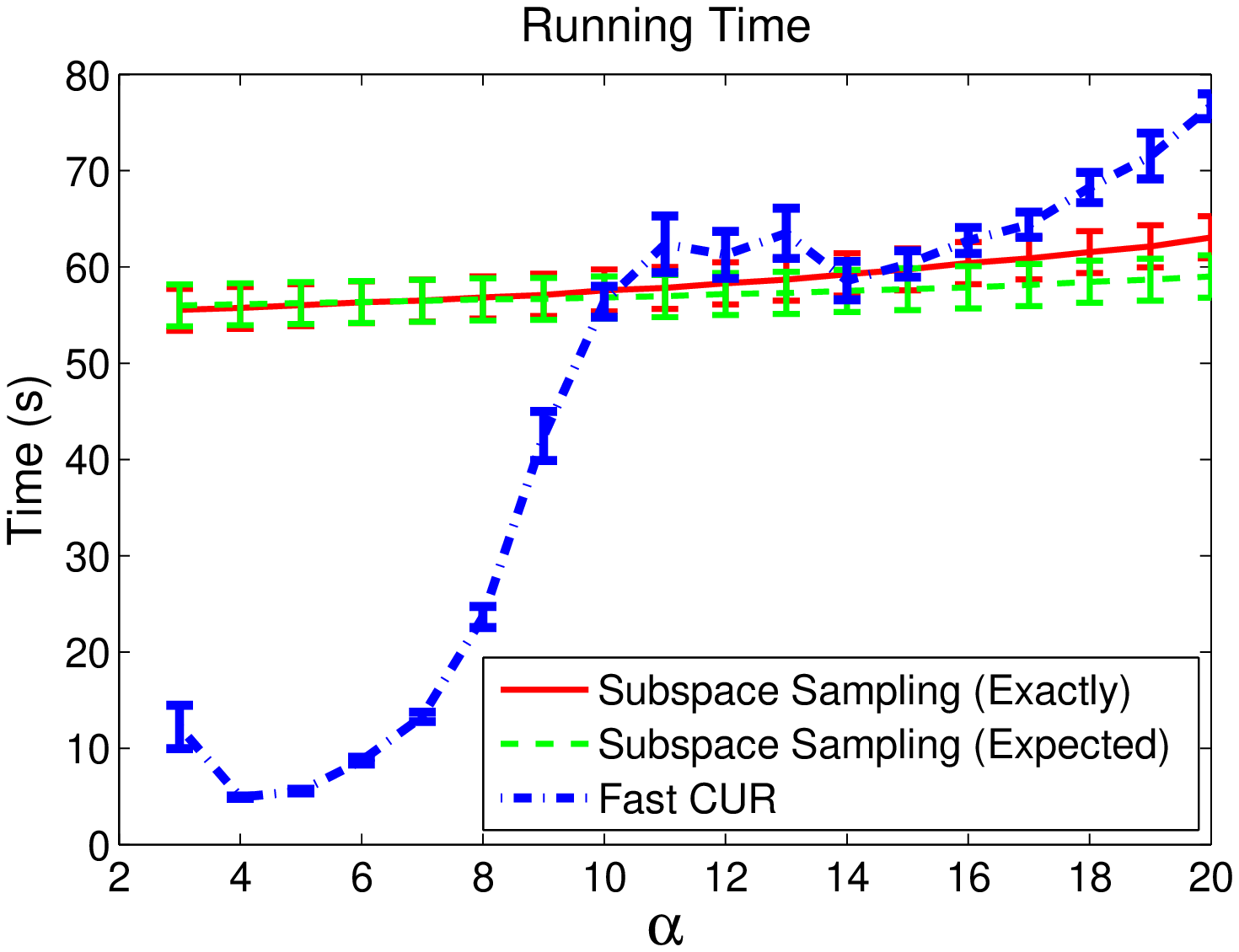}~
\includegraphics[width=48mm, height=40mm]{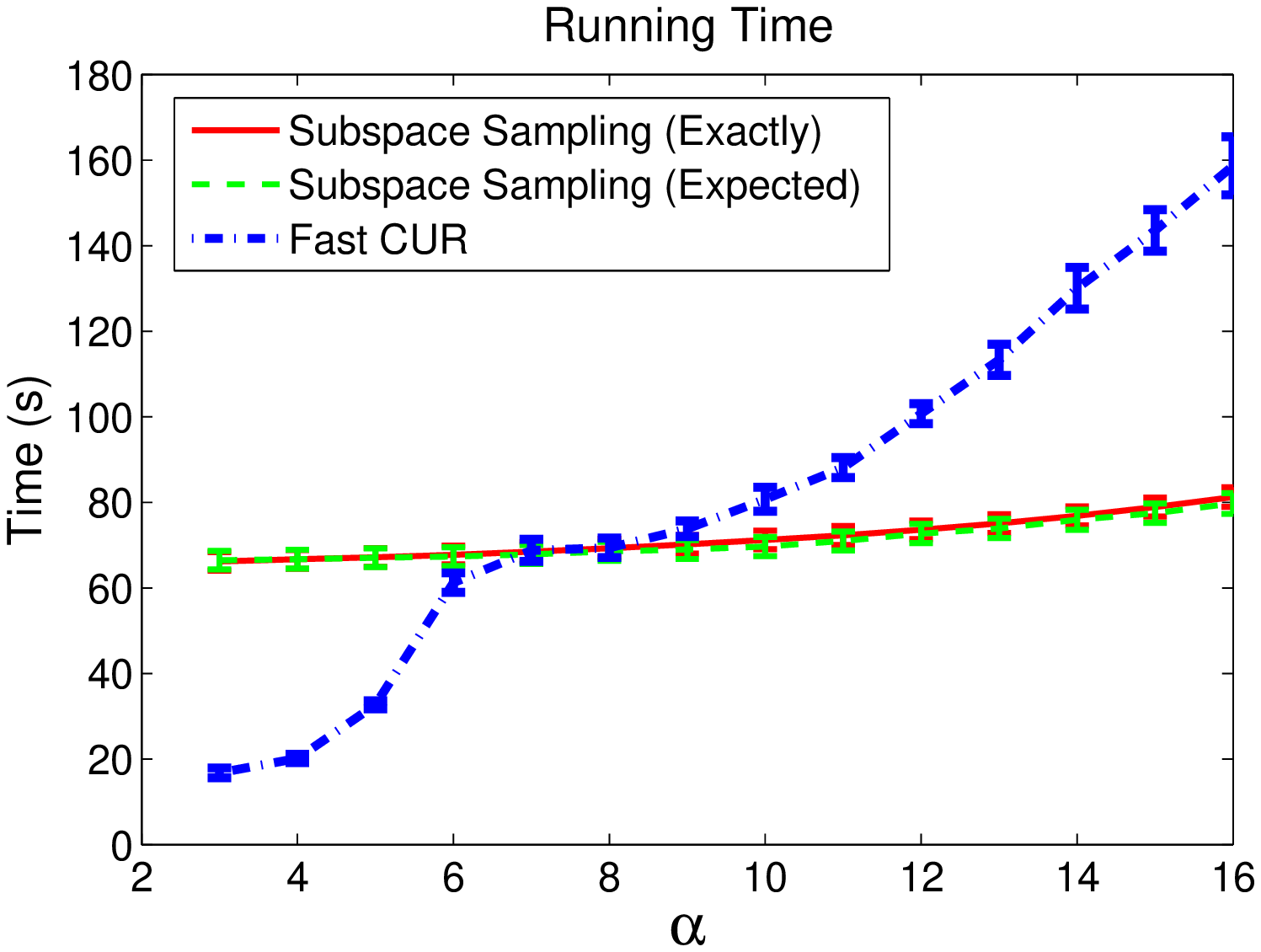} \\
\subfigure[\textsf{$k = 10$, $c=\alpha k$, and $r=\alpha c$.}]{\includegraphics[width=48mm, height=40mm]{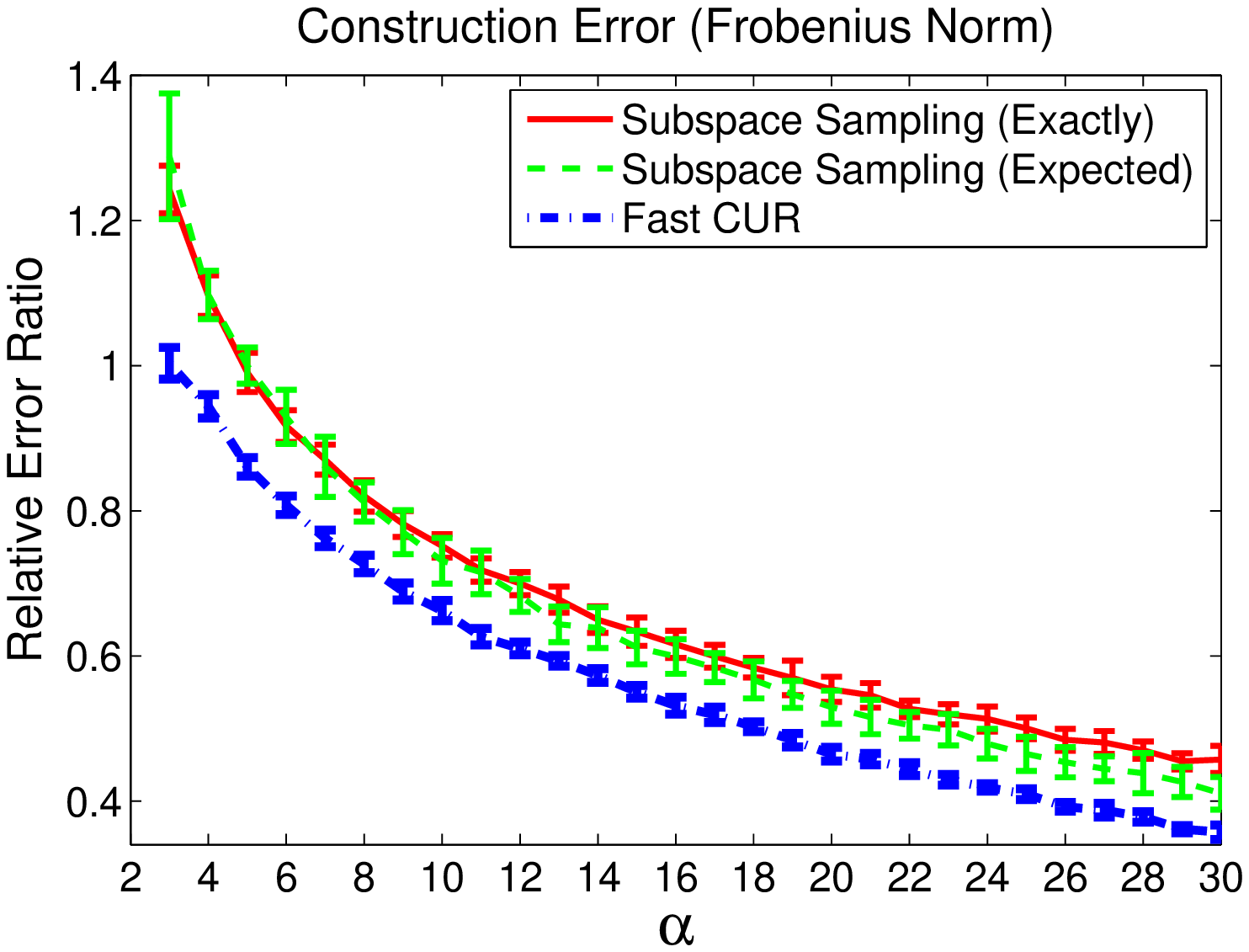}}~
\subfigure[\textsf{$k = 20$, $c=\alpha k$, and $r=\alpha c$.}]{\includegraphics[width=48mm, height=40mm]{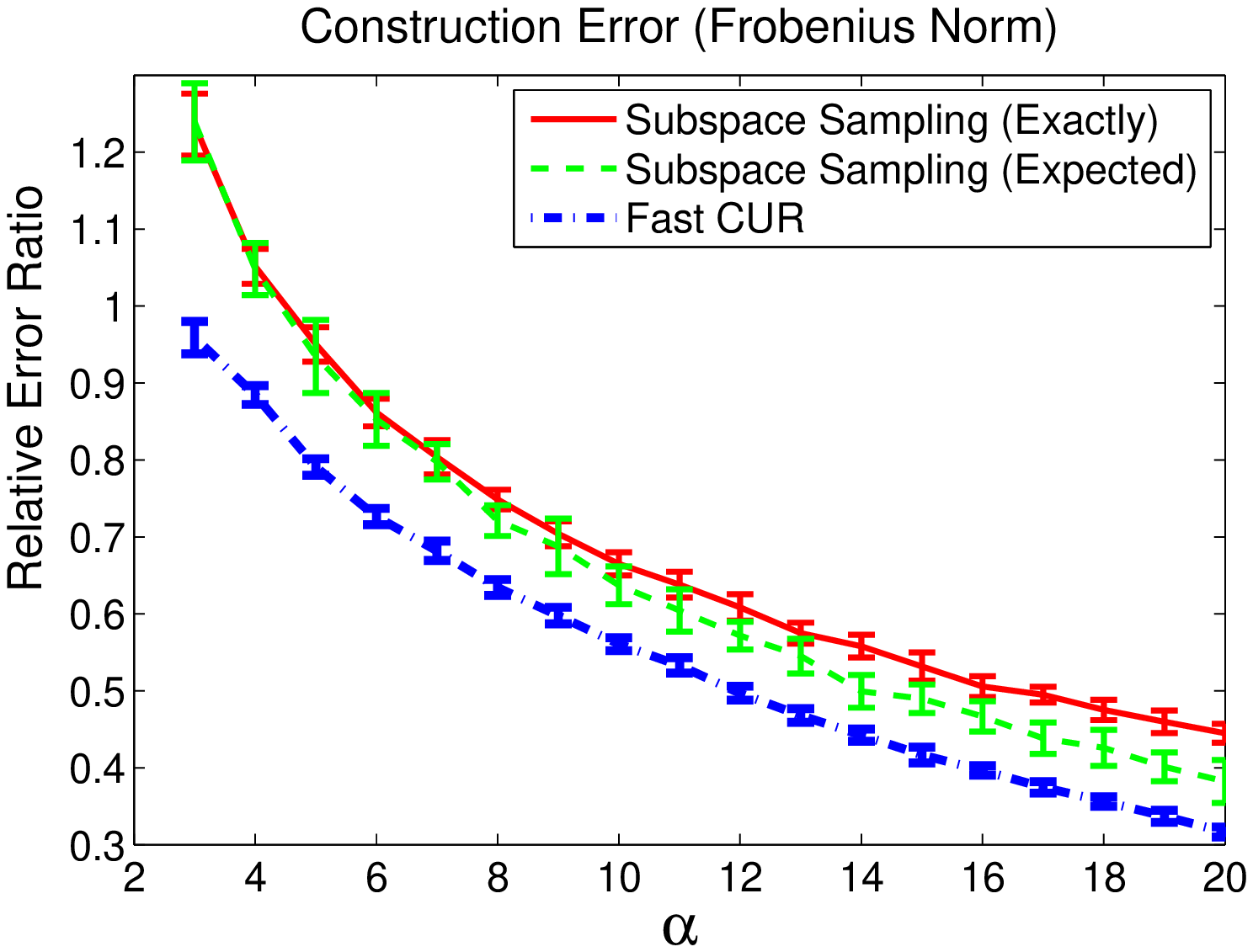}}~
\subfigure[\textsf{$k = 50$, $c=\alpha k$, and $r=\alpha c$.}]{\includegraphics[width=48mm, height=40mm]{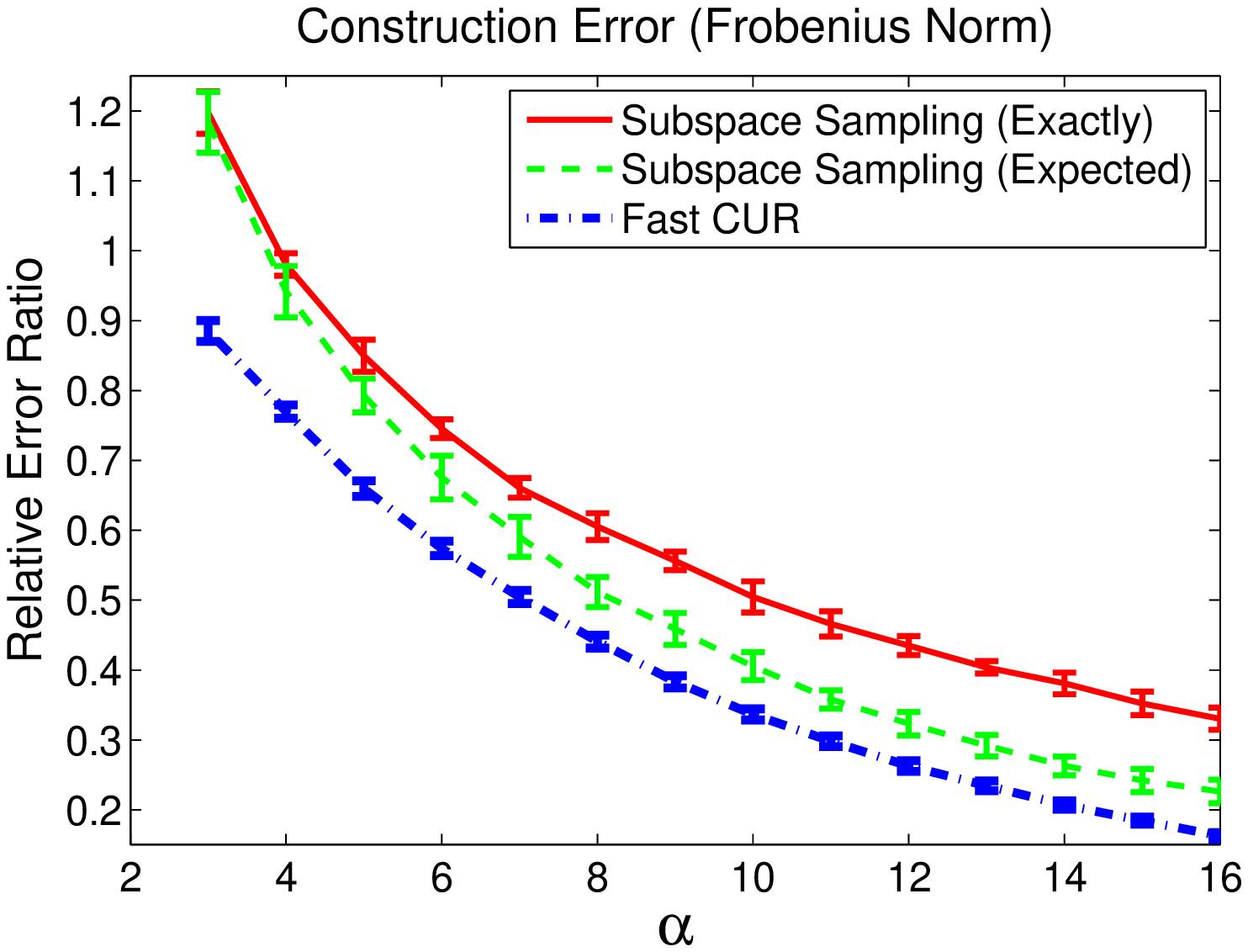}}
\end{center}
   \caption{Empirical results on the Edinburgh data set.}
\label{fig:edinburgh}
\end{figure*}

\begin{figure*}
\subfigtopskip = 0pt
\begin{center}
\centering
\includegraphics[width=48mm, height=40mm]{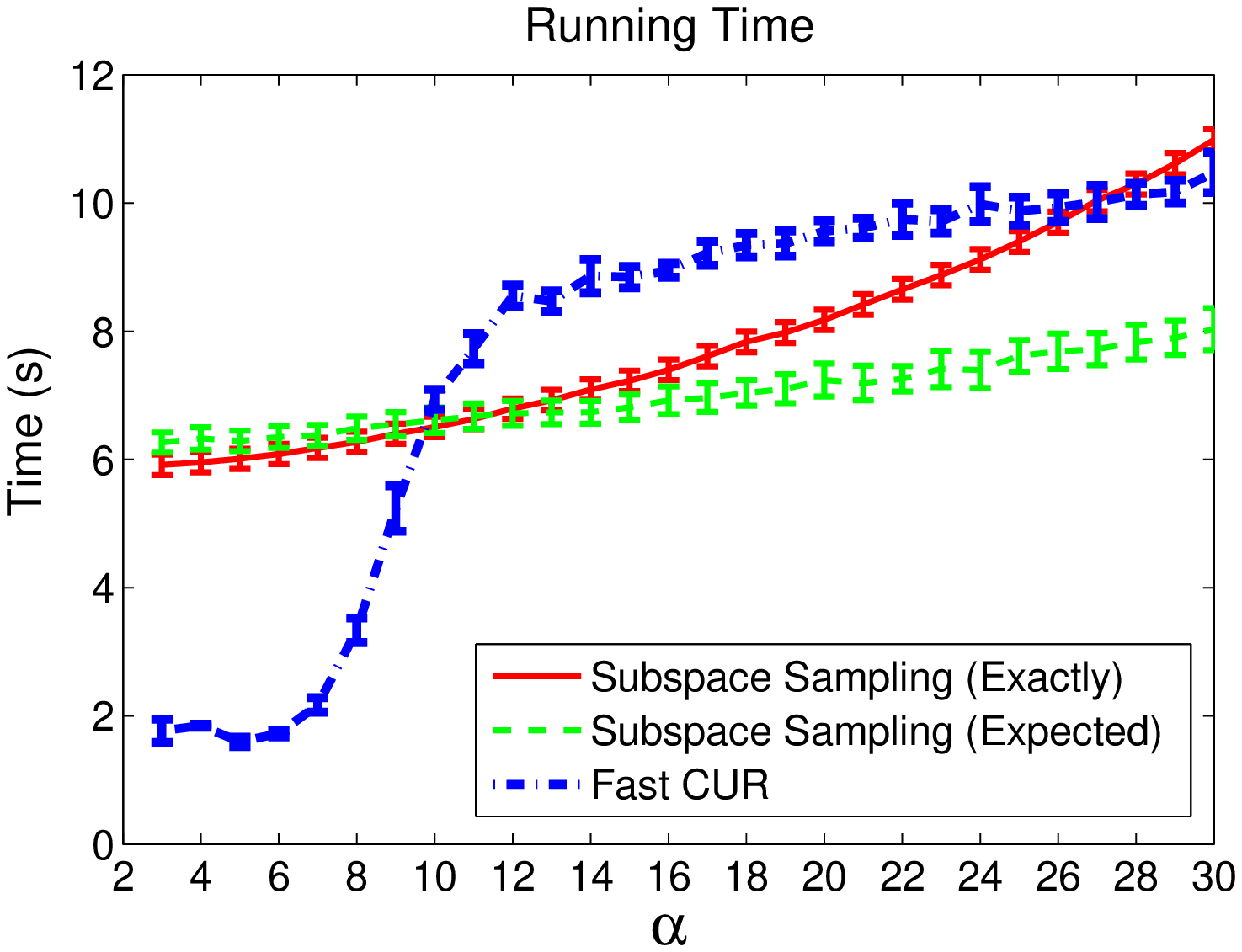}~
\includegraphics[width=48mm, height=40mm]{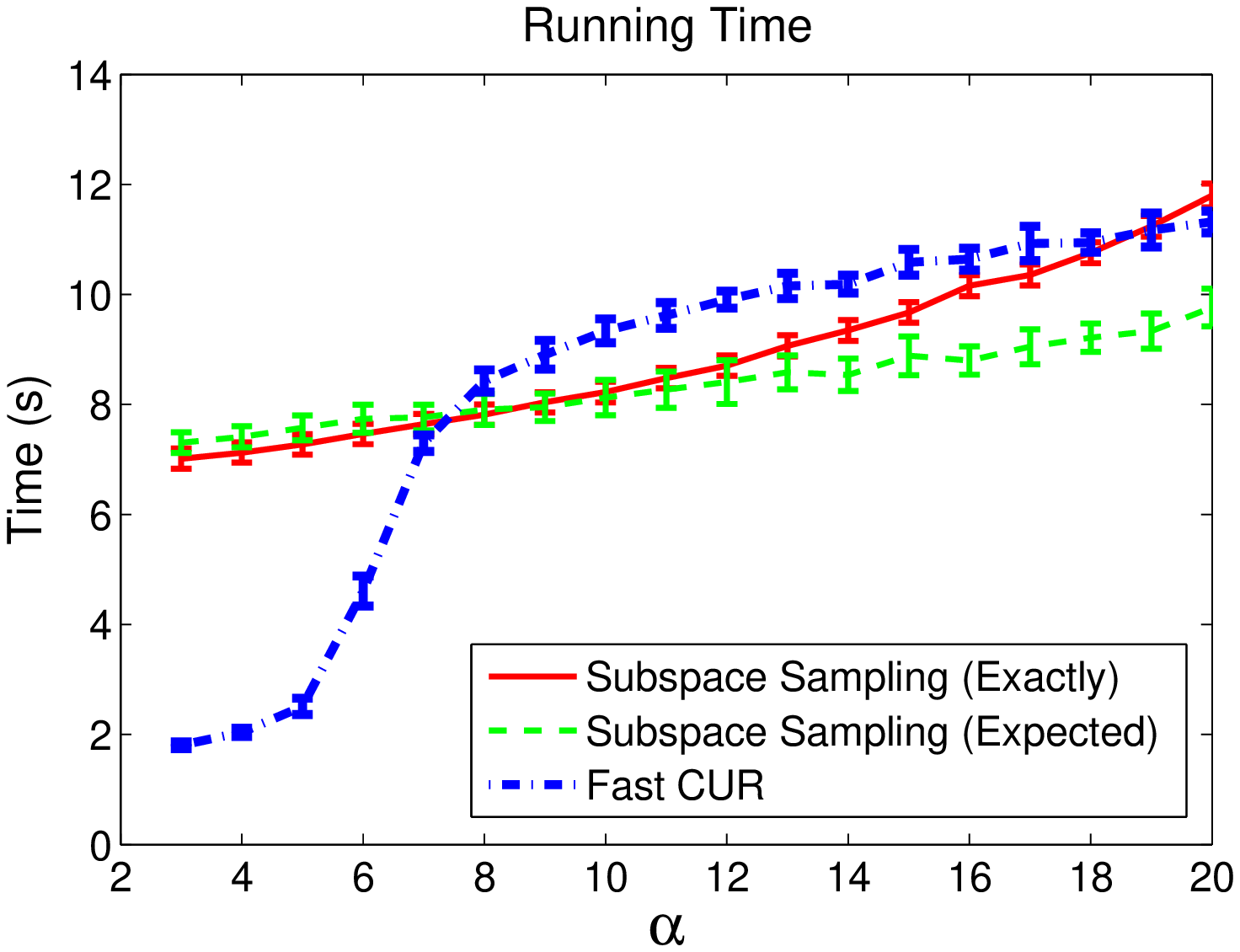}~
\includegraphics[width=48mm, height=40mm]{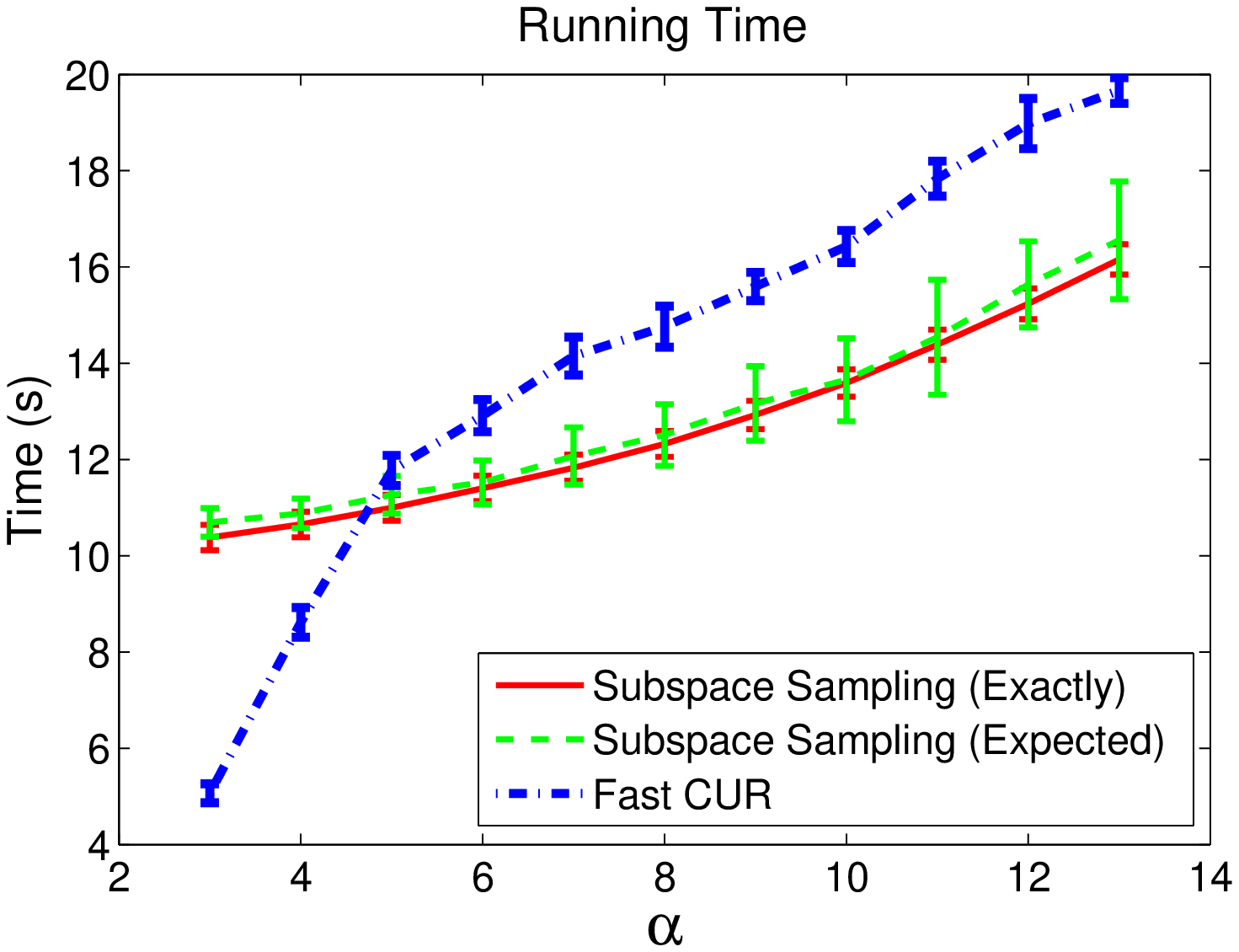} \\
\subfigure[\textsf{$k = 10$, $c=\alpha k$, and $r=\alpha c$.}]{\includegraphics[width=48mm, height=40mm]{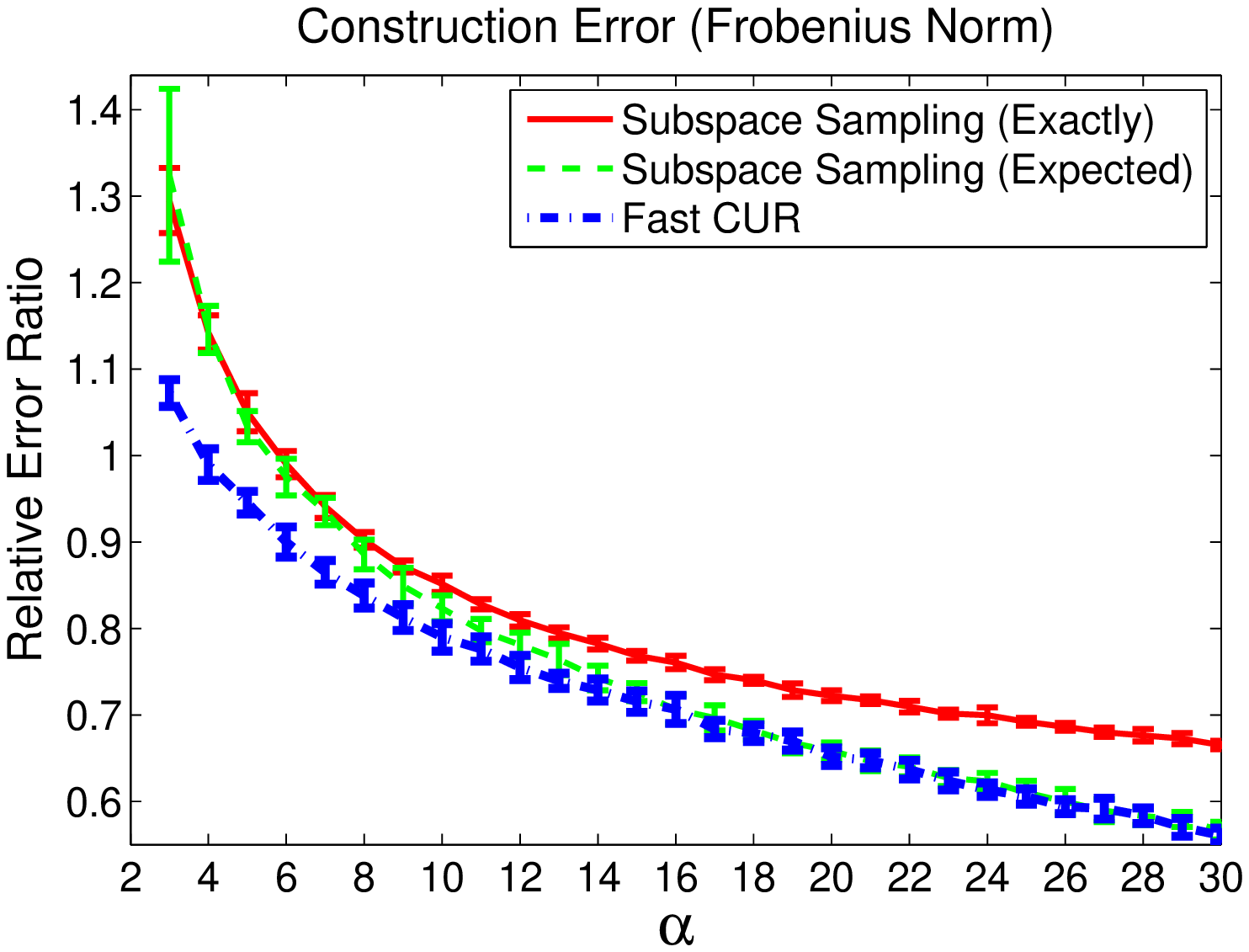}}~
\subfigure[\textsf{$k = 20$, $c=\alpha k$, and $r=\alpha c$.}]{\includegraphics[width=48mm, height=40mm]{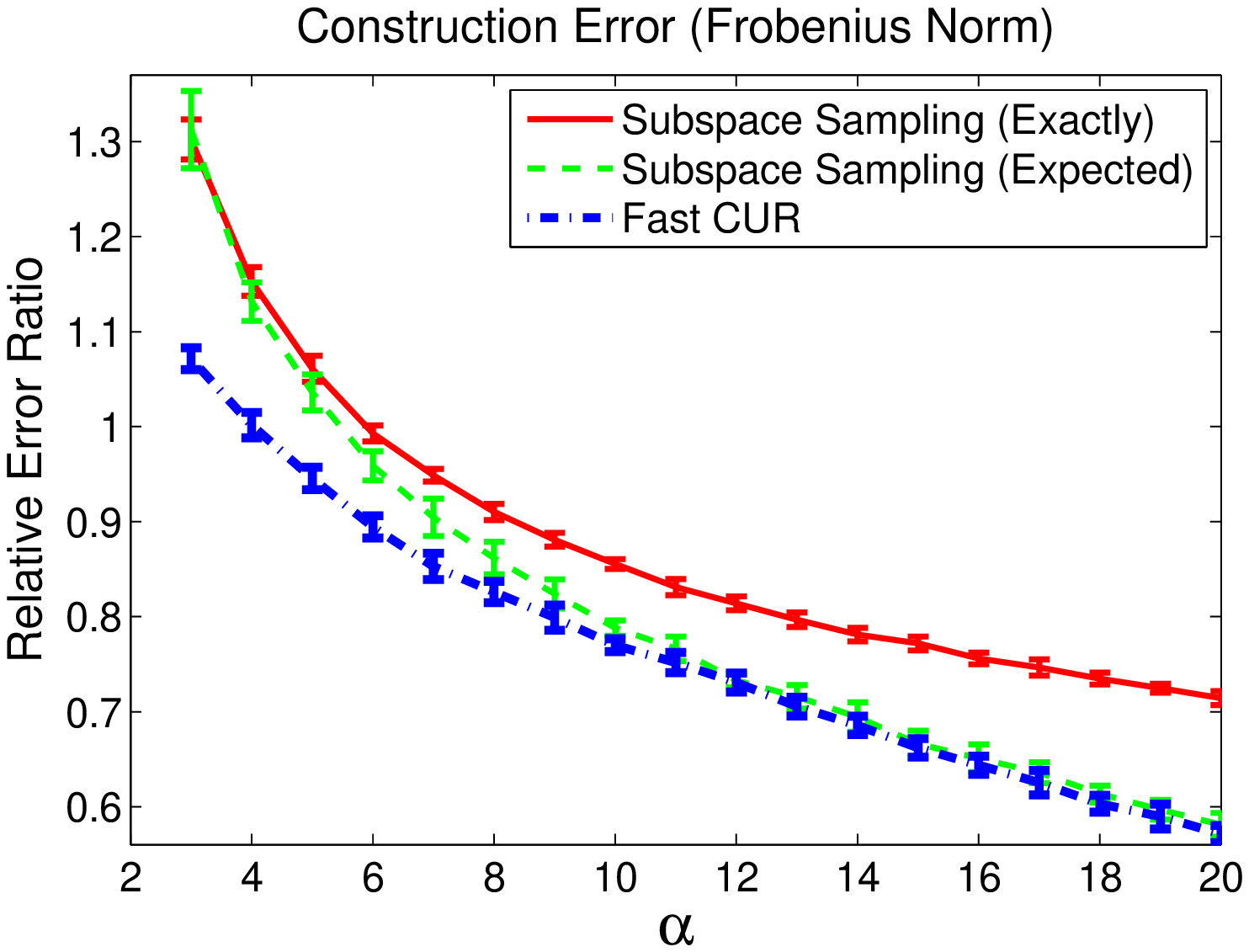}}~
\subfigure[\textsf{$k = 50$, $c=\alpha k$, and $r=\alpha c$.}]{\includegraphics[width=48mm, height=40mm]{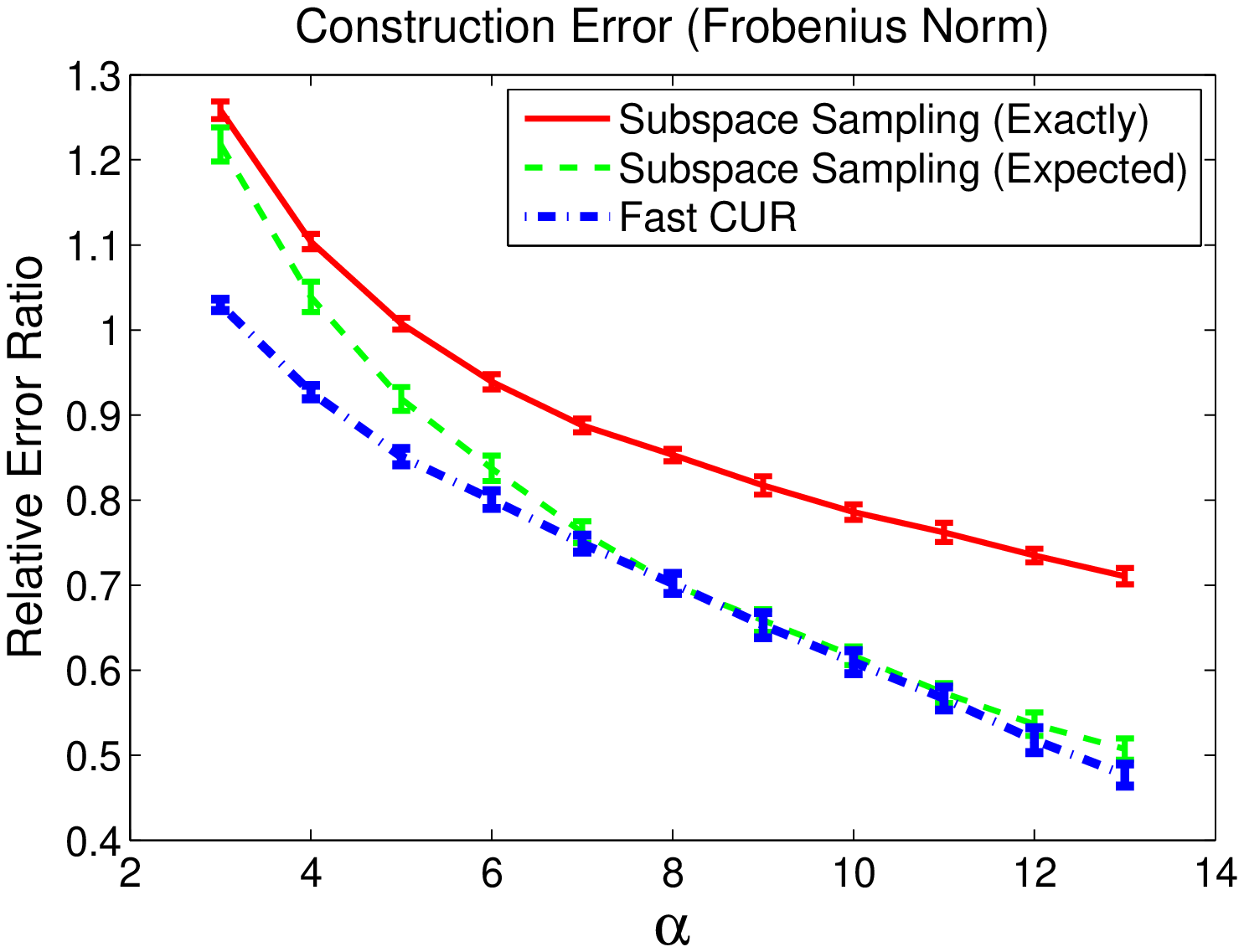}}
\end{center}
   \caption{Empirical results on the Arcene data set.}
\label{fig:arcene}
\end{figure*}

\begin{figure*}
\subfigtopskip = 0pt
\begin{center}
\centering
\includegraphics[width=48mm, height=40mm]{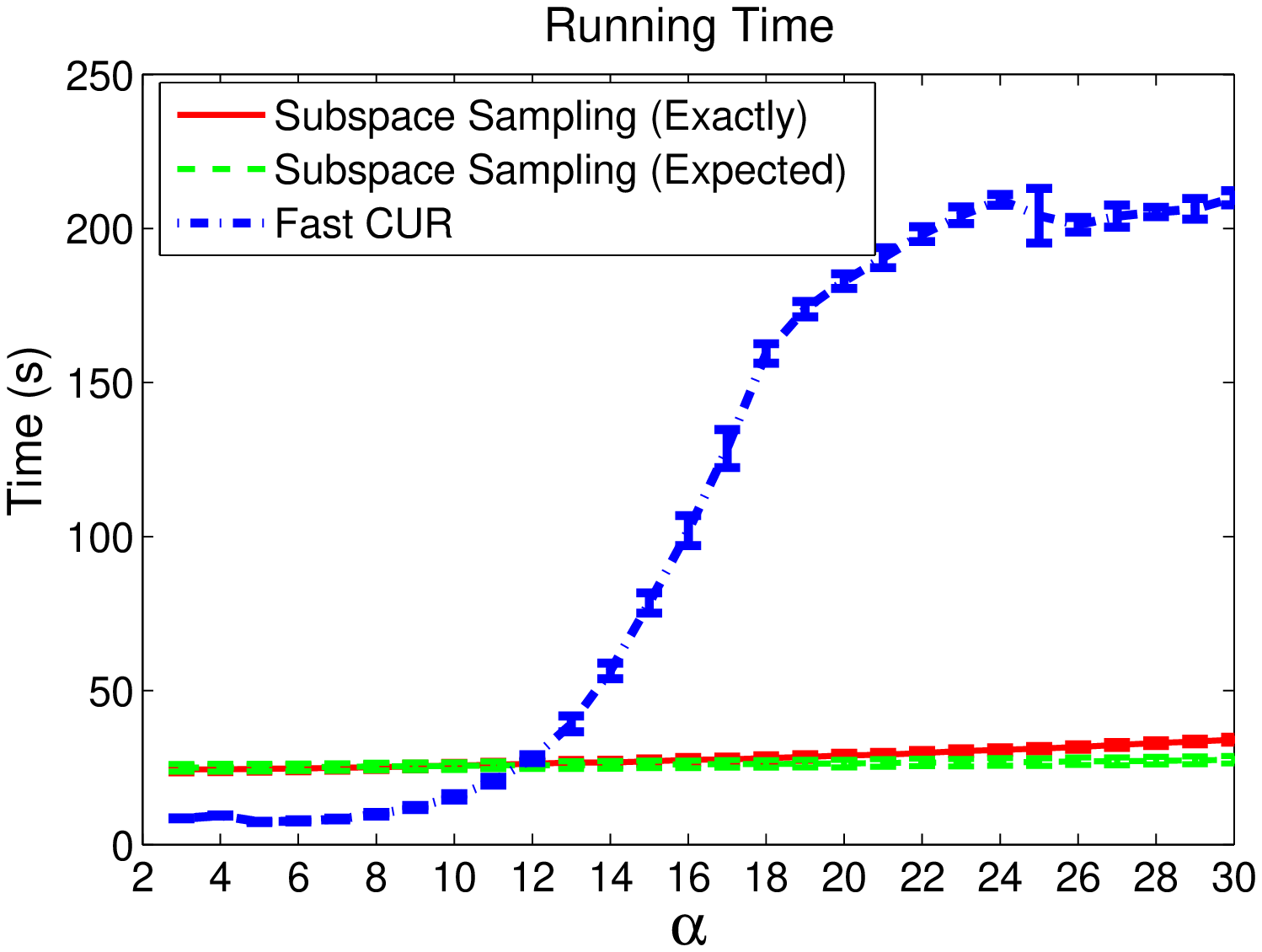}~
\includegraphics[width=48mm, height=40mm]{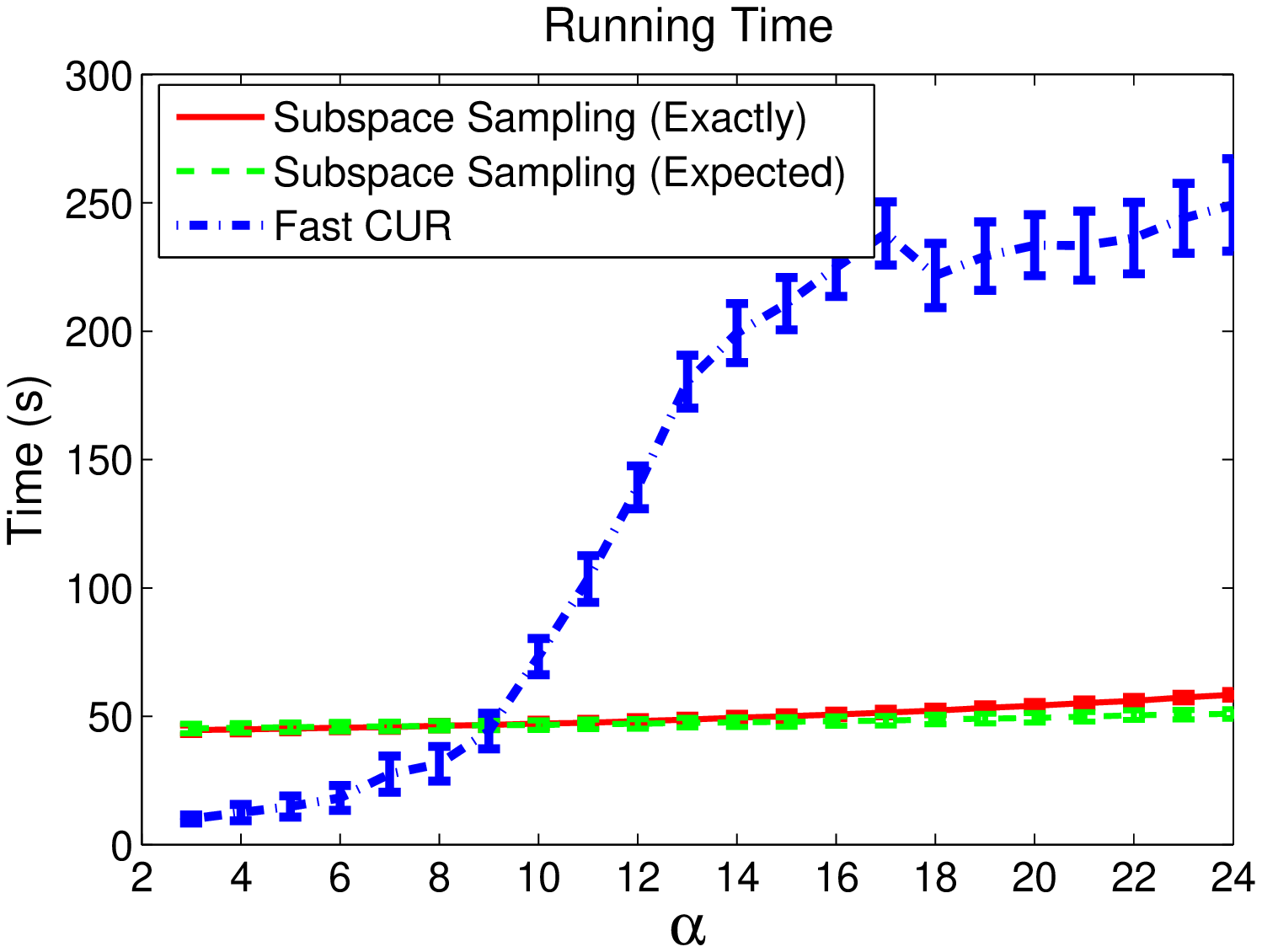}~
\includegraphics[width=48mm, height=40mm]{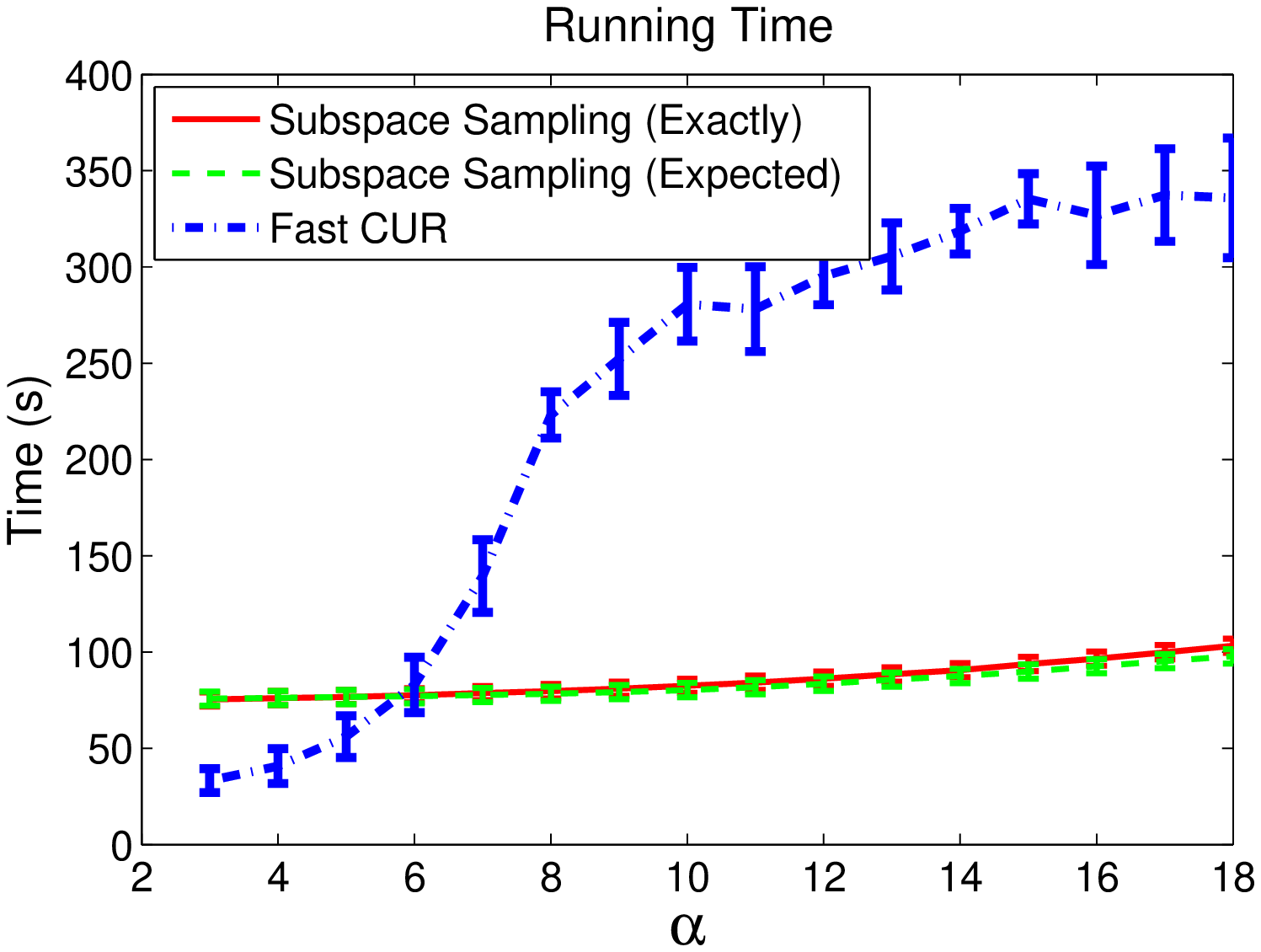} \\
\subfigure[\textsf{$k = 10$, $c=\alpha k$, and $r=\alpha c$.}]{\includegraphics[width=48mm, height=45mm]{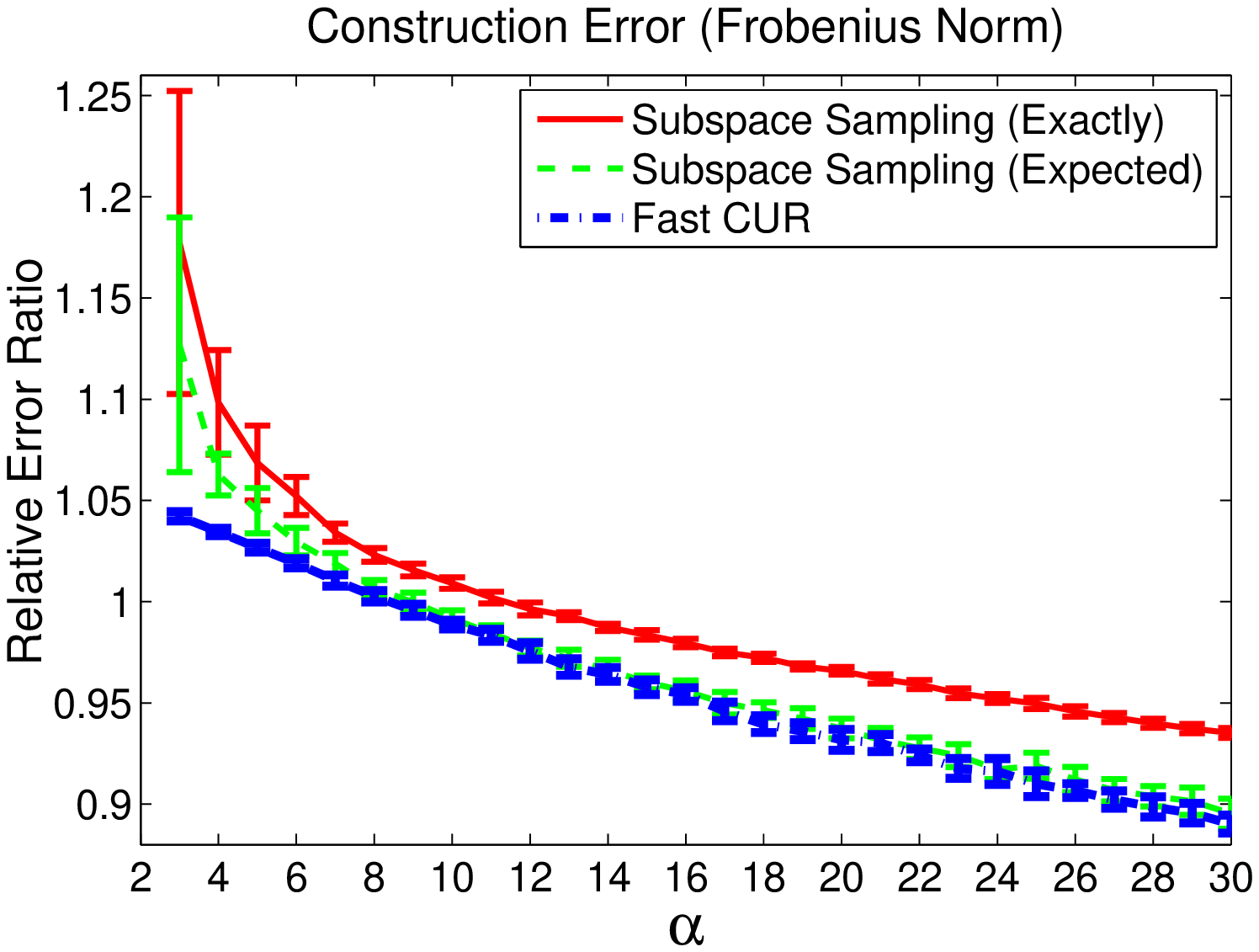}}~
\subfigure[\textsf{$k = 20$, $c=\alpha k$, and $r=\alpha c$.}]{\includegraphics[width=48mm, height=45mm]{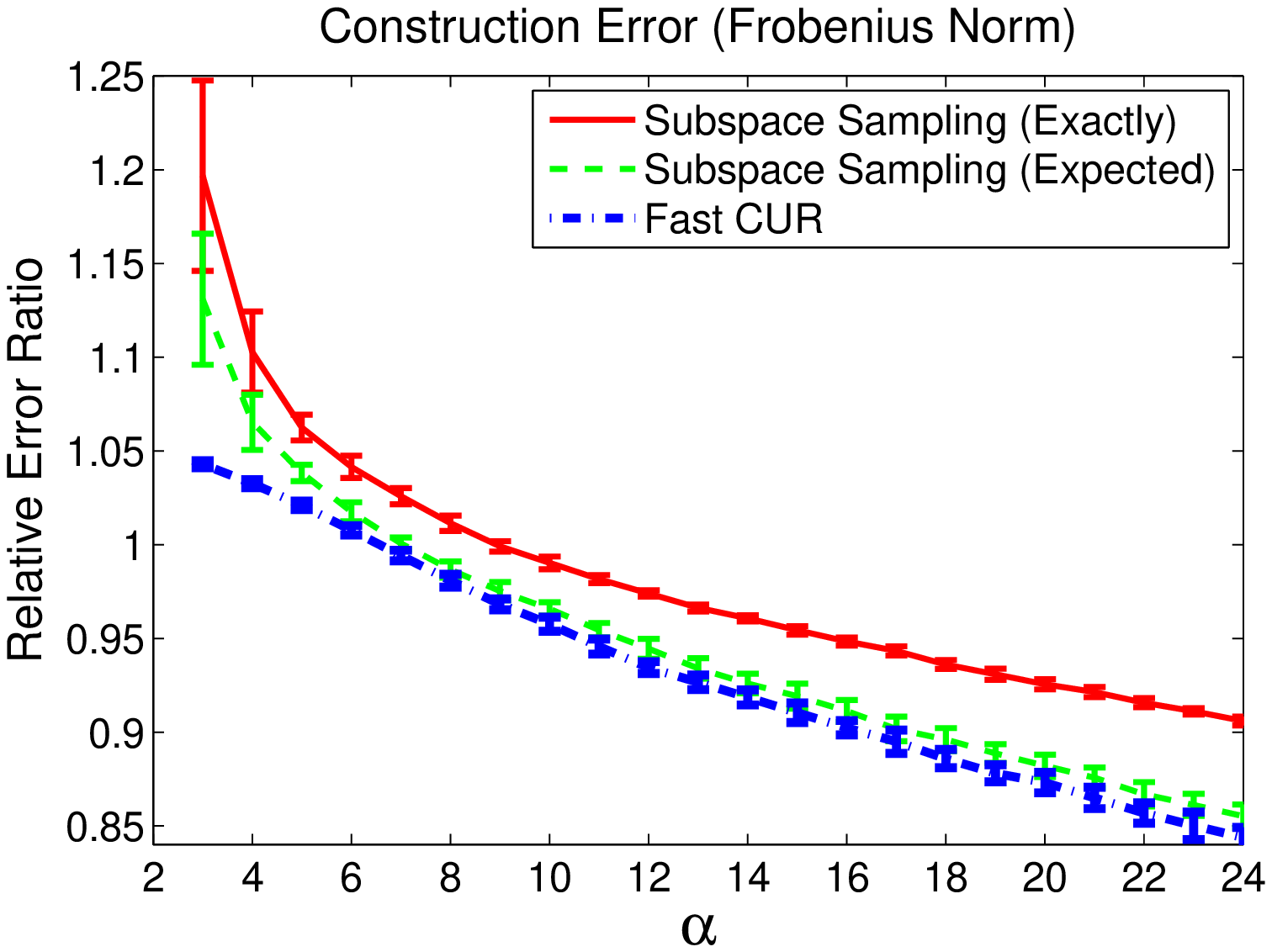}}~
\subfigure[\textsf{$k = 50$, $c=\alpha k$, and $r=\alpha c$.}]{\includegraphics[width=48mm, height=45mm]{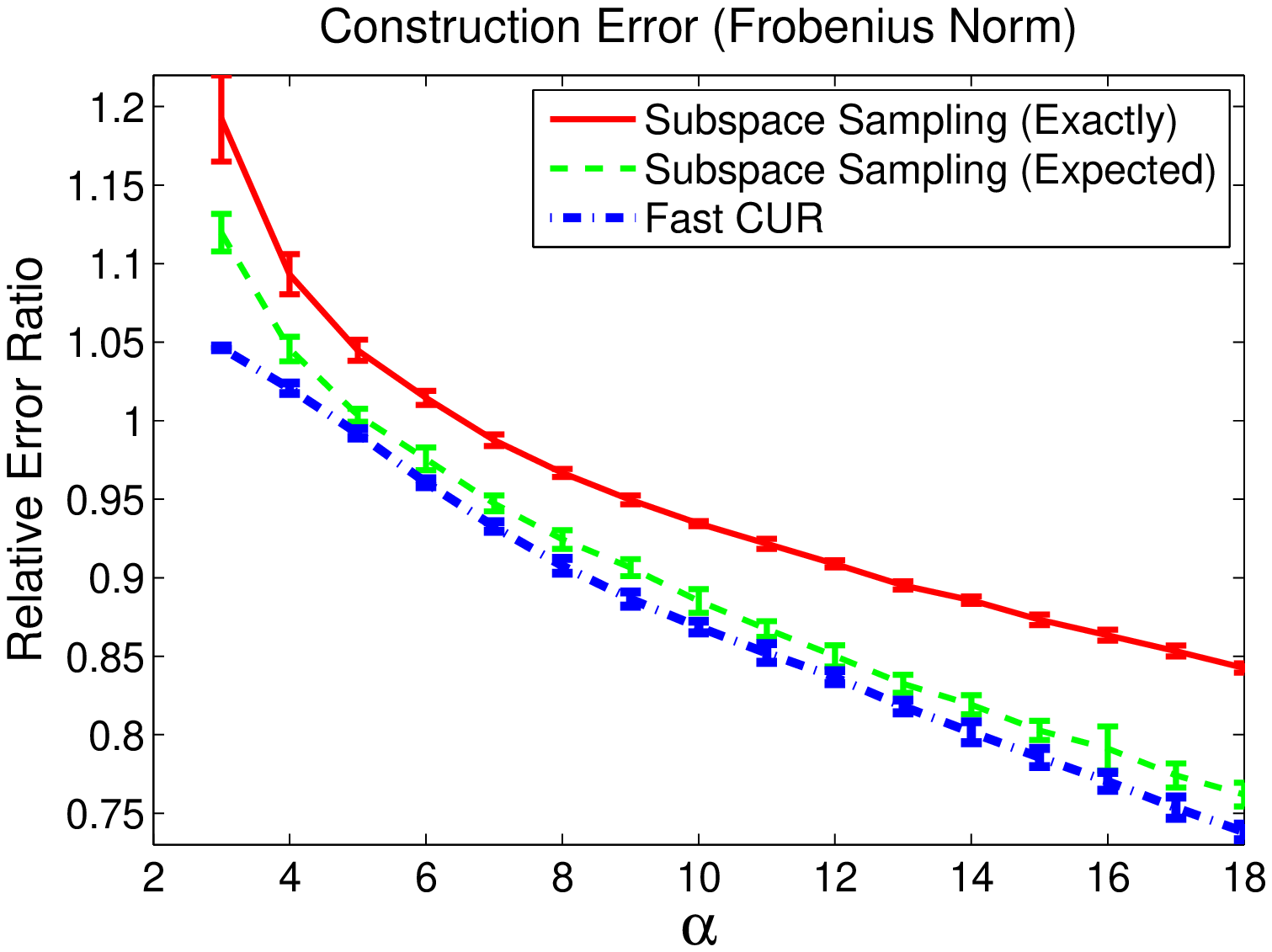}}
\end{center}
   \caption{Empirical results on the Dexter data set.}
\label{fig:dexter}
\end{figure*}

\begin{figure*}
\subfigtopskip = 0pt
\begin{center}
\centering
\includegraphics[width=48mm, height=40mm]{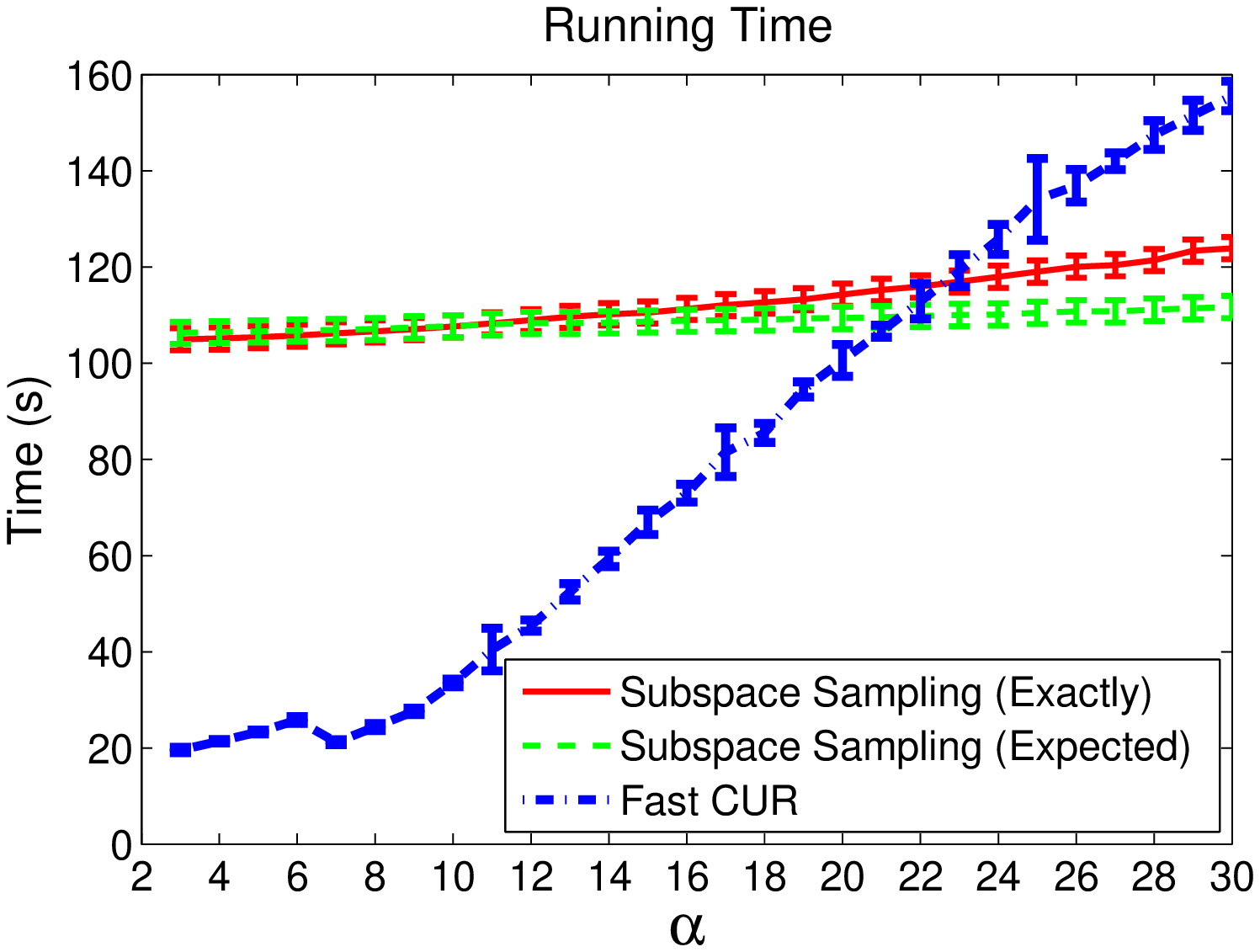}~
\includegraphics[width=48mm, height=40mm]{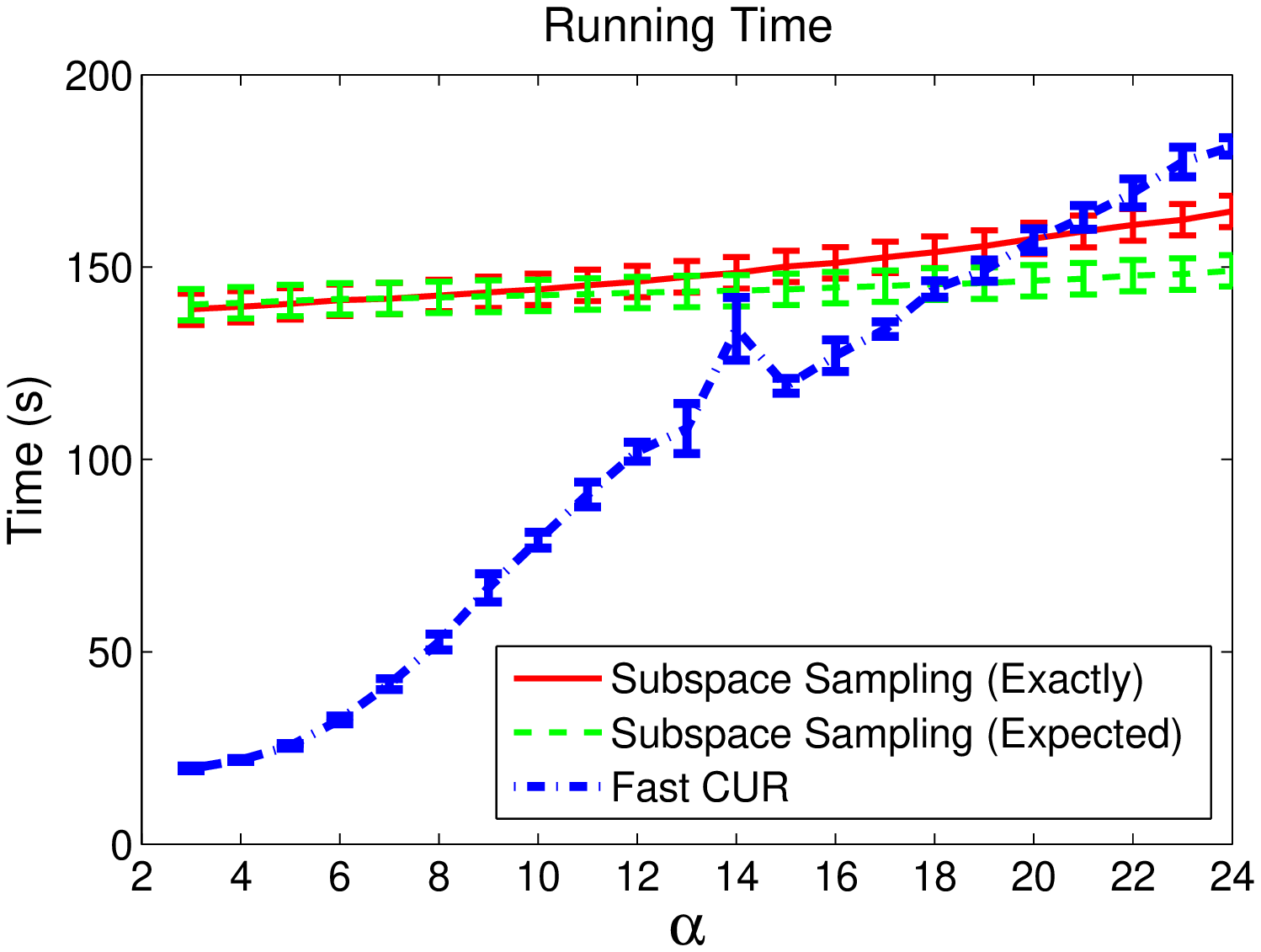}~
\includegraphics[width=48mm, height=40mm]{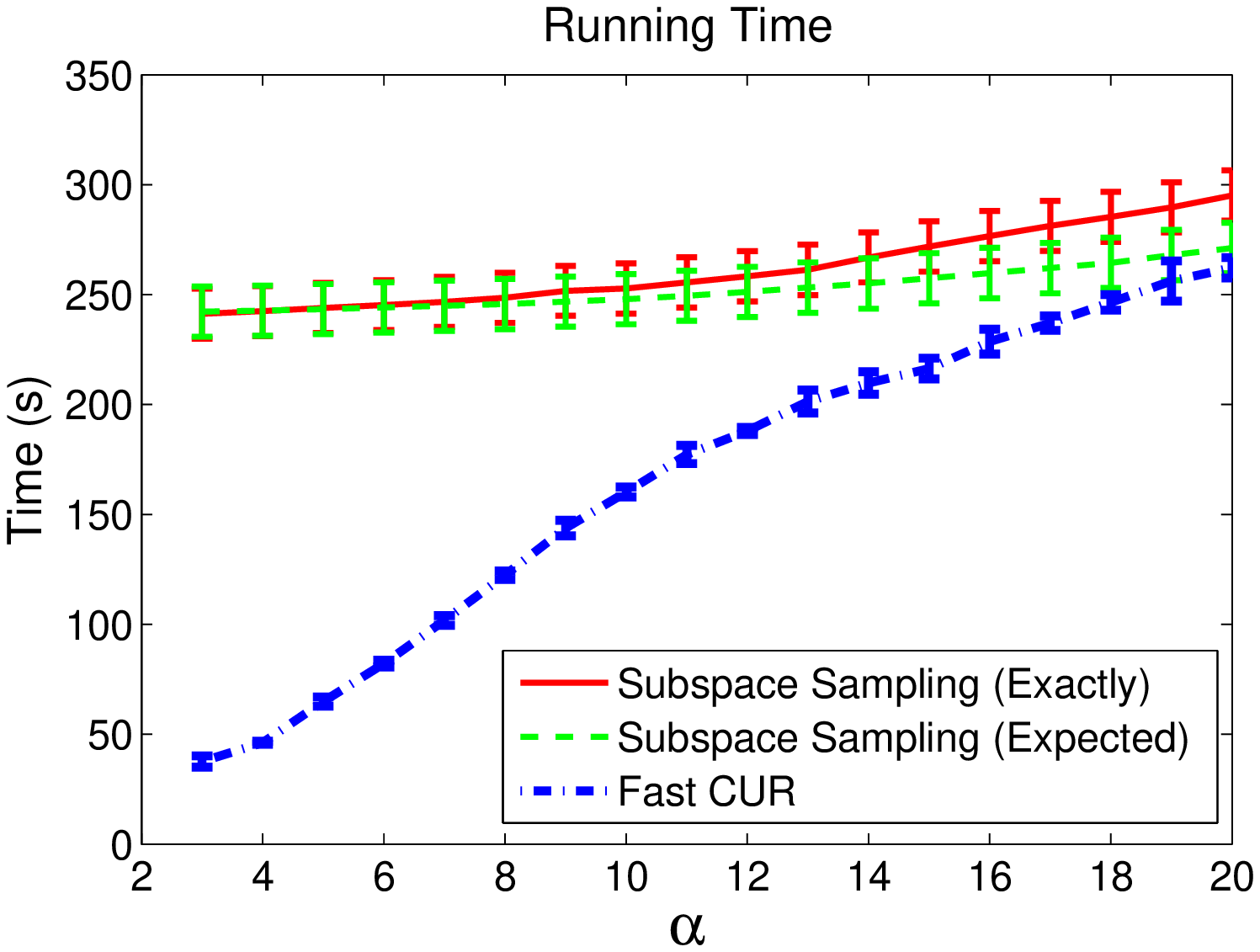} \\
\subfigure[\textsf{$k = 10$, $c=\alpha k$, and $r=\alpha c$.}]{\includegraphics[width=48mm, height=40mm]{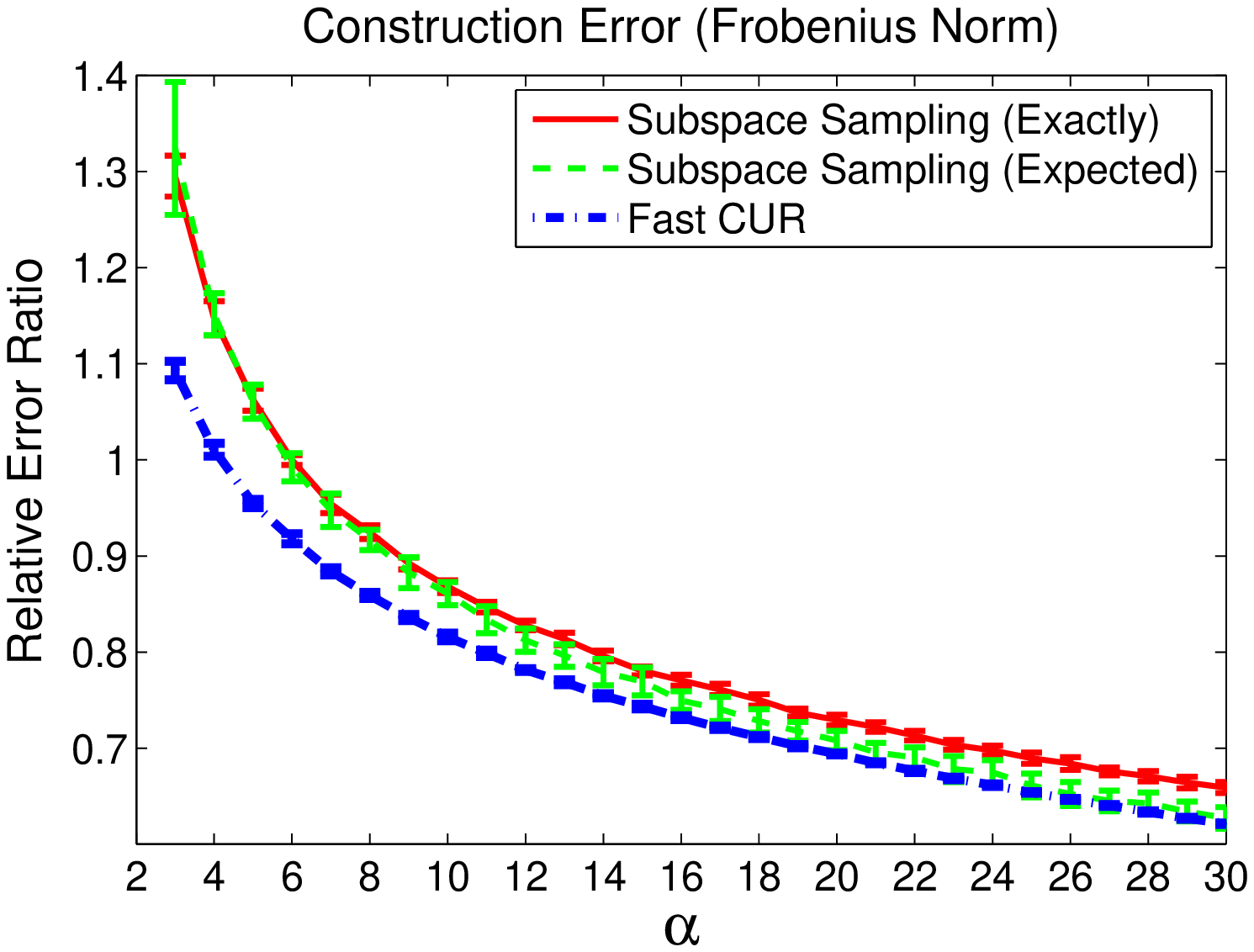}}~
\subfigure[\textsf{$k = 20$, $c=\alpha k$, and $r=\alpha c$.}]{\includegraphics[width=48mm, height=40mm]{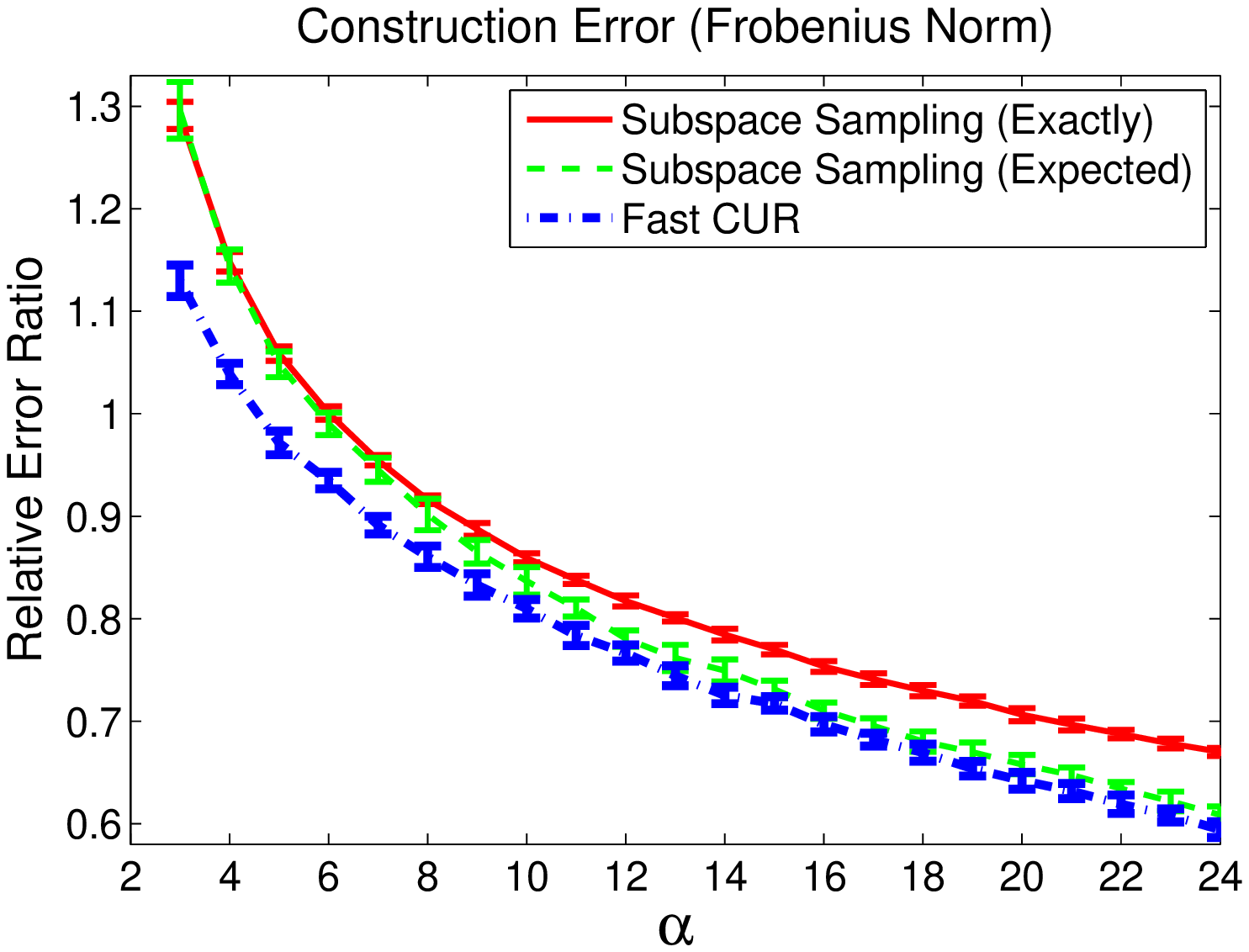}}~
\subfigure[\textsf{$k = 50$, $c=\alpha k$, and $r=\alpha c$.}]{\includegraphics[width=48mm, height=40mm]{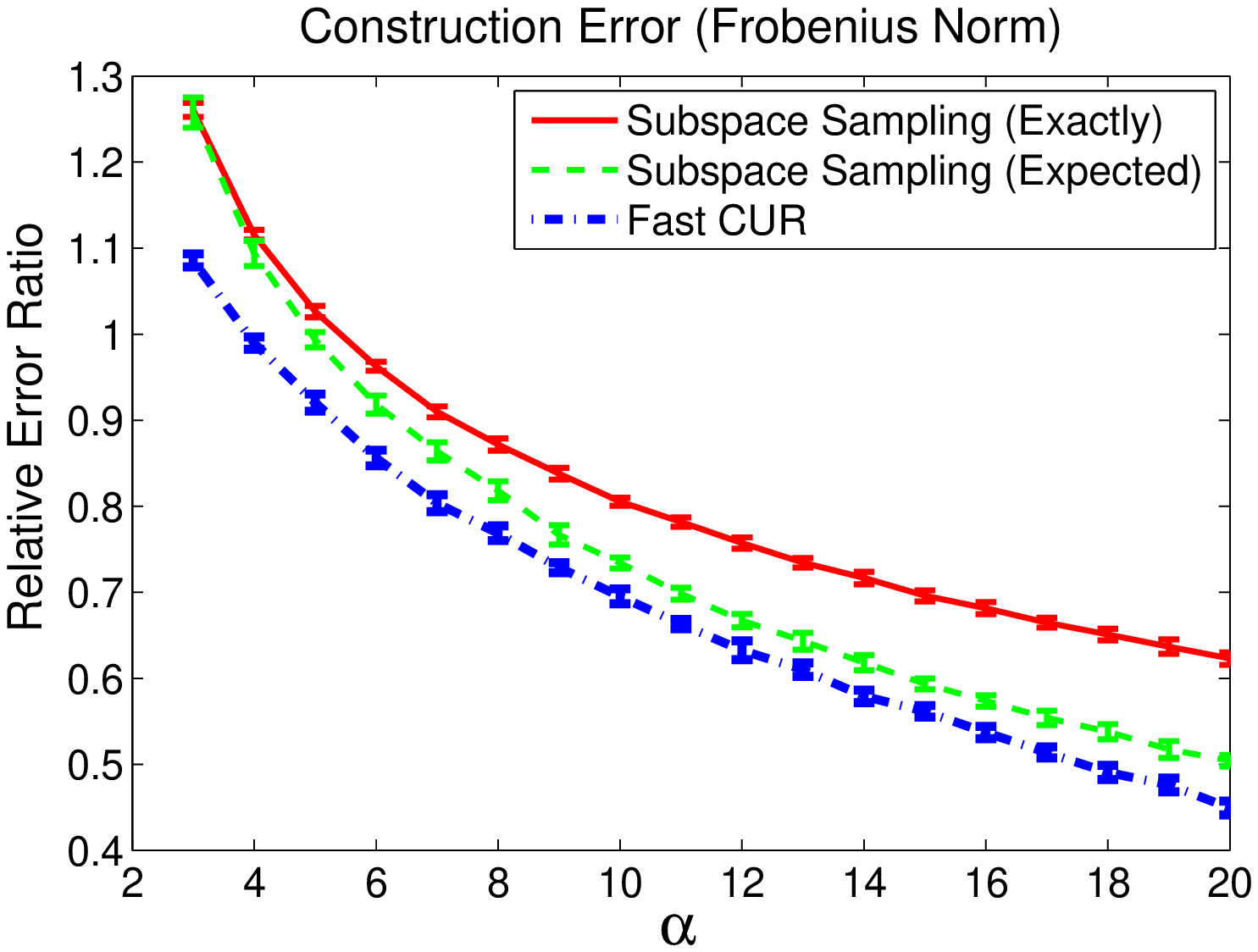}}
\end{center}
   \caption{Empirical results on the HMAX features of the PicasaWeb image data set.}
\label{fig:hmax}
\end{figure*}

\begin{figure*}
\subfigtopskip = 0pt
\begin{center}
\centering
\includegraphics[width=48mm, height=40mm]{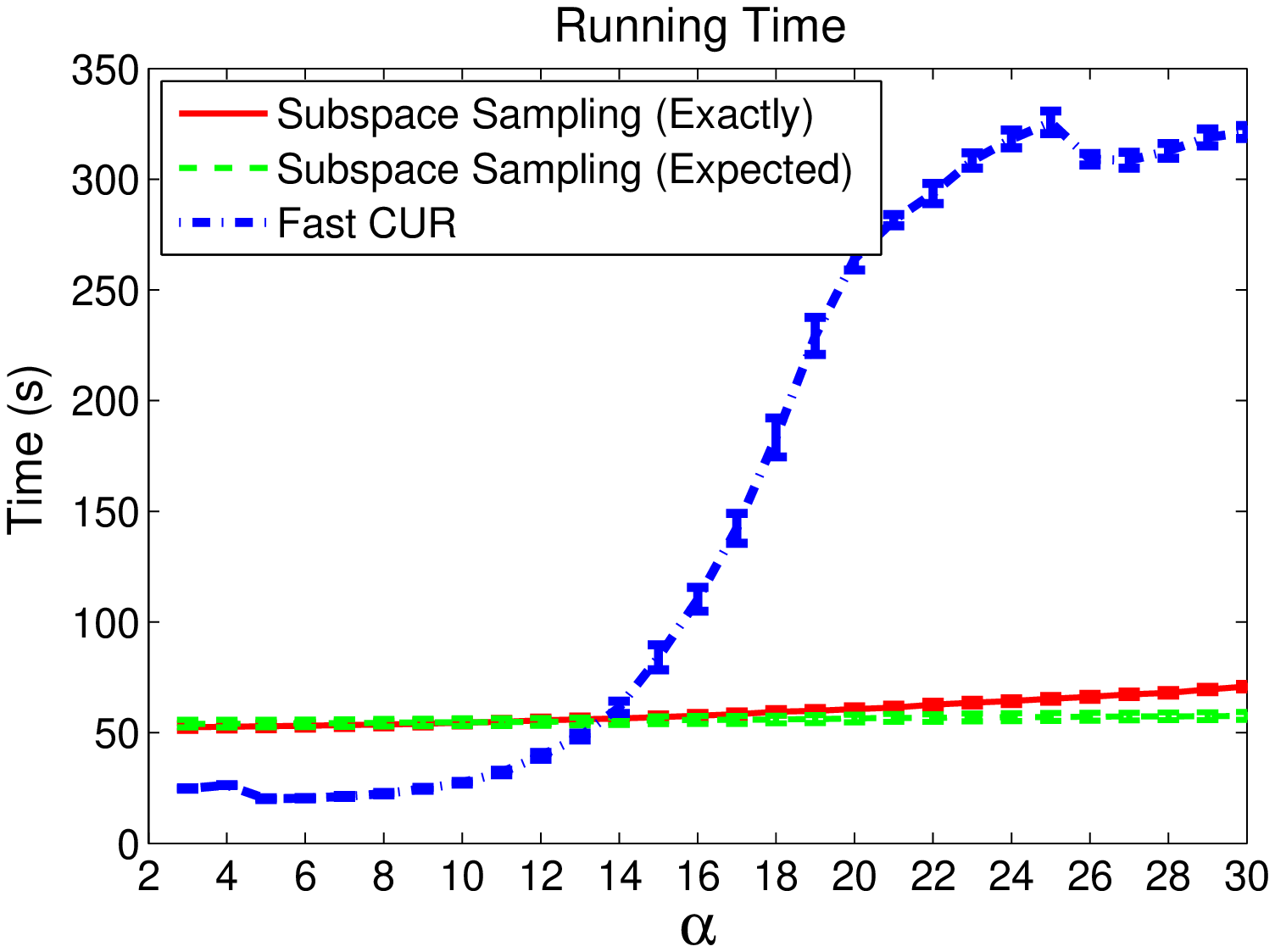}~
\includegraphics[width=48mm, height=40mm]{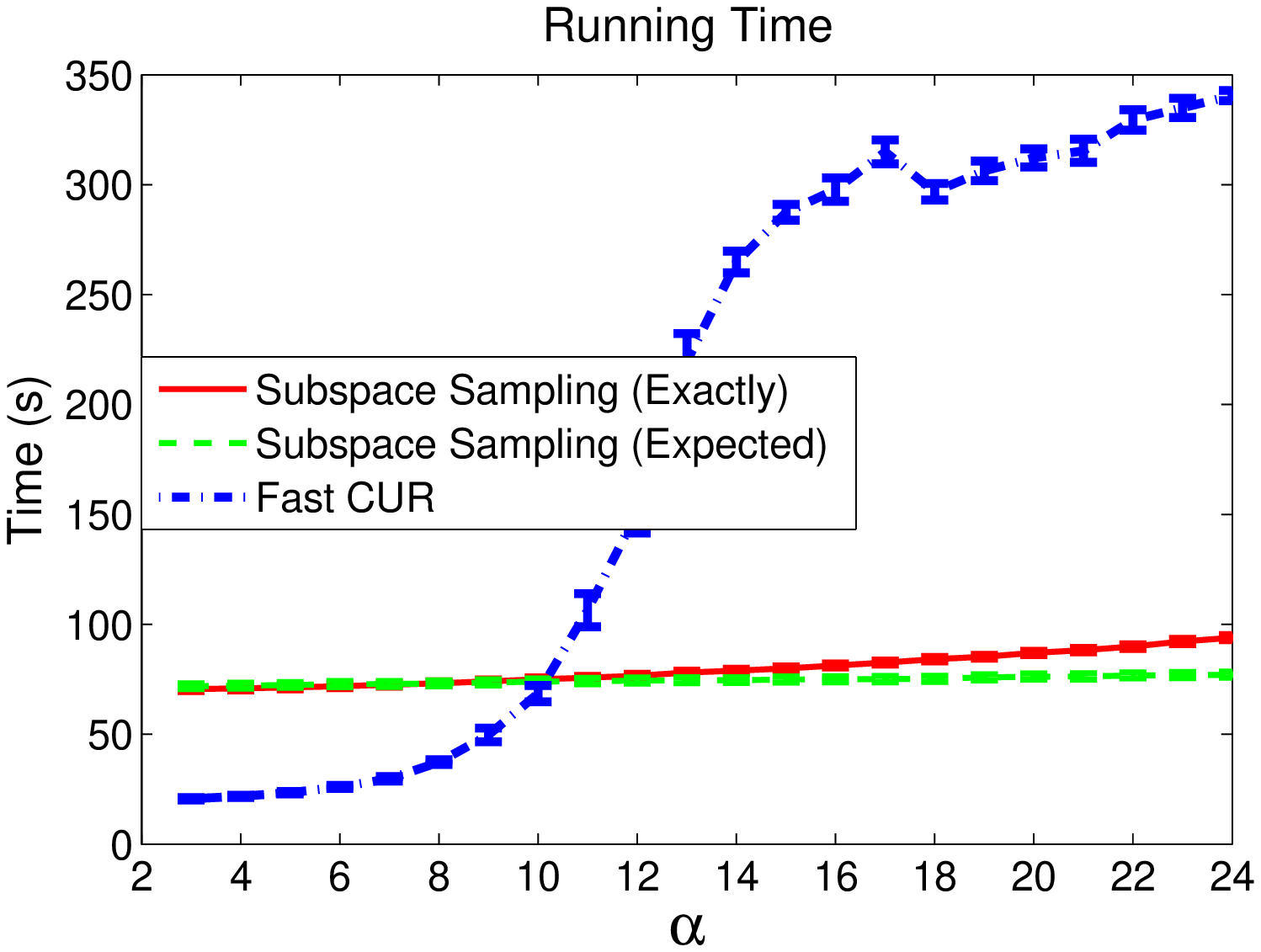}~
\includegraphics[width=48mm, height=40mm]{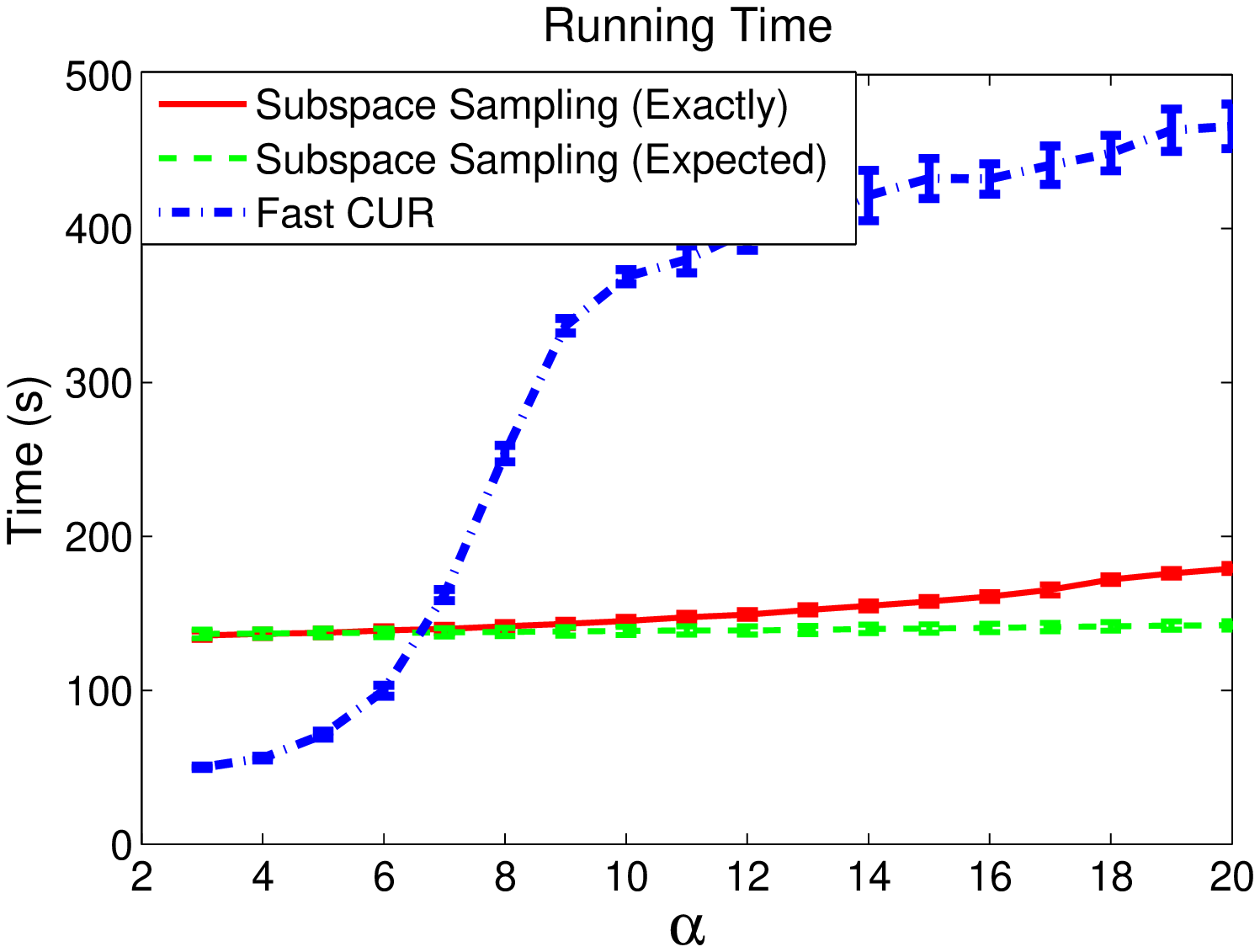} \\
\subfigure[\textsf{$k = 10$, $c=\alpha k$, and $r=\alpha c$.}]{\includegraphics[width=48mm, height=40mm]{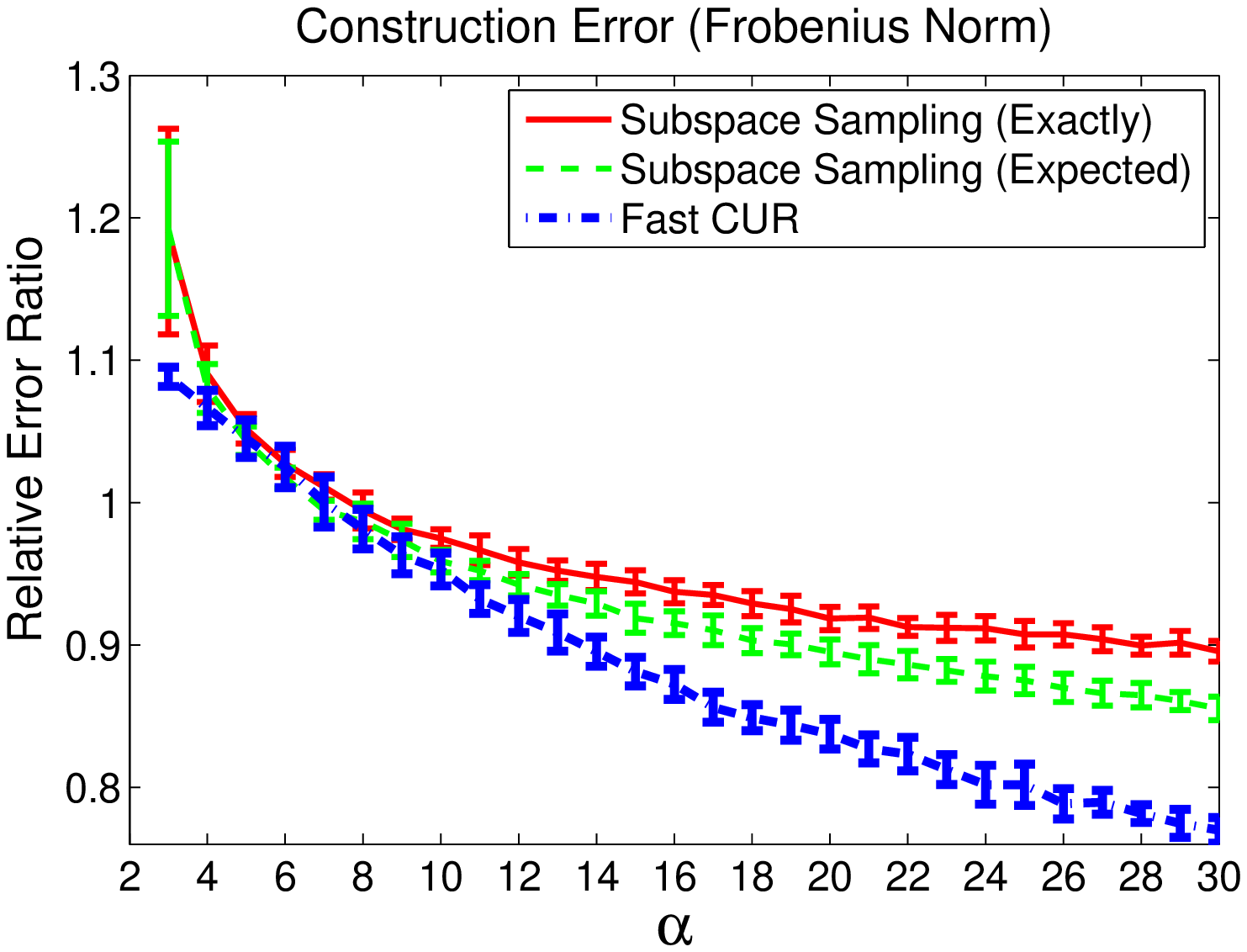}}~
\subfigure[\textsf{$k = 20$, $c=\alpha k$, and $r=\alpha c$.}]{\includegraphics[width=48mm, height=40mm]{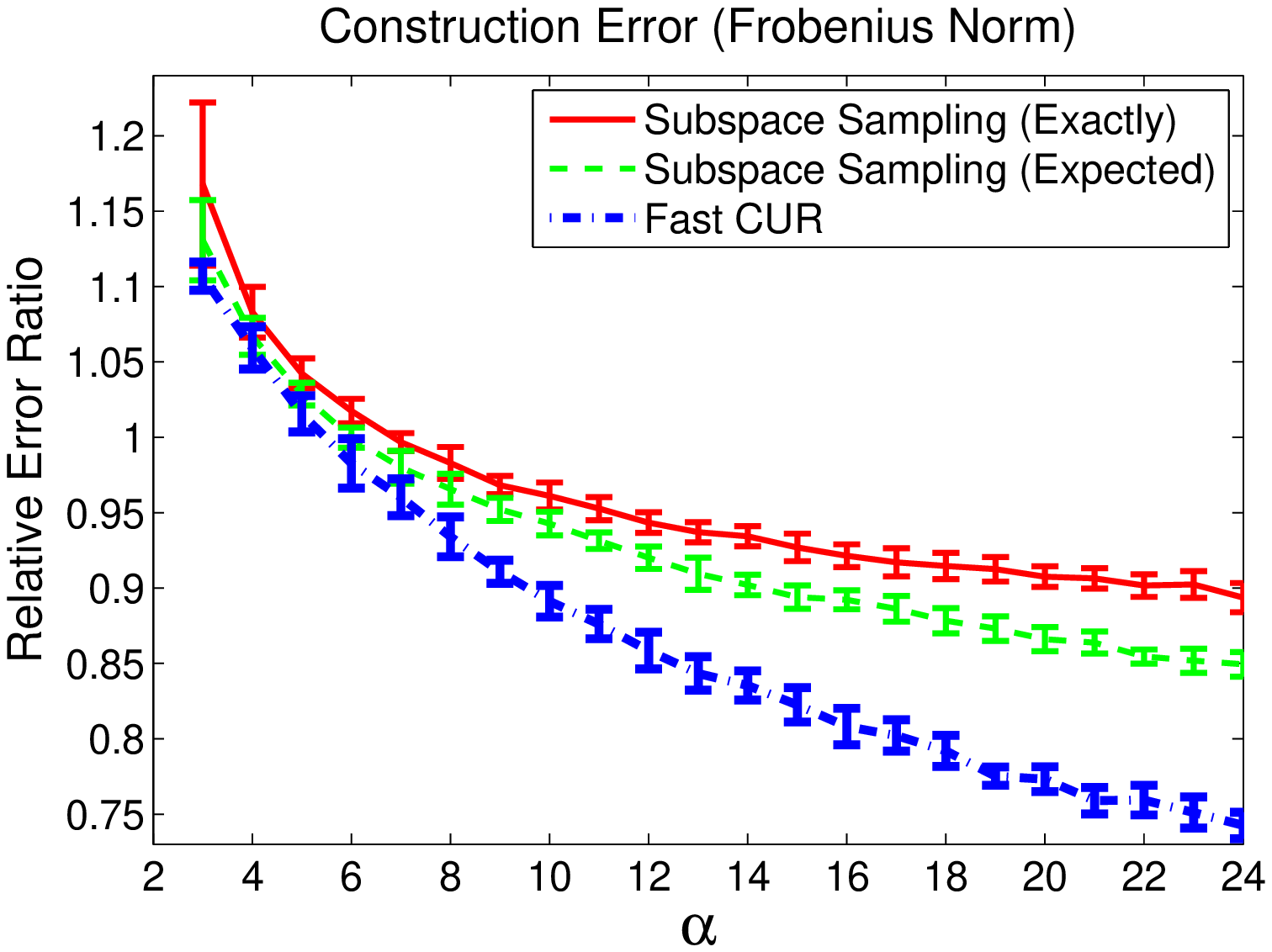}}~
\subfigure[\textsf{$k = 50$, $c=\alpha k$, and $r=\alpha c$.}]{\includegraphics[width=48mm, height=40mm]{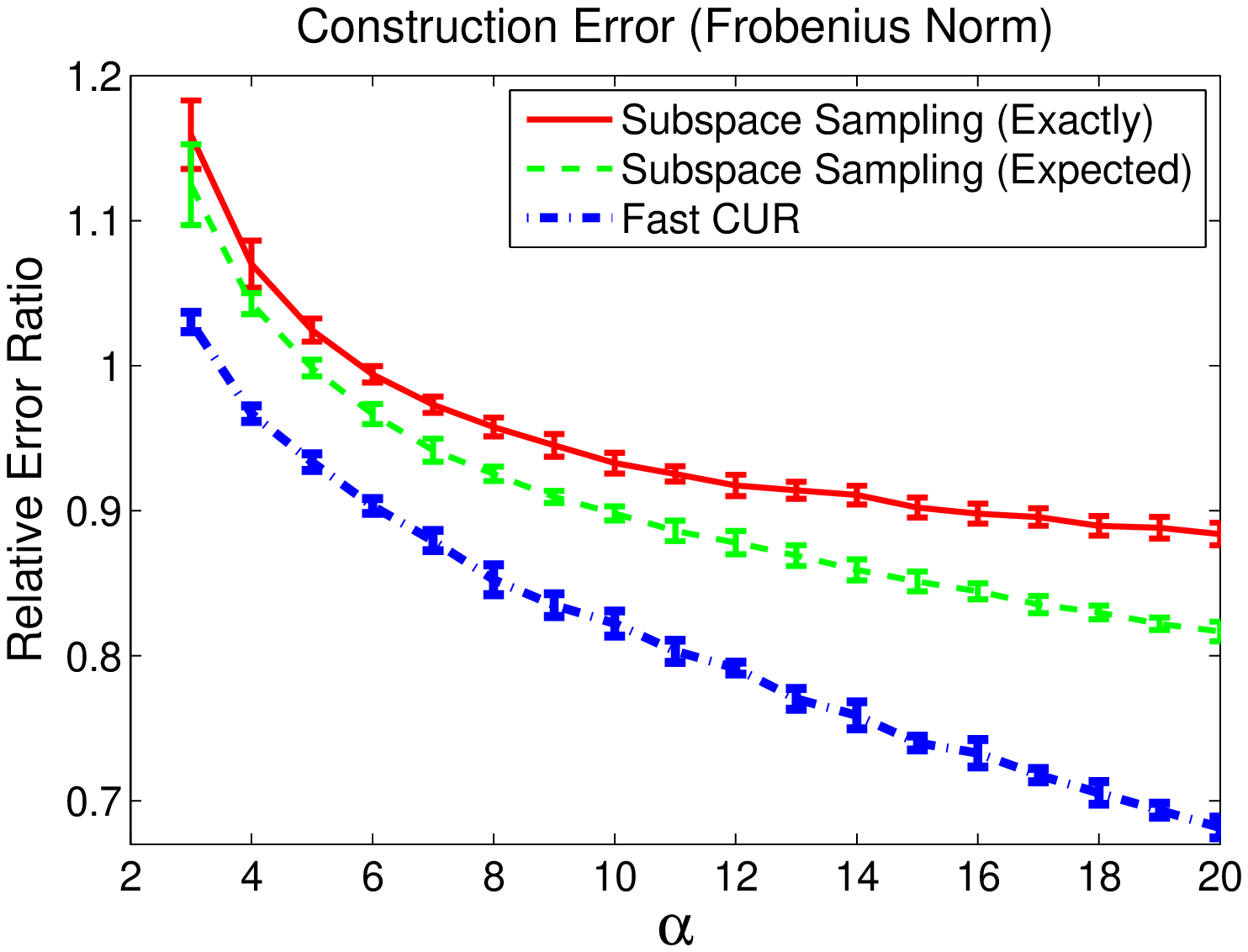}}
\end{center}
   \caption{Empirical results on the SIFT features of the PicasaWeb image data set.}
\label{fig:sift}
\end{figure*}

\subsection{Result Analysis}

The  results show that the fast \CUR algorithm
has much lower relative-error ratio than the subspace sampling algorithm.
The experimental results well match our theoretical analyses in Section~\ref{sec:fast_cur}.
As for the running time, the fast \CUR algorithm is more efficient when $c$ and $r$ are small.
When $c$ and $r$ become large, the fast \CUR algorithm becomes less efficient.
This is because the time complexity of the fast \CUR algorithm is linear in $\epsilon^{-4}$ and large $c$ and $r$ imply small $\epsilon$.
However, the purpose of \CUR is to select a small number of columns and rows from the data matrix, that is, $c \ll n$ and $r \ll m$.
Thus we are not interested in the cases where $c$ and $r$ are large compared with $m$ and $n$, e.g., say $k = 20$ and $\alpha = 10$.

\section{Discussions}

In this paper we have proposed a novel randomized algorithm for the \CUR matrix decomposition problem.
This algorithm is faster, more scalable, and more accurate than the state-of-the-art algorithm, i.e., the subspace sampling algorithm.
Our algorithm requires only $c = {2k}{\epsilon^{-1}}(1+o(1))$ columns and $r = {2c}{\epsilon^{-1}}(1+o(1))$ rows to achieve ($1{+}\epsilon$) relative-error ratio.
To achieve the same relative-error bound,
the subspace sampling algorithm requires $c = \OM(k \epsilon^{-2} \log k)$ columns and $r = \OM(c \epsilon^{-2} \log c )$ rows selected from the original matrix.
Our algorithm also beats the subspace sampling algorithms in time-complexity.
Our algorithm costs $\OM (mnk \epsilon^{-2/3} + (m+n)k^3 \epsilon^{-2/3} + mk^2 \epsilon^{-2} + nk^2\epsilon^{-4})$ time,
which is lower than $\OM(\min\{mn^2 , m^2 n\})$ of the subspace sampling algorithms when $k$ is small.
Moreover, our algorithm enjoys another advantage of avoiding loading the whole data matrix into main memory,
which also makes our algorithm more scalable.
Finally, the empirical comparisons have also demonstrated the effectiveness and efficiency of our algorithm.

However, there are several open questions involving the lower bound of the \CUR matrix decomposition problem.
First, what is the lower bound for the \CUR problem? Second,
is there any algorithm achieving such a lower bound?
\citet{boutsidis2011NOCshort} proved a lower bound for the column selection problem:
$\frac{\|\A - \C \C^\dag \A\|_F^2}{\|\A - \A_k \|_F^2} \geq 1+ \frac{k}{c} $.
We thus wonder if there is a similar lower bound on the ratio $\frac{\|\A - \C \C^\dag \A \R^\dag \R \|_F^2}{\|\A - \C \C^\dag \A\|_F^2}$,
e.g., say $(1+\frac{\rk(\C)}{r})$.
We shall address these questions  in future work.

\acks{
This work has been supported in part by the Natural Science Foundations of
China (No. 61070239) and the Google visiting faculty program.}

\appendix

\section{The Dual Set Sparsification Algorithm} \label{sec:dualset}

For the sake of completeness, we attach the dual set sparsification algorithm here and describe some implementation details.
The dual set sparsification algorithms are deterministic algorithms established in \cite{boutsidis2011NOC}.
The fast \CUR algorithm calls the {\it dual set spectral-Frobenius sparsification algorithm}~\citep[Lemma~13 in][]{boutsidis2011NOC} in both stages.
We show this algorithm in Algorithm~\ref{alg:dual_set_sparsification}
and its bounds in Lemma~\ref{lem:sparsification}.

\begin{algorithm}[tb]
   \caption{Deterministic Dual Set Spectral-Frobenius Sparsification Algorithm.}
   \label{alg:dual_set_sparsification}
\algsetup{indent=2em}
\begin{small}
\begin{algorithmic}[1]
   \STATE {\bf Input:} 	$\UM = \{\x_i\}_{i=1}^n \subset \RB^l$, ($l < n$);
   							$\VM = \{\v_i\}_{i=1}^n \subset \RB^k$, with $\sum_{i=1}^n \v_i \v_i^T = \I_k$ ($k < n$);
   							$k < r < n$;
   \STATE {\bf Initialize:} $\s_0 = \0$, $\A_0 = \0$;
   \STATE Compute $\|\x_i\|_2^2$ for $i = 1, \cdots, n$, and then compute $\delta_U = \frac{ \sum_{i=1}^n \|\x_i\|_2^2 }{ 1 - \sqrt{k/r} }$;
   \FOR{$\tau = 0$ to $r-1$}
   \STATE Compute the eigenvalue decomposition of $\A_{\tau}$;
   	\STATE Find an index $j$ in $\{1 , \cdots , n\}$ and compute a weight $t > 0$ such that
   				\begin{eqnarray}
   				\delta_U^{-1} \| \x_j \|_2^2 \; \leq \;  t^{-1} \;  \leq \;  \frac{\v_j^T \Big(\A_\tau - (L_\tau + 1) \I_k  \Big)^{-2} \v_j  }{ \phi (L_\tau + 1 , \A_\tau) - \phi(L_\tau, \A_\tau) }
   														- \v_j^T \Big(\A_\tau - (L_\tau + 1) \I_k \Big)^{-1} \v_j	\textrm{;}	 \nonumber
   				\end{eqnarray}
   				where
   				\begin{equation}
   				\phi (L, \A)   =   \sum_{i=1}^k \Big(\lambda_i (\A) - L \Big)^{-1} \textrm{, }	\qquad	\qquad
   				\quad L_\tau = \tau - \sqrt{rk} \textrm{;} \nonumber
   				\end{equation} \label{alg:dual_set_sparsification:4}
   	\STATE Update the $j$-th component of $\s_\tau$ and $\A_\tau$: $\quad \s_{\tau + 1} [j] = \s_{\tau} [j] + t$, $\quad \A_{\tau + 1} = \A_\tau + t \v_j \v_j^T$;
   \ENDFOR
   \RETURN $\s = \frac{1 - \sqrt{k/r}}{r} \s_r$.
\end{algorithmic}
\end{small}
\end{algorithm}

\begin{lemma}[Dual Set Spectral-Frobenius Sparsification] \label{lem:sparsification}
Let $\UM = \{\x_1, \cdots, \x_n\} \subset \RB^l$ $(l < n)$ contain the columns of an arbitrary matrix $\X \in \RB^{l\times n}$.
Let $\VM = \{\v_1, \cdots, \v_n\} \subset \RB^k$ $(k < n)$ be a decompositions of the identity,
i.e., $\sum_{i=1}^n \v_i \v_i^T = \I_k$.
Given an integer $r$ with $k < r < n$,
Algorithm~\ref{alg:dual_set_sparsification} deterministically computes
a set of weights $s_i \geq 0$ ($i = 1,\cdots, n$) at most $r$ of which are non-zero, such that
\[
\lambda_k \Big( \sum_{i=1}^n s_i \v_i \v_i^T \Big) \geq \Big( 1 - \sqrt{\frac{k}{r}} \Big)^2 \qquad \mbox{ and }  \qquad
\tr \Big( \sum_{i=1}^n s_i \x_i \x_i^T \Big) \leq \|\X\|_F^2.
\]
The weights $s_i$  can be computed deterministically in $\OM (rnk^2 + nl)$ time.
\end{lemma}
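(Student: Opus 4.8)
The plan is to run the barrier-function (Batson--Spielman--Srivastava) method simultaneously on the two sets $\VM$ and $\UM$. First I would track the symmetric $k\times k$ matrix $\A_\tau=\sum_i \s_\tau[i]\,\v_i\v_i^T$ produced after $\tau$ iterations, together with the moving lower barrier $L_\tau=\tau-\sqrt{rk}$, and control its spectrum through the lower-barrier potential $\phi(L,\A)=\tr[(\A-L\I_k)^{-1}]=\sum_{i=1}^k(\lambda_i(\A)-L)^{-1}$. On the $\UM$ side I would use the much simpler linear potential $\sum_i \s_\tau[i]\,\|\x_i\|_2^2$, whose growth is metered by the constant $\delta_U=\|\X\|_F^2/(1-\sqrt{k/r})$. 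The invariant I would carry by induction on $\tau$ is threefold: every eigenvalue of $\A_\tau$ exceeds $L_\tau$; the potential satisfies $\phi(L_\tau,\A_\tau)\le\sqrt{k/r}$; and $\sum_i \s_\tau[i]\|\x_i\|_2^2\le\tau\,\delta_U$. The base case $\tau=0$ is immediate since $\A_0=\0$ and $\phi(-\sqrt{rk},\0)=k/\sqrt{rk}=\sqrt{k/r}$.

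The heart of the argument is the existence step inside each iteration: I must show that there is always an index $j$ and a weight $t>0$ meeting both the $\UM$-side lower bound $\delta_U^{-1}\|\x_j\|_2^2\le t^{-1}$ and the $\VM$-side upper bound $t^{-1}\le U_j$, where $U_j$ denotes the right-hand side of the displayed constraint in Algorithm~\ref{alg:dual_set_sparsification}. I would prove this by averaging: it suffices to show $\sum_{j=1}^n U_j\ge\sum_{j=1}^n\delta_U^{-1}\|\x_j\|_2^2$, since then at least one index has $U_j\ge\delta_U^{-1}\|\x_j\|_2^2$ and any $t$ with $t^{-1}$ between the two bounds works. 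Using the decomposition of the identity $\sum_j\v_j\v_j^T=\I_k$, the sums telescope into trace quantities, $\sum_j\v_j^T(\A_\tau-L_{\tau+1}\I_k)^{-2}\v_j=\tr[(\A_\tau-L_{\tau+1}\I_k)^{-2}]$ and $\sum_j\v_j^T(\A_\tau-L_{\tau+1}\I_k)^{-1}\v_j=\phi(L_{\tau+1},\A_\tau)$, so the left-hand sum reduces to $\tr[(\A_\tau-L_{\tau+1}\I_k)^{-2}]/(\phi(L_{\tau+1},\A_\tau)-\phi(L_\tau,\A_\tau))-\phi(L_{\tau+1},\A_\tau)$, while the right-hand sum equals $\delta_U^{-1}\|\X\|_F^2=1-\sqrt{k/r}$. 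The remaining task is the purely scalar BSS lower-barrier inequality, which, under the invariant $\phi(L_\tau,\A_\tau)\le\sqrt{k/r}$ and the unit barrier shift $L_{\tau+1}=L_\tau+1$, forces the left-hand expression to be at least $1-\sqrt{k/r}$; this is the step I expect to be the main obstacle, since it is where the potential bound and the choice of step size must conspire exactly.

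Once existence is settled, I would verify via Sherman--Morrison that the chosen update preserves the invariant: the identity $\phi(L_{\tau+1},\A_\tau+t\v_j\v_j^T)=\phi(L_{\tau+1},\A_\tau)-t\v_j^T(\A_\tau-L_{\tau+1}\I_k)^{-2}\v_j/(1+t\v_j^T(\A_\tau-L_{\tau+1}\I_k)^{-1}\v_j)$, together with $t^{-1}\le U_j$, yields $\phi(L_{\tau+1},\A_{\tau+1})\le\phi(L_\tau,\A_\tau)\le\sqrt{k/r}$, which also keeps the spectrum above the advanced barrier; meanwhile $t^{-1}\ge\delta_U^{-1}\|\x_j\|_2^2$ gives $t\|\x_j\|_2^2\le\delta_U$, advancing the $\UM$-side budget by at most $\delta_U$. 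Since exactly one coordinate is touched per iteration, at most $r$ weights are nonzero. After $r$ steps the barrier reaches $L_r=r-\sqrt{rk}=r(1-\sqrt{k/r})$, so $\lambda_k(\A_r)\ge L_r$ and $\sum_i\s_r[i]\|\x_i\|_2^2\le r\delta_U$. Finally I would apply the rescaling $\s=\frac{1-\sqrt{k/r}}{r}\s_r$ returned by the algorithm: the spectral bound becomes $\lambda_k(\sum_i s_i\v_i\v_i^T)\ge\frac{1-\sqrt{k/r}}{r}\cdot r(1-\sqrt{k/r})=(1-\sqrt{k/r})^2$, and the Frobenius bound becomes $\tr(\sum_i s_i\x_i\x_i^T)\le\frac{1-\sqrt{k/r}}{r}\cdot r\delta_U=\|\X\|_F^2$, as required. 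For the running time, computing all $\|\x_i\|_2^2$ costs $\OM(nl)$; each of the $r$ iterations performs one $k\times k$ eigendecomposition in $\OM(k^3)$ and evaluates $U_j$ for all $n$ candidates in $\OM(nk^2)$ in the eigenbasis, so the loop costs $\OM(rnk^2+rk^3)=\OM(rnk^2)$ since $k<n$, for a total of $\OM(rnk^2+nl)$.
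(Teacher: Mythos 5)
Your proposal is correct, but note that the paper does not prove Lemma~\ref{lem:sparsification} at all: it imports the statement verbatim from Lemma~13 of \citet{boutsidis2011NOC}, and the only original content in Appendix~\ref{sec:dualset} is the implementation remark (computing $(\A_\tau - (L_\tau+1)\I_k)^q$ via the eigendecomposition of $\A_\tau$) and the running-time accounting $\OM(rk^3)+\OM(rnk^2)+\OM(nl)=\OM(rnk^2+nl)$, which your final paragraph reproduces exactly. Your sketch is precisely the Batson--Spielman--Srivastava argument of the cited source --- lower barrier $L_\tau=\tau-\sqrt{rk}$, averaging over the decomposition of the identity to get existence of $(j,t)$, Sherman--Morrison to preserve the potential invariant, and the final rescaling --- and the single step you flag as the obstacle is exactly BSS's lower-barrier shift lemma, which does hold here: the invariant $\phi(L_\tau,\A_\tau)\le\sqrt{k/r}<1$ forces $\lambda_{\min}(\A_\tau)-L_\tau\ge\sqrt{r/k}>1$, so the unit shift never crosses the spectrum and the standard Cauchy--Schwarz computation yields $\sum_j U_j \ge 1-\phi(L_\tau,\A_\tau)\ge 1-\sqrt{k/r}$, closing the averaging step.
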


Here we would like to mention the implementation of Algorithm~\ref{alg:dual_set_sparsification},
which is not described  by \cite{boutsidis2011NOC} in details.
In each iteration the algorithm performs once eigenvalue decomposition: $\A_\tau = \W \Lam \W^T$.
Here $\A_\tau$ is guaranteed to be positive semi-definite in each iteration.
Since
\[
\Big(\A_\tau - \alpha \I_k \Big)^q = \W \Diag\Big( (\lambda_1 - \alpha)^q , \cdots, (\lambda_k - \alpha)^q \Big) \W^T \textrm{,}
\]
we can efficiently compute $( \A_\tau - (L_\tau + 1) \I_k )^q$ based on the eigenvalue decomposition of $\A_\tau$.
With the eigenvalues at hand, $\phi (L, \A_\tau)$ can also be computed directly.

The algorithm runs in $r$ iterations.
In each iteration, the eigenvalue decomposition of $\A_\tau$ requires $\OM(k^3)$,
and the $n$ comparisons in Line~\ref{alg:dual_set_sparsification:4} each requires $\OM(k^2)$.
Moreover, computing $\|x_i\|_2^2$ for each $x_i$ requires $\OM(nl)$.
Overall, the running time of Algorithm~\ref{alg:dual_set_sparsification} is at most $\OM(r k^3) + \OM(r n k^2) + \OM(nl) = \OM(rnk^2 + nl)$.

\section{Proofs} \label{sec:proofs}

\subsection{The Proof of Theorem~\ref{thm:adaptive_bound}}

Theorem~\ref{thm:adaptive_bound} can be equivalently expressed in Theorem~\ref{thm:adaptive_bound_2}.
In order to stick to the column space convention of~\cite{boutsidis2011NOC},
we prove Theorem~\ref{thm:adaptive_bound_2} instead of Theorem~\ref{thm:adaptive_bound}.

\begin{theorem} [Adaptive Sampling Algorithm] \label{thm:adaptive_bound_2}
Given a matrix $\A \in \RBmn$ and
a matrix $\R \in \RB^{r\times n}$ such that $\rk(\R) = \rk(\A \R^\dag \R) = \rho$ $(\rho \leq r \leq m)$,
let $\C_1 \in \RB^{m\times c_1}$ consist of $c_1$ columns of $\A$,
and define the residual $\B = \A - \C_1 \C_1^\dag \A$.
For $i = 1,\cdots, n$, let
\[
p_i = \|\bb_i\|_2^2 / \|\B\|_F^2,
\]
where $\bb_i$ is the $i$-th column of the matrix $\B$.
Sample further $c_2$ columns from $\A$ in $c_2$ i.i.d. trials,
where in each trial the $i$-th column is chosen with probability $p_i$.
Let $\C_2 \in \RB^{m\times c_2}$ contain the $c_2$ sampled columns
and  $\C = [\C_1,\C_2] \in \RB^{m\times (c_1 + c_2)}$ contain the columns of both $\C_1$ and $\C_2$,
all of which are columns of $\A$.
Then the following inequality holds:
\[
\EB \|\A - \C \Cmp \A \R^\dag \R \|_F^2 \leq \|\A - \A \R^\dag \R \|_F^2 + \frac{\rho}{c_2} \|\A - \C_1 \C_1^\dag \A\|_F^2.
\]
where the expectation is taken w.r.t. $\C_2$.
\end{theorem}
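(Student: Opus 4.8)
The plan is to follow the bias--variance template of Deshpande and Vempala (Lemma~\ref{lem:ada_sampling}), but with the rank-$k$ target $\A_k$ replaced by the fixed row-space projection $\M := \A\R^\dag\R$, and to first dispose of the two-sided (CUR-type) projection by a Pythagorean split. Writing $P_\C := \C\Cmp$ and $P_\R := \R^\dag\R$ for the two orthogonal projectors, I would decompose
\[
\A - \C\Cmp\A\R^\dag\R = (\I - P_\C)\A P_\R + \A(\I - P_\R).
\]
A one-line trace computation, using that $P_\C,P_\R$ are symmetric idempotents and $(\I-P_\R)P_\R = \0$, shows the two summands are orthogonal in the Frobenius inner product. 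Hence
\[
\big\|\A - \C\Cmp\A\R^\dag\R\big\|_F^2 = \big\|(\I-P_\C)\M\big\|_F^2 + \big\|\A - \A\R^\dag\R\big\|_F^2 ,
\]
and since the last term does not depend on the random sample $\C_2$ and already matches the first term of the asserted bound, it remains only to prove $\EB\|(\I-P_\C)\M\|_F^2 \le \frac{\rho}{c_2}\|\B\|_F^2$.

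Next I would exhibit an explicit competitor in the column space of $\C$. Let $\M = \sum_{t=1}^{\rho}\sigma_t\,\u_t\v_t^T$ be the thin SVD of $\M$; the hypothesis $\rk(\A\R^\dag\R) = \rho$ guarantees exactly $\rho$ nonzero terms. The decisive observation is that every right singular vector $\v_t$ lies in the row space of $\R$, so $P_\R\v_t = \v_t$ and therefore
\[
\u_t = \sigma_t^{-1}\M\v_t = \sigma_t^{-1}\A P_\R\v_t = \sigma_t^{-1}\A\v_t ,
\]
i.e.\ each $\u_t$ is a genuine linear combination of the columns of the \emph{original} matrix $\A$. Splitting every column as $\a_i = \C_1\C_1^\dag\a_i + \bb_i$ gives $\u_t = \C_1\C_1^\dag\u_t + \sigma_t^{-1}\sum_i(\v_t)_i\,\bb_i$, and I would replace the residual sum by the single-sample-unbiased, $c_2$-fold averaged estimator to define
\[
\hat{\u}_t := \C_1\C_1^\dag\u_t + \frac{1}{\sigma_t\,c_2}\sum_{s=1}^{c_2}\frac{(\v_t)_{j_s}}{p_{j_s}}\,\bb_{j_s},
\qquad
\Y := \sum_{t=1}^{\rho}\hat{\u}_t\,(\u_t^T\M) = \sum_{t=1}^{\rho}\sigma_t\,\hat{\u}_t\v_t^T .
\]
Because $\bb_{j_s} = \a_{j_s} - \C_1\C_1^\dag\a_{j_s}$ uses a sampled column of $\A$, every $\hat{\u}_t$, and hence $\Y$, lies in the column space of $\C$; and a direct computation gives $\EB\hat{\u}_t = \u_t$, so $\EB\Y = \M$.

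To finish, note that $\C\Cmp\M$ is the Frobenius-nearest matrix to $\M$ with columns in the column space of $\C$, so $\|(\I-P_\C)\M\|_F \le \|\M - \Y\|_F$ holds for every realization of $\C_2$. Taking expectations and using $\EB\Y = \M$ gives $\EB\|\M-\Y\|_F^2 = \sum_{t=1}^{\rho}\sigma_t^2\,\EB\|\hat{\u}_t-\u_t\|_2^2$ after expanding with the orthonormality of the $\v_t$ (which kills the cross terms). A routine variance bound for the averaged estimator, together with the sampling choice $p_i = \|\bb_i\|_2^2/\|\B\|_F^2$ (which makes $\|\bb_i\|_2^2/p_i = \|\B\|_F^2$) and $\|\v_t\|_2 = 1$, gives $\EB\|\hat{\u}_t-\u_t\|_2^2 \le \sigma_t^{-2}c_2^{-1}\|\B\|_F^2$. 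Summing over $t$ cancels the $\sigma_t^2$ factors and produces $\EB\|(\I-P_\C)\M\|_F^2 \le \frac{\rho}{c_2}\|\B\|_F^2$, which with the Pythagorean identity is the claim. The one genuinely non-routine point---and where the rank hypotheses $\rk(\R)=\rk(\A\R^\dag\R)=\rho$ are used---is the identity $\u_t = \A\v_t$: it is what lets a sampling distribution tailored to the column residual $\B$ control the approximation of the left singular vectors of the \emph{projected} matrix $\M$, and it holds precisely because those hypotheses force each $\v_t$ into the row space of $\R$. Everything else is the classical Deshpande--Vempala bookkeeping, of which Lemma~\ref{lem:ada_sampling} is recovered by taking $\M = \A_k$ (equivalently $P_\R = \V_{\A,k}\V_{\A,k}^T$, $\rho = k$).
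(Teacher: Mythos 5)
Your proposal is correct and follows essentially the same route as the paper's own proof: the Pythagorean split isolating $\|\A-\A\R^\dag\R\|_F^2$, the key identity $\A\R^\dag\R\v_t=\A\v_t$ for the top right singular vectors of $\A\R^\dag\R$ (the paper's Lemma~\ref{lem:right_singular_vector}), and the Deshpande--Vempala unbiased estimator whose variance is controlled by the sampling distribution built from $\B$. Indeed your competitor $\Y=\sum_t\sigma_t\hat{\u}_t\v_t^T$ coincides with the paper's matrix $\F=(\sum_q\sigma_q^{-1}\w_q\u_q^T)\A\R^\dag\R$, so the two arguments differ only in presentation.
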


\begin{proof}
With a little abuse of symbols, we use bold uppercase letters to denote matrix random variables and bold lowercase to denote vector random variables,
without distinguishing between matrix/vector random variables and constant matrices/vectors.

We denote the $j$-th column of $\V_{\A\R^\dag \R, \rho} \in \RB^{n \times \rho}$ as $\v_{j}$,
and the $(i,j)$-th entry of $\V_{\A\R^\dag \R, \rho}$ as $v_{i j}$.
Define vector random variables $\x_{j,(l)} \in \RB^{m}$  such that for $j = 1,\cdots,n$ and $l = 1,\cdots,c_2$,
\begin{equation}
\x_{j,(l)} = \frac{v_{i j}}{p_i} \bb_i = \frac{v_{i j}}{p_i} \Big( \a_i - \C_1 \C_1^\dag \a_i \Big)
\quad \textrm{with probability } p_i \textrm{,} \quad \textrm{for } i = 1, \cdots, n \textrm{,} \nonumber
\end{equation}
Note that $\x_{j,(l)}$ is a linear function of a column of $\A$ sampled from the above defined distribution.
We have that
\begin{eqnarray}
\EB [\x_{j,(l)}] 				& = & \sum_{i=1}^n p_i \frac{v_{i j}}{p_i} \bb_i \quad = \quad \B \v_{j} \textrm{,} \nonumber\\
\EB \| \x_{j, (l)} \|_2^2 	& = & \sum_{i=1}^n p_i \frac{v^2_{i j}}{p_i^2} \|\bb_i\|_2^2
\quad =\quad   \sum_{i=1}^n \frac{v^2_{i j}}{\|\bb_i\|_2^2 / \|\B\|_F^2 } \|\bb_i\|_2^2 \quad =\quad  \|\B\|_F^2 \textrm{.}\nonumber
\end{eqnarray}
Then we let $\x_j = \frac{1}{c_2} \sum_{l=1}^{c_2} \x_{j,(l)}$, we have
\begin{eqnarray}
\EB [\x_{j}]  & = & \EB [\x_{j,(l)}] \quad = \quad \B \v_{j} \textrm{,} \nonumber \\
\EB \| \x_{j} - \B \v_{j} \|_2^2 & = & \EB \Big\| \x_{j} - \EB[\x_{j}] \Big\|_2^2
= \frac{1}{c_2} \EB \Big\| \x_{j, (l)}  - \EB [\x_{j, (l)} ]  \Big\|_2^2
= \frac{1}{c_2} \EB \| \x_{j, (l)}  - \B \v_{j}  \|_2^2  \textrm{.}\nonumber
\end{eqnarray}
According to the construction of $\x_1, \cdots, \x_\rho$,
we define the $c_2$ columns of $\A$ to be $\C_2 \in \RB^{m\times c_2}$.
Note that all the random variables $ \x_1 \cdots, \x_\rho$ lie in the subspace $\span(\C_1)+ \span(\C_2)$.
We define random variables
\begin{equation}
\w_j = \C_1 \C_1^\dag \A \R^\dag \R \v_j + \x_j = \C_1 \C_1^\dag \A \v_j + \x_j \textrm{,}
\qquad \textrm{for } j = 1,\cdots,\rho \textrm{,} \nonumber
\end{equation}
where the second equality follows from Lemma~\ref{lem:right_singular_vector} that $\A \R^\dag \R \v_j = \A \v_j$
if $\v_j$ is one of the top $\rho$ right singular vectors of $\A \R^\dag \R$.
Then we have that any set of random variables $\{\w_1,\cdots,\w_\rho \}$ lies in $\span(\C) = \span(\C_1) + \span(\C_2)$.
Let $\W = [\w_1, \cdots , \w_\rho] $ be a matrix random variable,
we have that $\span(\W) \subset \span(\C)$.
The expectation of $\w_j$ is
\begin{equation}
\EB [\w_j] = \C_1 \C_1^\dag \A \v_j + \EB [\x_j] = \C_1 \C_1^\dag \A \v_j + \B \v_j = \A \v_j \textrm{,} \nonumber
\end{equation}
therefore we have that
\begin{equation}
\w_j - \A \v_j = \x_j - \B \v_j \textrm{.}\nonumber
\end{equation}
The expectation of $\|\w_j - \A \v_j\|_2^2$ is
\begin{eqnarray}
\EB \|\w_j - \A \v_j \|_2^2 	& = & \EB \| \x_j - \B \v_j \|_2^2
												\quad = \quad \frac{1}{c_2} \EB \| \x_{j,(l)} - \B \v_j \|_2^2 \nonumber \\
												& = & \frac{1}{c_2} \EB \| \x_{j,(l)} \|_2^2 - \frac{2}{c_2} (\B \v_j )^T \EB [\x_{j,(l)}] + \frac{1}{c_2} \|\B \v_j \|_2^2 \nonumber \\
												& = & \frac{1}{c_2} \EB \| \x_{j,(l)} \|_2^2 - \frac{1}{c_2}  \| \B \v_j \|_2^2 \nonumber
												\quad = \quad \frac{1}{c_2} \|\B\|_F^2 - \frac{1}{c_2}  \| \B \v_j \|_2^2 \nonumber \\
												&\leq & \frac{1}{c_2} \|\B\|_F^2 \label{eq:adaptive_bound:1}
\end{eqnarray}

To complete the proof,
we let the matrix variable
\[
\F = (\sum_{q=1}^\rho \sigma_q^{-1} \w_q \u_q^T ) \A \R^\dag \R \textrm{,}
\]
where $\sigma_q$ is the $q$-th largest singular value of $\A \R^\dag \R$
and $\u_q$ is the corresponding left singular vector of $\A \R^\dag \R$.
The column space of $\F$ is contained in $\span (\W) \subset \span(\C)$,
and thus
\[
\|\A\R^\dag \R - \C \C^\dag \A \R^\dag \R\|_F^2 \leq \|\A\R^\dag \R - \W \W^\dag \A \R^\dag \R\|_F^2 \leq \| \A \R^\dag \R - \F \|_F^2 \textrm{.}
\]
We use $\F$ to bound the error $\|\A \R^\dag \R - \C \C^\dag \A \R^\dag \R \|_F^2$:
\begin{eqnarray}
\EB \| \A - \C \C^\dag \A \R^\dag \R \|_F^2 	& = & \EB \| \A - \A \R^\dag \R + \A \R^\dag \R - \C \C^\dag \A \R^\dag \R \|_F^2 \nonumber \\
																& = & \EB \Big[\| \A - \A \R^\dag \R \|_F^2 + \| \A \R^\dag \R - \C \C^\dag \A \R^\dag \R \|_F^2 \Big] \label{eq:adaptive_bound:2} \\
																&\leq & \| \A - \A \R^\dag \R \|_F^2 + \EB \| \A \R^\dag \R - \F \|_F^2 \textrm{,}	 \nonumber
\end{eqnarray}
where (\ref{eq:adaptive_bound:2}) follows from that $\A(\I - \R^\dag \R)$
is orthogonal to $(\I - \C \C^\dag) \A \R^\dag \R$.
Since $\A \R^\dag \R$ and $\F$ both lies on the space spanned by the right singular vectors of ${\A\R^\dag \R}$, i.e. $\{\v_j \}_{j=1}^\rho$,
we decompose $\A \R^\dag \R - \F$ along $\{\v_j \}_{j=1}^\rho$:
\begin{eqnarray}
\EB \| \A - \C \C^\dag \A \R^\dag \R \|_F^2	&\leq & \| \A - \A \R^\dag \R \|_F^2 + \EB \| \A \R^\dag \R - \F \|_F^2 \textrm{,}	\nonumber \\
																& = & \| \A - \A \R^\dag \R \|_F^2 +  \sum_{j = 1}^\rho \EB \Big\| (\A \R^\dag \R - \F) \v_{j} \Big\|_2^2  \nonumber \\
																& = & \| \A - \A \R^\dag \R \|_F^2 +  \sum_{j = 1}^\rho \EB \Big\| \A \R^\dag \R \v_{j} - (\sum_{q=1}^\rho \sigma_q^{-1} \w_q \u_q^T ) \sigma_j \u_j  \Big\|_2^2 \nonumber \\
																& = & \| \A - \A \R^\dag \R \|_F^2 +  \sum_{j = 1}^\rho \EB \Big\| \A \R^\dag \R \v_{j} - \w_j  \Big\|_2^2 \nonumber \\
																& = & \| \A - \A \R^\dag \R \|_F^2 +  \sum_{j = 1}^\rho \EB \| \A \v_j - \w_j  \|_2^2 \label{eq:adaptive_bound:4}  \\
																&\leq & \| \A - \A \R^\dag \R \|_F^2 +  \frac{\rho}{c_2} \|\B\|_F^2 \textrm{,} \label{eq:adaptive_bound:5}
\end{eqnarray}
where (\ref{eq:adaptive_bound:4}) follows from Lemma~\ref{lem:right_singular_vector}
and (\ref{eq:adaptive_bound:5}) follows from (\ref{eq:adaptive_bound:1}).
\end{proof}

\begin{lemma} \label{lem:right_singular_vector}
We are given a matrix $\A \in \RBmn$ and a matrix $\R \in \RB^{r\times n}$ such that
$\rk(\A \R^\dag \R) = \rk(\R) = \rho$ $(\rho \leq r \leq m)$.
Letting $\v_j \in \RB^n$ be the $j$-th top right singular vector of $\A \R^\dag \R$,
we have that
\begin{equation}
\A \R^\dag \R \v_j = \A \v_j \textrm{,} \qquad \textrm{for } j = 1, \cdots, \rho  \textrm{.} \nonumber
\end{equation}
\end{lemma}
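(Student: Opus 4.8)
The plan is to exploit the fact that $\R^\dag \R$ is the orthogonal projection onto the row space of $\R$. Writing the (thin) SVD $\R = \U_\R \Si_\R \V_\R^T$, we have $\R^\dag \R = \V_\R \V_\R^T$, exactly as recorded in the notation section for the projection onto the row space of a matrix. This matrix is symmetric and idempotent, and its range is precisely $\span(\R^T)$, the row space of $\R$. Consequently $\R^\dag \R \v = \v$ holds for every vector $\v$ lying in the row space of $\R$, and this fixed-point identity is what I ultimately want to apply to each $\v_j$.

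First I would argue that each top right singular vector $\v_j$ ($j = 1, \ldots, \rho$) lies in the row space of $\R$. Since $\rk(\A \R^\dag \R) = \rho$ by hypothesis, the matrix $\A \R^\dag \R$ has exactly $\rho$ nonzero singular values, so the top $\rho$ right singular vectors $\v_1, \ldots, \v_\rho$ all correspond to nonzero singular values and therefore span the row space of $\A \R^\dag \R$. Now for any product $\M \N$ the row space of $\M \N$ is contained in the row space of $\N$; taking $\M = \A$ and $\N = \R^\dag \R$ shows that the row space of $\A \R^\dag \R$ is contained in the row space of $\R^\dag \R$. But the row space of the symmetric projection $\R^\dag \R = \V_\R \V_\R^T$ coincides with its range, namely the row space of $\R$. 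Chaining these inclusions, every $\v_j$ lies in the row space of $\R$.

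Then I would finish in a single line: because $\v_j$ lies in the row space of $\R$ and $\R^\dag \R$ fixes that subspace, we have $\R^\dag \R \v_j = \v_j$, whence $\A \R^\dag \R \v_j = \A (\R^\dag \R \v_j) = \A \v_j$, which is the claim. The only genuinely delicate point—and the step I would be most careful about—is justifying that the $\v_j$ correspond to \emph{nonzero} singular values, so that they truly lie in the row space rather than in the null space of $\A \R^\dag \R$. This is exactly where the rank hypothesis $\rk(\A \R^\dag \R) = \rho$ enters: it guarantees that the ``top $\rho$'' right singular vectors exhaust the nonzero part of the spectrum, and hence span the row space. Everything else is a routine application of the projection identity $\R^\dag \R = \V_\R \V_\R^T$.
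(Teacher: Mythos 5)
Your proof is correct and follows essentially the same route as the paper's: both arguments reduce to showing that each $\v_j$ lies in the row space of $\R$ and is therefore fixed by the orthogonal projector $\R^\dag \R$. The only cosmetic difference is that the paper establishes this containment by explicitly building the compact SVD of $\A \R^\dag \R$ from that of $\A \V_{\R,\rho}$, whereas you use the general fact that the row space of a product is contained in the row space of its right factor together with the rank hypothesis $\rk(\A\R^\dag\R)=\rho$ — a slightly cleaner way to reach the same key fact.
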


\begin{proof}
First let $\V_{\R,\rho} \in \RB^{n \times \rho}$ contain the top $\rho$ right singular vectors of $\R$,
then the projection of $\A$ onto the row space of $\R$ is $\A \R^\dag \R = \A \V_{\R,\rho} \V_{\R,\rho}^T$.
Let the thin SVD of $\A \V_{\R,\rho} \in \RB^{m\times \rho}$ be $\tilde{\U} \tilde{\Si} \tilde{\V}^T$,
where $\tilde{\V} \in \RB^{\rho \times \rho}$.
Then the compact SVD of $\A \R^\dag \R$ is
\[
\A \R^\dag \R = \A \V_{\R,\rho} \V_{\R,\rho}^T
= \tilde{\U} \tilde{\Si} \tilde{\V}^T \V_{\R,\rho}^T \textrm{.}
\]
According to the definition,
$\v_j$ is the $j$-th column of $(\V_{\R,\rho} \tilde{\V}) \in \RB^{n\times \rho}$,
and thus $\v_j$ lies on the column space of $\V_{\R,\rho}$,
and $\v_j$ is orthogonal to $\V_{\R,\rho\perp}$.
Finally, since $\A - \A \R^\dag \R = \A \V_{\R,\rho\perp} \V_{\R,\rho\perp}^T$,
we have that $\v_j$ is orthogonal to $\A - \A \R^\dag \R$,
that is, $(\A - \A \R^\dag \R) \v_j = \0$,
which directly proves the lemma.
\end{proof}

\subsection{The Proof of Theorem~\ref{thm:modified_fast_row_selection}}

\cite{boutsidis2011NOC} proposed a randomized algorithm which achieves the expected relative-error bound in Lemma~\ref{lem:fast_fro_norm_construction}.
This algorithm is described in Line~\ref{alg:fast_cur:line3} to \ref{alg:fast_cur:line6} of Algorithm~\ref{alg:fast_cur}.
Lemma~\ref{lem:fast_fro_norm_construction} is a direct corollary of Lemma~\ref{lem:rand_svd} and Lemma~\ref{prop:deterministic_fro}.
If we apply the same algorithm to $\A^T$ to select $c$ rows of $\A$ to form $\R_1$,
that is, Line~\ref{alg:fast_cur:line11} to \ref{alg:fast_cur:line13} of Algorithm~\ref{alg:fast_cur},
then a very similar bound is guaranteed.

\begin{lemma}[\cite{boutsidis2011NOC}, Theorem~4] \label{lem:fast_fro_norm_construction}
Given a matrix $\A \in \RB^{m\times n}$ of rank $\rho$, a target rank $2 \leq k < \rho$, and $0 < \epsilon_0 < 1$,
there is a randomized algorithm to select $c_1 > k$ columns of $\A$ and form a matrix $\C_1 \in \RB^{m\times c_1}$ such that
\begin{equation}
\EB \| \A - \C_1 \C_1^\dag \A \|_F^2 \leq (1+\epsilon_0) \Big( 1 + \frac{1}{( 1 - \sqrt{k/c_1} )^2} \Big) \|\A - \A_k\|_F^2 \textrm{,} \nonumber
\end{equation}
where the expectation is taken w.r.t. $\C_1$.
The matrix $\C_1$ can be computed in $\OM(mnk\epsilon_0^{-1} + n c_1 k^2)$ time.
\end{lemma}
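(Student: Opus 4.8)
The plan is to chain the randomized SVD of Lemma~\ref{lem:rand_svd} together with the deterministic dual set sparsification of Lemma~\ref{prop:deterministic_fro}, exactly in the pattern of Lines~\ref{alg:fast_cur:line3}--\ref{alg:fast_cur:line6} of Algorithm~\ref{alg:fast_cur}, and to take a single expectation at the very end. First I would invoke Lemma~\ref{lem:rand_svd} to compute, in $\OM(mnk\epsilon_0^{-1})$ time, a factorization $\A = \B \Z^T + \E$ with $\B = \A\Z$, $\Z^T\Z = \I_k$, $\E\Z = \0$, and $\EB\|\E\|_F^2 \leq (1+\epsilon_0)\|\A - \A_k\|_F^2$. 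The columns of $\Z^T$ then play the role of the approximate top-$k$ right singular vectors supplied to the dual set sparsification algorithm: they form a decomposition of the identity, since $\sum_i \v_i\v_i^T = \Z^T\Z = \I_k$, while the columns of $\E = \A - \B\Z^T = \A - \A\Z\Z^T$ form the accompanying spectral set. This is precisely the input $(\UM_1, \VM_1)$ built in Line~\ref{alg:fast_cur:line4} of Algorithm~\ref{alg:fast_cur}.

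The crux is to show that, conditioned on $\Z$, the selected matrix $\C_1$ obeys the \emph{deterministic} bound
\[
\|\A - \C_1\C_1^\dag\A\|_F^2 \leq \Big(1 + \frac{1}{(1-\sqrt{k/c_1})^2}\Big)\|\E\|_F^2,
\]
that is, the bound of Lemma~\ref{prop:deterministic_fro} but with the approximate residual $\|\E\|_F$ in place of $\|\A - \A_k\|_F$. To obtain this I would re-inspect the derivation behind Lemma~\ref{prop:deterministic_fro}: its two ingredients are the guarantees of Lemma~\ref{lem:sparsification}, namely $\lambda_k\big(\sum_i s_i\v_i\v_i^T\big) \geq (1-\sqrt{k/c_1})^2$ and $\tr\big(\sum_i s_i\x_i\x_i^T\big) \leq \|\E\|_F^2$. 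Neither guarantee uses the optimality of the exact singular subspace; they only require that $\Z$ have orthonormal columns and that $\E$ be the residual of the projection onto $\Z$. Combining them bounds the rank-restricted reconstruction error $\|\A - \Pi_{\C_1,k}(\A)\|_F$, and then $\|\A - \C_1\C_1^\dag\A\|_F \leq \|\A - \Pi_{\C_1,k}(\A)\|_F$ (the projection inequality recorded in Section~\ref{sec:notation}) yields the displayed deterministic bound. This replacement of $\V_{\A,k}$ by $\Z$ is exactly the main obstacle: because Lemma~\ref{prop:deterministic_fro} is stated only for the true top-$k$ right singular vectors and returns $\|\A-\A_k\|_F$ on the right, the two lemmas cannot simply be composed as black boxes, and the real work is verifying that the sparsification analysis is agnostic to whether $\Z$ is the genuine top-$k$ right singular subspace.

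With the deterministic bound in hand, the rest is immediate. Taking expectation over the randomness of the projection (hence of $\Z$ and $\E$) and using linearity together with Lemma~\ref{lem:rand_svd},
\[
\EB\|\A - \C_1\C_1^\dag\A\|_F^2 \leq \Big(1 + \frac{1}{(1-\sqrt{k/c_1})^2}\Big)\EB\|\E\|_F^2 \leq (1+\epsilon_0)\Big(1 + \frac{1}{(1-\sqrt{k/c_1})^2}\Big)\|\A - \A_k\|_F^2,
\]
which is the claimed bound. For the running time, the randomized SVD costs $\OM(mnk\epsilon_0^{-1})$, while forming $\E$ and running the dual set sparsification costs $\OM(mn + nc_1k^2)$ by Lemma~\ref{lem:sparsification} (here the ambient dimension of the spectral vectors is $m$), with the exact-SVD term $T_{\V_{\A,k}}$ appearing in Lemma~\ref{prop:deterministic_fro} now supplanted by the randomized-SVD cost. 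Since $\OM(mn)$ is absorbed into $\OM(mnk\epsilon_0^{-1})$, the total is $\OM(mnk\epsilon_0^{-1} + nc_1k^2)$, as stated.
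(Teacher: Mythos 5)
Your proposal is correct and follows exactly the route the paper takes: the paper states this result as an imported one (Theorem~4 of \cite{boutsidis2011NOC}) and remarks only that it is ``a direct corollary of Lemma~\ref{lem:rand_svd} and Lemma~\ref{prop:deterministic_fro},'' which is precisely your chaining of the randomized SVD with the dual set sparsification followed by a final expectation. Your additional observation---that the sparsification bound must be verified with the approximate subspace $\Z$ and residual $\E$ in place of $\V_{\A,k}$ and $\A-\A_k$, so the two lemmas do not compose as pure black boxes---is accurate and supplies the one detail the paper leaves implicit.
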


With Theorem~\ref{thm:adaptive_bound} and Lemma~\ref{lem:fast_fro_norm_construction},
we now prove Theorem~\ref{thm:modified_fast_row_selection} as follows.

\begin{proof}
This randomized algorithm has three steps:
approximate SVD via randomized projection~\citep{halko2011ramdom},
deterministic column selection via dual set sparsification algorithm~\citep{boutsidis2011NOC} shown in Lemma~\ref{prop:deterministic_fro},
and the adaptive sampling algorithm of Theorem~\ref{thm:adaptive_bound} proved in this paper.
This algorithm is a generalization of the near-optimal column selection algorithm of Lemma~\ref{prop:fast_column_selection}.

Given $\A \in \RBmn$ and a target rank $k < r_1$,
step~1 (Line~\ref{alg:fast_cur:line3} of Algorithm~\ref{alg:fast_cur}) compute an approximate truncated SVD of $\A$ in $\OM(mnk/\epsilon_0)$ time such that $\A_k \approx \tilde{\A}_k = \tilde{\U}_k \tilde{\Si}_k \tilde{\V}_k^T$.
Lemma~\ref{lem:rand_svd} shows that
\begin{equation}
\EB \|\A - \tilde{\U}_k \tilde{\Si}_k \tilde{\V}_k^T \|_F^2 \leq (1+\epsilon_0) \|\A - \A_k\|_F^2 \textrm{.} \nonumber
\end{equation}

Step~2 (Line~\ref{alg:fast_cur:line11} to \ref{alg:fast_cur:line13} of Algorithm~\ref{alg:fast_cur}) selects $r_1$ rows of $\A$ to construct $\R_1$ by the dual set sparsification algorithm taking $\UM$ and $\VM$ as input,
where $\UM$ contains all the $m$ columns of $(\A^T - \tilde{\A}_k^T) \in \RB^{n\times m}$,
$\VM$ contains all the $m$ columns of $\tilde{\U}_{\A,k}^T \in \RB^{k\times m}$.
Lemma~\ref{lem:fast_fro_norm_construction} shows that
\begin{equation}
\EB \| \A - \A \R_1^\dag \R_1 \|_F^2 \leq (1+\epsilon_0) \Big( 1 + \frac{1}{( 1 - \sqrt{k/r_1} )^2} \Big) \|\A - \A_k\|_F^2 \textrm{,} \nonumber
\end{equation}
where the expectation is taken w.r.t. $\R_1$.
Step~2 costs $\OM(mr_1 k^2 + mn)$ time.

Step~3 (Line~\ref{alg:fast_cur:line14} to \ref{alg:fast_cur:line16} of Algorithm~\ref{alg:fast_cur}) samples additional $r_2$ rows of $\A$ to construct $\R_2 \in \RB^{r_2 \times n}$ by the adaptive sampling algorithm of Theorem~\ref{thm:adaptive_bound}.
Let $\R =  [\R_1^T , \R_2^T]^T \in \RB^{(r_1 + r_2)\times n}$.
We apply Theorem~\ref{thm:adaptive_bound} and have that
\begin{eqnarray}
\EB_\R \|\A - \C \Cmp \A \R^\dag \R \|_F^2& = & \EB_{\R_1} \Big[ \EB_{\R_2} \Big[ \|\A - \C \Cmp \A \R^\dag \R \|_F^2 \Big| \R_1 \Big]  \Big]\nonumber \\
													& \leq & \EB_{\R_1} \Big[ \|\A - \C \Cmp \A \|_F^2 + \frac{\rho}{r_2} \|\A - \A \R_1^\dag \R_1 \|_F^2 \Big] \nonumber \\
													& \leq & \|\A - \C \Cmp \A \|_F^2 + \frac{\rho}{r_2} (1+\epsilon_0) \Big( 1 + \frac{1}{( 1 - \sqrt{k/r_1} )^2} \Big) \|\A - \A_k\|_F^2 \textrm{.} \nonumber
\end{eqnarray}
By setting $r_1 = \OM(k \epsilon^{-2/3})$, $r_2 \approx \frac{2 \rho}{\epsilon}$, and $\epsilon_0 = \epsilon^{2/3}$,
we conclude that
\begin{eqnarray}
\EB \|\A - \C \Cmp \A \R^\dag \R \|_F^2 	\leq \|\A - \C \Cmp \A \|_F^2 + \epsilon \|\A - \A_k\|_F^2 \textrm{.} \nonumber
\end{eqnarray}
The total computation time of the three steps is $\OM(mnk/\epsilon_0 + mr_1 k^2 + mn) = \OM( (mnk + mk^3) \epsilon^{-2/3} )$
\end{proof}

\subsection{The Proof of Theorem~\ref{cor:fast_cur}}

\begin{proof}
Since $\C$ is constructed by columns of $\A$,
the column space of $\C$ is contained in the column space of $\A$,
so $\rk(\C \C^\dag \A) = \rk(\C) = \rho \leq c$,
and thus the assumptions of Theorem~\ref{thm:modified_fast_row_selection} are satisfied.
Lemma~\ref{prop:fast_column_selection} and Theorem~\ref{thm:modified_fast_row_selection} together prove Theorem~\ref{cor:fast_cur}:
\begin{eqnarray}
\EB^2 \|\A - \C \U \R\|_F & \leq & \EB \|\A - \C \U \R\|_F^2
								\quad = \quad \EB_{\C,\R} \| \A - \C \Cmp \A \R^\dag \R \|_F^2 \nonumber \\
								& = & \EB_{\C} \Big[ \EB_{\R} \Big[ \| \A - \C \Cmp \A \R^\dag \R \|_F^2  \Big| \C \Big] \Big]  \nonumber \\
								&\leq & \EB_{\C} \Big[ \|\A - \C \Cmp \A\|_F^2 + \epsilon \|\A - \A_k\|_F^2 \Big]  \nonumber \\
								&\leq & (1+2\epsilon) \| \A - \A_k \|_k^2 \textrm{.} \nonumber
\end{eqnarray}
Finally we have $\EB \|\A - \C \U \R\|_F \leq (1+\epsilon) \| \A - \A_k \|_k$
because $1+2\epsilon \leq (1+\epsilon)^2$.

The time cost of the fast \CUR algorithm is the sum of Stage~1, Stage~2, and the Moore-Penrose inverse of $\C$ and $\R$,
i.e. $\OM( (mnk + nk^3) \epsilon^{-2/3} ) + \OM( (mnk + mk^3) \epsilon^{-2/3} ) + \OM(mc^2) + \OM(nr^2)
= \OM (mnk \epsilon^{-2/3} + (m+n)k^3 \epsilon^{-2/3} + mk^2 \epsilon^{-2} + nk^2\epsilon^{-4})$.
\end{proof}

\bibliography{cur}

\begin{thebibliography}{27}
\providecommand{\natexlab}[1]{#1}
\providecommand{\url}[1]{\texttt{#1}}
\expandafter\ifx\csname urlstyle\endcsname\relax
  \providecommand{\doi}[1]{doi: #1}\else
  \providecommand{\doi}{doi: \begingroup \urlstyle{rm}\Url}\fi

\bibitem[Agarwala(2007)]{agarwala2007efficient}
Aseem Agarwala.
\newblock Efficient gradient-domain compositing using quadtrees.
\newblock In \emph{SIGGRAPH 2007}, 2007.

\bibitem[Ben-Israel and Greville(2003)]{adi2003inverse}
Adi Ben-Israel and Thomas~N.E. Greville.
\newblock \emph{Generalized Inverses: Theory and Applications. Second Edition}.
\newblock Springer, 2003.

\bibitem[Boutsidis et~al.(2011{\natexlab{a}})Boutsidis, Drineas, and
  Magdon-Ismail]{boutsidis2011NOC}
Christos Boutsidis, Petros Drineas, and Malik Magdon-Ismail.
\newblock Near-optimal column-based matrix reconstruction.
\newblock \emph{CoRR}, abs/1103.0995, 2011{\natexlab{a}}.

\bibitem[Boutsidis et~al.(2011{\natexlab{b}})Boutsidis, Drineas, and
  Magdon-Ismail]{boutsidis2011NOCshort}
Christos Boutsidis, Petros Drineas, and Malik Magdon-Ismail.
\newblock Near optimal column-based matrix reconstruction.
\newblock In \emph{Proceedings of the 2011 IEEE 52nd Annual Symposium on
  Foundations of Computer Science}, FOCS '11, pages 305--314,
  2011{\natexlab{b}}.

\bibitem[Deerwester et~al.(1990)Deerwester, Dumais, Furnas, Landauer, and
  Harshman]{deerwester1990lsa}
Scott Deerwester, Susan~T. Dumais, George~W. Furnas, Thomas~K. Landauer, and
  Richard Harshman.
\newblock Indexing by latent semantic analysis.
\newblock \emph{Journal of The American Society for Information Science},
  41\penalty0 (6):\penalty0 391--407, 1990.

\bibitem[Deshpande and Rademacher(2010)]{deshpande2010efficient}
Amit Deshpande and Luis Rademacher.
\newblock Efficient volume sampling for row/column subset selection.
\newblock In \emph{Proceedings of the 2010 IEEE 51st Annual Symposium on
  Foundations of Computer Science}, FOCS '10, pages 329--338, 2010.

\bibitem[Deshpande et~al.(2006)Deshpande, Rademacher, Vempala, and
  Wang]{deshpande2006matrix}
Amit Deshpande, Luis Rademacher, Santosh Vempala, and Grant Wang.
\newblock Matrix approximation and projective clustering via volume sampling.
\newblock \emph{Theory of Computing}, 2\penalty0 (2006):\penalty0 225--247,
  2006.

\bibitem[Drineas(2003)]{drineas2003pass}
Petros Drineas.
\newblock Pass-efficient algorithms for approximating large matrices.
\newblock In \emph{In Proceeding of the 14th Annual ACM-SIAM Symposium on
  Dicrete Algorithms}, pages 223--232, 2003.

\bibitem[Drineas and Mahoney(2005)]{drineas2005nystrom}
Petros Drineas and Michael~W. Mahoney.
\newblock On the {N}ystr\"{o}m method for approximating a gram matrix for
  improved kernel-based learning.
\newblock \emph{Journal of Machine Learning Research}, 6:\penalty0 2153--2175,
  2005.

\bibitem[Drineas et~al.(2006)Drineas, Kannan, and Mahoney]{drineas04fastmonte}
Petros Drineas, Ravi Kannan, and Michael~W. Mahoney.
\newblock Fast monte carlo algorithms for matrices iii: Computing a compressed
  approximate matrix decomposition.
\newblock \emph{SIAM Journal on Computing}, 36\penalty0 (1):\penalty0 184--206,
  2006.

\bibitem[Drineas et~al.(2008)Drineas, Mahoney, and
  Muthukrishnan]{drineas2008cur}
Petros Drineas, Michael~W. Mahoney, and S.~Muthukrishnan.
\newblock Relative-error {CUR} matrix decompositions.
\newblock \emph{SIAM Journal on Matrix Analysis and Applications}, 30\penalty0
  (2):\penalty0 844--881, September 2008.

\bibitem[Frank and Asuncion(2010)]{uci2010}
A.~Frank and A.~Asuncion.
\newblock {UCI} machine learning repository, 2010.
\newblock URL \url{http://archive.ics.uci.edu/ml}.

\bibitem[Frieze et~al.(2004)Frieze, Kannan, and Vempala]{frieze2004fast}
Alan Frieze, Ravi Kannan, and Santosh Vempala.
\newblock Fast monte-carlo algorithms for finding low-rank approximations.
\newblock \emph{Journal of the ACM}, 51\penalty0 (6):\penalty0 1025--1041,
  November 2004.
\newblock ISSN 0004-5411.

\bibitem[Goreinov et~al.(1997{\natexlab{a}})Goreinov, Tyrtyshnikov, and
  Zamarashkin]{goreinov1997pseudoskeleton}
S.~A. Goreinov, E.~E. Tyrtyshnikov, and N.~L. Zamarashkin.
\newblock A theory of pseudoskeleton approximations.
\newblock \emph{Linear Algebra and Its Applications}, 261:\penalty0 1--21,
  1997{\natexlab{a}}.

\bibitem[Goreinov et~al.(1997{\natexlab{b}})Goreinov, Zamarashkin, and
  Tyrtyshnikov]{goreinov1997maximalvolume}
S.~A. Goreinov, N.~L. Zamarashkin, and E.~E. Tyrtyshnikov.
\newblock Pseudo-skeleton approximations by matrices of maximal volume.
\newblock \emph{Mathematical Notes}, 62\penalty0 (4):\penalty0 619--623,
  1997{\natexlab{b}}.

\bibitem[Guruswami and Sinop(2012)]{Guruswami2012optimal}
Venkatesan Guruswami and Ali~Kemal Sinop.
\newblock Optimal column-based low-rank matrix reconstruction.
\newblock In \emph{Proceedings of the Twenty-Third Annual ACM-SIAM Symposium on
  Discrete Algorithms}, SODA '12, pages 1207--1214. SIAM, 2012.

\bibitem[Halko et~al.(2011)Halko, Martinsson, and Tropp]{halko2011ramdom}
Nathan Halko, Per-Gunnar Martinsson, and Joel~A. Tropp.
\newblock Finding structure with randomness: Probabilistic algorithms for
  constructing approximate matrix decompositions.
\newblock \emph{SIAM Review}, 53\penalty0 (2):\penalty0 217--288, 2011.

\bibitem[Hopcroft and Kannan(2012)]{hopcroft2012computer}
John Hopcroft and Ravi Kannan.
\newblock \emph{Computer Science Theory for the Information Age}.
\newblock 2012.

\bibitem[Kuruvilla et~al.(2002)Kuruvilla, Park, and
  Schreiber]{kuruvilla2002vector}
Finny~G. Kuruvilla, Peter~J. Park, and Stuart~L. Schreiber.
\newblock Vector algebra in the analysis of genome-wide expression data.
\newblock \emph{Genome Biology}, 3:\penalty0 research0011--research0011.1,
  2002.

\bibitem[Lowe(1999)]{lowe1999sift}
David~G. Lowe.
\newblock Object recognition from local scale-invariant features.
\newblock In \emph{Proceedings of the International Conference on Computer
  Vision}, ICCV 99, 1999.

\bibitem[Mackey et~al.(2011)Mackey, Talwalkar, and Jordan]{mackey2011divide}
Lester Mackey, Ameet Talwalkar, and Michael~I. Jordan.
\newblock Divide-and-conquer matrix factorization.
\newblock In \emph{Advances in Neural Information Processing Systems 24}. 2011.

\bibitem[Mahoney and Drineas(2009)]{mahoney2009matrix}
Michael~W. Mahoney and Petros Drineas.
\newblock {CUR} matrix decompositions for improved data analysis.
\newblock \emph{Proceedings of the National Academy of Sciences}, 106\penalty0
  (3):\penalty0 697--702, 2009.
\newblock URL \url{http://www.pnas.org/content/106/3/697.abstract}.

\bibitem[Serre et~al.(2007)Serre, Wolf, Bileschi, Riesenhuber, and
  Poggio]{serre07robustobject}
Thomas Serre, Lior Wolf, Stanley Bileschi, Maximilian Riesenhuber, and Tomaso
  Poggio.
\newblock Robust object recognition with cortex-like mechanisms.
\newblock \emph{IEEE Transactions on Pattern Analysis and Machine
  Intelligence}, 29:\penalty0 411--426, 2007.

\bibitem[Sirovich and Kirby(1987)]{Sirovich87eigenface}
L.~Sirovich and M.~Kirby.
\newblock Low-dimensional procedure for the characterization of human faces.
\newblock \emph{Journal of the Optical Society of America A}, 4\penalty0
  (3):\penalty0 519--524, Mar 1987.

\bibitem[Turk and Pentland(1991)]{turk1991eigenface}
Matthew Turk and Alex Pentland.
\newblock Eigenfaces for recognition.
\newblock \emph{Journal of Cognitive Neuroscience}, 3\penalty0 (1):\penalty0
  71--86, 1991.

\bibitem[Tyrtyshnikov(2000)]{tyrtyshnikov2000incompletecross}
Eugene~E. Tyrtyshnikov.
\newblock Incomplete cross approximation in the mosaic-skeleton method.
\newblock \emph{Computing}, 64:\penalty0 367--380, 2000.

\bibitem[Wang et~al.(2012)Wang, Li, Chang, and Yang]{wang2012data}
Zhiyu Wang, Fangtao Li, Edward~Y. Chang, and Shiqiang Yang.
\newblock A data-driven study on image feature extraction and fusion.
\newblock \emph{Manuscript}, 2012.

\end{thebibliography}

\end{document}